\documentclass{article}

\usepackage{etoolbox}
\usepackage{comment}
\newtoggle{colt}
\togglefalse{colt}
\newcommand{\colt}[1]{\iftoggle{colt}{#1}{}}
\newcommand{\arxiv}[1]{\iftoggle{colt}{}{#1}}

\newcommand{\sumt}[1][T]{\sum_{t=1}^{#1}}
\newcommand{\yhat}{\wh{y}}

\newcommand{\cov}{V} %

\newcommand{\xt}[1][t]{x\ind{#1}}
\newcommand{\zt}[1][t]{z\ind{#1}}
\newcommand{\yt}[1][t]{y\ind{#1}}
\newcommand{\yht}[1][t]{\yhat\ind{#1}}
\newcommand{\htht}[1][t]{\hth\ind{#1}}
\newcommand{\covt}[1][t]{\cov\ind{#1}}
\newcommand{\lcovt}[1][t]{\wt{\cov}\ind{#1}}
\newcommand{\St}[1][t]{S\ind{#1}}
\newcommand{\inv}{^\dagger}

\newcommand{\Ccov}{C_{\mathsf{cov}}}

\newcommand{\Env}{\mathsf{Env}}

\arxiv{
\usepackage[letterpaper, left=1in, right=1in, top=1in,bottom=1in]{geometry}
  \usepackage{parskip}
  \usepackage[colorlinks=true, linkcolor=blue!70!black, citecolor=blue!70!black,urlcolor=black,breaklinks=true]{hyperref}
  \usepackage[dvipsnames]{xcolor}

\usepackage[utf8]{inputenc} %
\usepackage[T1]{fontenc}    %
\usepackage{lmodern}

\usepackage{xargs}

\usepackage{amsmath}
\usepackage{amsxtra,amsfonts,amscd,amssymb,bm,bbm,mathrsfs}

\usepackage{amsthm}
\usepackage{mathtools}

\usepackage{enumitem}
\usepackage{multirow}
\usepackage{multicol}
\usepackage{booktabs}
\usepackage[toc,page,header]{appendix}
\usepackage{minitoc}
\usepackage{amssymb,amsmath,amsfonts}
\usepackage{amsthm}
\usepackage{mathtools}
\usepackage{graphicx}
\usepackage{epstopdf}
\usepackage{bm}
\usepackage{bbm}
\usepackage{color}
\usepackage{algorithm}
\usepackage[noend]{algorithmic}
\usepackage{hyperref}
\hypersetup{
  colorlinks,
  breaklinks=true,
  citecolor=blue!70!black,
  linkcolor=blue!70!black,
  urlcolor=blue!70!black
}

\usepackage{multirow}
\usepackage{multicol}
\usepackage{latexsym}
\usepackage{amsmath}
\usepackage{amssymb}
\usepackage{amsthm}
\usepackage{epsfig}
\usepackage{xcolor}
\usepackage{graphicx}
\usepackage{bm}
\usepackage{dsfont}
\usepackage{enumitem}
\usepackage{natbib}
\bibliographystyle{plainnat}

\usepackage[nameinlink,capitalize]{cleveref}

\newcommand{\pref}[1]{\cref{#1}}
\newcommand{\pfref}[1]{Proof of \pref{#1}}

\renewcommand{\eqref}[1]{\texorpdfstring{\hyperref[#1]{Eq. (\ref*{#1})}}{Eq. (\ref*{#1})}}
\crefformat{equation}{#2(#1)#3}
\Crefformat{equation}{#2(#1)#3}

\Crefformat{figure}{#2Figure #1#3}
\Crefname{assumption}{Assumption}{Assumptions}
\Crefformat{assumption}{#2Assumption #1#3}

\usepackage{crossreftools}
\pdfstringdefDisableCommands{%
    \let\Cref\crtCref
    \let\cref\crtcref
}

\newtheorem{theorem}{Theorem}
\newtheorem{lemma}[theorem]{Lemma}
\newtheorem{corollary}[theorem]{Corollary}
\newtheorem{assumption}{Assumption}
\newtheorem{proposition}[theorem]{Proposition}

\newtheorem{definition}{Definition}

\theoremstyle{definition}
\newtheorem{remark}[theorem]{Remark}

\usepackage[algo2e,linesnumbered,vlined,ruled]{algorithm2e}
\usepackage{algorithm}
\usepackage[noend]{algorithmic}

\usepackage{url}
\usepackage{breakcites}
\usepackage{cancel}
\usepackage{enumitem}
\usepackage{comment}
\crefformat{equation}{#2(#1)#3}
\Crefformat{equation}{#2(#1)#3}

\usepackage{booktabs}       %
\usepackage{multirow}
\usepackage{multicol}
\usepackage{makecell}
\usepackage{array}
\newcolumntype{H}{>{\setbox0=\hbox\bgroup}c<{\egroup}@{}}
\newcolumntype{Z}{>{\setbox0=\hbox\bgroup}c<{\egroup}@{\hspace*{-\tabcolsep}}}

\usepackage[normalem]{ulem}
\usepackage{exscale}
\usepackage{tikz}
\usepackage{float}
\usepackage{graphicx,graphics,subfig}

\allowdisplaybreaks

\usepackage{chngcntr}
\usepackage{apptools}

\let\oldparagraph=\paragraph
\renewcommand\paragraph[1]{\oldparagraph{#1.}}

\usepackage{crossreftools}
\pdfstringdefDisableCommands{%
    \let\Cref\crtCref
    \let\cref\crtcref
}

\usepackage{setspace}

\newcommand{\jmlrQED}{\qed}
}

\colt{

\usepackage{hyperref}
\usepackage[capitalize]{cleveref}
\usepackage{aliascnt}

\makeatletter
\@ifundefined{lemma}{}{%

}
\makeatother

\newaliascnt{lemma}{theorem}
\newtheorem{lemma}[lemma]{Lemma}
\aliascntresetthe{lemma}

\crefname{lemma}{Lemma}{Lemmas}
\Crefname{lemma}{Lemma}{Lemmas}

\makeatletter
\@ifundefined{proposition}{}{%

}
\makeatother

\newaliascnt{proposition}{theorem}
\newtheorem{proposition}[proposition]{Proposition}
\aliascntresetthe{proposition}

\crefname{proposition}{Proposition}{Propositions}
\Crefname{proposition}{Proposition}{Propositions}

\makeatletter
\@ifundefined{definition}{}{%

}
\makeatother

\newaliascnt{definition}{theorem}
\newtheorem{definition}[definition]{Definition}
\aliascntresetthe{definition}

\crefname{definition}{Definition}{Definitions}
\Crefname{definition}{Definition}{Definitions}

\makeatletter
\@ifundefined{remark}{}{%

}
\makeatother

\newaliascnt{remark}{theorem}
\newtheorem{remark}[remark]{Remark}
\aliascntresetthe{remark}

\crefname{remark}{Remark}{Remarks}
\Crefname{remark}{Remark}{Remarks}

\makeatletter
\@ifundefined{corollary}{}{%

}
\makeatother

\newaliascnt{corollary}{theorem}
\newtheorem{corollary}[corollary]{Corollary}
\aliascntresetthe{corollary}

\crefname{corollary}{Corollary}{Corollarys}
\Crefname{corollary}{Corollary}{Corollarys}

\renewcommand{\cref}{\Cref}
\addtocontents{toc}{\protect\setcounter{tocdepth}{0}}

  \usepackage{times}

\usepackage[utf8]{inputenc} %
\usepackage[T1]{fontenc}    %

\usepackage{parskip}
\usepackage{xargs}

\usepackage{amsmath}
\usepackage{amsxtra,amsfonts,amscd,amssymb,bm,bbm,mathrsfs}
\usepackage{nicefrac}       %
\usepackage{mathtools}
\usepackage{mathabx}

\usepackage{thmtools}

\newtheorem{assumption}[theorem]{Assumption}

\usepackage{url}
\usepackage[capitalize]{cleveref}
\usepackage[titletoc,title]{appendix}
\usepackage{cancel}
\usepackage{enumerate}
\usepackage{enumitem}
\usepackage{comment}
\crefformat{equation}{#2(#1)#3}
\Crefformat{equation}{#2(#1)#3}
\crefformat{assumption}{Assumption #2#1#3}
\crefformat{conjecture}{Conjecture #2#1#3}

\usepackage{algorithm}

\usepackage{booktabs}       %
\usepackage{multirow}
\usepackage{multicol}
\usepackage{makecell}
\usepackage{array}
\newcolumntype{H}{>{\setbox0=\hbox\bgroup}c<{\egroup}@{}}
\newcolumntype{Z}{>{\setbox0=\hbox\bgroup}c<{\egroup}@{\hspace*{-\tabcolsep}}}

\usepackage[normalem]{ulem}
\usepackage{exscale}
\usepackage{tikz}
\usepackage{float}
\usepackage{graphicx,graphics}
\graphicspath{ {./fig/} }
\allowdisplaybreaks

\let\oldparagraph=\paragraph
\renewcommand\paragraph[1]{\oldparagraph{#1.}}

\newcommand{\pref}[1]{\cref{#1}}
\newcommand{\pfref}[1]{Proof of \pref{#1}}

\renewcommand{\eqref}[1]{\texorpdfstring{\hyperref[#1]{Eq. (\ref*{#1})}}{Eq. (\ref*{#1})}}
\crefformat{equation}{#2(#1)#3}
\Crefformat{equation}{#2(#1)#3}

\Crefformat{figure}{#2Figure #1#3}
\Crefname{assumption}{Assumption}{Assumptions}
\Crefformat{assumption}{#2Assumption #1#3}

\usepackage{crossreftools}
\pdfstringdefDisableCommands{%
    \let\Cref\crtCref
    \let\cref\crtcref
}

\usepackage[noend]{algorithmic}

}
\newcommand{\sups}[1]{^{{\scriptscriptstyle#1}}}
\newcommand{\subs}[1]{_{{\scriptscriptstyle#1}}}

\DeclarePairedDelimiter{\abs}{\lvert}{\rvert} %
\DeclarePairedDelimiter{\brk}{[}{]}
\DeclarePairedDelimiter{\crl}{\{}{\}}
\DeclarePairedDelimiter{\prn}{(}{)}
\DeclarePairedDelimiter{\nrm}{\|}{\|}
\DeclarePairedDelimiter{\tri}{\langle}{\rangle}

\DeclarePairedDelimiter{\ceil}{\lceil}{\rceil}
\DeclarePairedDelimiter{\floor}{\lfloor}{\rfloor}

\DeclareMathOperator{\En}{\mathbb{E}}

\DeclareMathOperator*{\argmin}{arg\,min} %

\newcommand{\wt}[1]{\widetilde{#1}}
\newcommand{\wh}[1]{\widehat{#1}}
\newcommand{\wb}[1]{\widebar{#1}}

\newcommand{\alg}{\Alg}

\newcommand{\R}{\mathbb{R}} %

\newcommand{\hth}{\wh{\theta}}
\newcommand{\lsim}{{\;\raise0.3ex\hbox{$<$\kern-0.75em\raise-1.1ex\hbox{$\sim$}}\;}}
\newcommand{\gsim}{{\;\raise0.3ex\hbox{$>$\kern-0.75em\raise-1.1ex\hbox{$\sim$}}\;}}

\newcommand{\eps}{\varepsilon} 
\newcommand{\RNum}[1]{\uppercase\expandafter{\romannumeral #1\relax}}

\newcommand{\Reg}{\mathbf{Reg}}

\newcommand{\cD}{\mathcal{D}}
\newcommand{\cE}{\mathcal{E}}
\newcommand{\cF}{\mathcal{F}}

\newcommand{\cH}{\mathcal{H}}

\newcommand{\cK}{\mathcal{K}}

\newcommand{\cM}{\mathcal{M}}

\newcommand{\cP}{\mathcal{P}}

\newcommand{\cR}{\mathcal{R}}
\newcommand{\cS}{\mathcal{S}}

\newcommand{\cX}{\mathcal{X}}

\DeclareFontFamily{U}{mathx}{\hyphenchar\font45}
\DeclareFontShape{U}{mathx}{m}{n}{<-> mathx10}{}
\DeclareSymbolFont{mathx}{U}{mathx}{m}{n}
\DeclareMathAccent{\widebar}{0}{mathx}{"73}

\newcommand{\ldef}{\vcentcolon=}
\newcommand{\rdef}{=\vcentcolon}
\newcommand{\Unif}{\mathrm{Unif}}

\newcommand{\spn}{\mathrm{span}}

\def\medskip{\vskip 10 pt}
\def\bigskip{\vskip 15 pt}

\def\texitem#1{\par\vspace{5pt}
\noindent\hangindent 20pt
\hbox to 20pt {\hss #1 ~}\ignorespaces}

\newcommand{\LDPtag}{{\scriptscriptstyle\mathsf{LDP}}}

\newcommand{\SQtag}{{\normalfont\scriptscriptstyle \textsf{-SQ}}}
\newcommandx{\pdecltau}[1][1=\tau]{{\normalfont \textsf{p-dec}}^{#1\SQtag}}

\newcommandx{\Whp}[1][1=\delta]{With probability at least $1-#1$}
\newcommandx{\whp}[1][1=\delta]{with probability at least $1-#1$}

\newcommand{\Mstar}{M^\star}

\usepackage{pifont}

\newcommand{\id}{I}

\newcommand{\RR}{\mathbb{R}}

\newcommand{\EE}{\mathbb{E}}
\newcommand{\PP}{\mathbb{P}}

\newcounter{cnt}
\foreach \num in {1,2,...,26}{%
  \stepcounter{cnt}%
  \expandafter\xdef \csname c\Alph{cnt}\endcsname {\noexpand\mathcal{\Alph{cnt}}}%
  \expandafter\xdef \csname b\Alph{cnt}\endcsname {\noexpand\mathbb{\Alph{cnt}}}%
}

\newcommand{\tr}{\mathrm{tr}}

\DeclarePairedDelimiterX{\ddiv}[2]{(}{)}{%
  #1\;\delimsize\|\;#2%
}

\newcommand{\indic}[1]{\mathbf{1}\left\{#1\right\}} %

\newcommand{\Alg}{\mathsf{Alg}}

\newcommand{\paren}[1]{{\left( #1 \right)}}
\newcommand{\brac}[1]{{\left[ #1 \right]}}

\newcommand{\normal}[1]{\mathsf{N}\paren{#1}}

\newcommandx{\VM}[1][1=M]{V\sups{#1}}

\newcommandx{\fm}[1][1=M]{f\sups{#1}}

\newcommandx{\muM}[1][1=M]{\nu\subs{#1}}

\newcommandx{\PM}[3][1=M,2=\pi,3=\pr]{#3\!\circ\!#1(#2)}

\newcommand{\pr}{\mathsf{Q}}

\newcommand{\gfunc}{g}
\newcommandx{\gm}[1][1=M]{\gfunc\sups{#1}}

\newcommand{\cFp}{\cF^+}

\newcommandx{\risk}[1][1=M]{\mathsf{Risk}^{#1}}

\newcommand{\bpi}{\boldsymbol{\pi}}

\newcommand{\tp}{\sups\top}

\newcommandx{\Mcxt}[1][1=M]{#1_{\sf cxt}}

\newcommand{\ZZ}{\mathbb{Z}}

\newcommandx{\Dl}[1][1=\lf]{\mathsf{D}_{#1}}
\newcommand{\lf}{\ell}

\newcommandx{\DC}[2][1=\Delta]{N_{\mathsf{frac}}(#2,#1)}
\newcommandx{\pds}[1][1=\Delta]{p_{#1}^\star}

\newcommandx{\NM}[2][1=\cM]{N(#1,#2)}

\newcommand{\phat}{\widehat{p}}

\newcommandx{\pim}[1][1=M]{\pi\sups{#1}}
\newcommandx{\pip}[1][1=\cP]{\pi\sups{#1}}
\newcommandx{\pims}{\pim[\Mstar]}
\newcommandx{\Vm}[1][1=M]{V\sups{#1}}
\newcommandx{\Vmm}[1][1=M]{V\sups{#1}(\pi\sups{#1})}

\newcommandx{\LM}[2][1=M]{L(#1,#2)}

\newcommand{\ind}[1]{_{#1}}

\newcommandx{\Enmpi}[3][1=M,2=\pi]{\En\sups{#1,#2}\brac{#3}}
\newcommandx{\Emalg}[3][1=M,2=\alg]{\EE\sups{#1,#2}\brac{#3}}
\newcommandx{\Pmalg}[3][1=M,2=\alg]{\PP\sups{#1,#2}\paren{#3}}

\newcommandx{\bpr}[1][1=\lf]{\pr_{#1}}

\newcommandx{\Mpara}[1][1=M]{\theta(#1)}
\newcommandx{\RISK}[2][1=T,2=\xspace]{\mathfrak{M}_{#1}^{#2}}
\newcommandx{\RISKob}[1][1=T]{\mathfrak{M}_{#1}^{\mathsf{obl}}}
\newcommandx{\SC}[2][1=\Delta,2=\xspace]{\mathfrak{C}_{#1}^{#2}}
\newcommandx{\SCob}[1][1=\Delta]{\mathfrak{C}_{#1}^{\mathsf{ob}}}

\newcommandx{\Ncov}[3][1=\xspace,2=\Delta]{N_{#1}(#3,#2)}

\newcommand{\MPow}{\mathscr{P}}
\newcommandx{\cMPow}[1][1=\MPow]{\cM_{#1}}

\newcommandx{\SQ}[1][1=M]{\mathsf{STAT}_{#1}^{\tau}}
\newcommandx{\VSTAT}[1][1=M]{\mathsf{VSTAT}_{#1}^{\tau}}
\newcommandx{\GSQ}[1][1=M]{\mathsf{GQ}_{#1}^{\tau}}
\newcommandx{\phq}[2][1=\bpi]{#2(#1)}
\newcommandx{\Dph}[2][1={\phi}]{\mathsf{D}_{#1}\paren{#2}}

\newcommandx{\hgm}[3][1=M,2=\delta]{\widehat{L}_{#2}(#1,#3)}

\newcommandx{\Tdec}[2][1=\Delta]{\mathfrak{C}^{\,\sf dec}_{#1}(#2)}

\newcommand{\SQdim}{\mathsf{SQDim}}
\newcommandx{\SQD}[3][1=\beta,2=\tau]{\SQdim^{#2}_{#1}(#3)}

\newcommandx{\DCF}[3][1=\Delta,2={\cF},3={\cFp}]{\DC[#1]{#2,#3}}
\newcommandx{\SCDP}[1][1=\Delta]{\SC[#1][\LDPtag]}

\newcommandx{\IDC}[1]{N_{\mathsf{frac}}(#1)}

\newcommand{\clip}{\mathsf{clip}}

\usepackage[suppress]{color-edits}
 \addauthor{sr}{red}
 \addauthor{fc}{blue}
 \addauthor{jq}{brown}

\usepackage{mathtools}  

\arxiv{
\title{Self-Normalized Martingales and Uniform Regret Bounds for Linear Regression}
\author{
  Fan Chen \\ {\small MIT} \\ {\small \texttt{fanchen@mit.edu}} 
  \and Jian Qian \\ {\small University of Hong Kong} \\ {\small \texttt{jianqian@hku.hk}} 
  \and Alexander Rakhlin \\ {\small MIT} \\ {\small \texttt{rakhlin@mit.edu}}
    \and Nikita Zhivotovskiy \\ {\small UC Berkeley} \\ {\small \texttt{zhivotovskiy@berkeley.edu}}
}
}
\colt{
\title[Self-normalized martingales under smoothness assumption]{Self-Normalized Martingales and Uniform Regret Bounds for Linear Regression}

\coltauthor{%
 \Name{Author Name1} \Email{abc@sample.com}\\
 \addr Address 1
 \AND
 \Name{Author Name2} \Email{xyz@sample.com}\\
 \addr Address 2%
}
}

\begin{document}

\maketitle

\begin{abstract}
Self-normalized martingale inequalities lie at the heart of confidence ellipsoids for online least squares and, more broadly, many bandit and reinforcement-learning results. Yet existing vector and scalar results typically rely on bounded covariates and an explicit regularization matrix, producing bounds that are \emph{not scale-invariant}: although the self-normalized quantity is scale-invariant by definition, its standard upper bounds are not.

We characterize when scale-invariant upper bounds on self-normalized martingales are possible. Without further assumptions, we prove that nontrivial scale-invariant bounds exist only in dimension $d=1$; moreover, in $d=1$ we obtain $O(\log T)$ scale-invariant self-normalized bounds without any assumptions on the covariates. In contrast, for $d>1$ we show that no nontrivial scale-invariant bound can hold in full generality. We then connect this dichotomy to \emph{doubly-uniform} regret in online linear regression (i.e., regret bounds that are simultaneously independent of the covariate scale and the comparator norm) and use it to resolve the open question of Gaillard, Gerchinovitz, Huard, and Stoltz, \emph{``Uniform regret bounds over $\mathbb{R}^d$ for the sequential linear regression problem with the square loss''} (ALT 2019): in $d=1$ we give an explicit algorithm with $O(\log T)$ doubly-uniform regret, whereas for $d>1$ sublinear doubly-uniform regret is impossible.

Finally, under a natural \emph{smoothness} condition (bounded Radon--Nikodym derivatives of the conditional covariate laws with respect to a fixed base measure), we recover sublinear regret for $d>1$ without bounded covariates and derive a self-normalized concentration inequality free of the usual regularization penalties, yielding arguably a first natural scale-invariant bound for adaptive, non-i.i.d.\ vector martingales.
\end{abstract}

\section{Introduction}

Our play opens with what appears to be a two-protagonist drama. They make their separate entrances across two acts, each insisting on a solo turn in the spotlight. In the final act they meet, only to realize that they are, in fact, a mirror image of each other.
\paragraph{Act I: Self-normalized martingales} In an introductory course on Probability, we learn that the sum of normal random variables is also normal, and, in particular, 
$\frac{\sum_{i=1}^T X_i}{\sqrt{\sum_{i=1}^T \sigma_i^2}} \sim \normal{0,1}$
where $X_i\sim \normal{0, \sigma_i^2}$ are independent normal random variables and $\sigma_i$ are  constants. If one replaces the denominator by the random realizations of $X_i$, then the distribution of $\frac{\sum_{i=1}^T X_i}{\sqrt{\sum_{i=1}^T X_i^2}}$ is no longer normal, yet its tails are similar to those of the normal distribution. In particular, an application of Hoeffding's inequality (via symmetrization) shows that  
$$\PP\left(\frac{\sum_{i=1}^T X_i}{\sqrt{\sum_{i=1}^T X_i^2}} > u\right) \leq \exp\left(-\frac{u^2}{2}\right)$$ for any \emph{symmetric} independent random variables $X_i$, remarkably under no further assumptions on their distributions (see e.g., \citet[Ex. 7.3]{van2014probability}). To bring out the symmetry requirement, we may instead write the above inequality as
\begin{align}
\PP\left(\frac{\sum_{i=1}^T \varepsilon_i X_i}{\sqrt{\sum_{i=1}^T X_i^2}} > u\right) \leq \exp\left(-\frac{u^2}{2}\right),
\label{eq:hoeffding-selfnorm}
\end{align}
where $\varepsilon_i$ are i.i.d. Rademacher random variables, and $X_i$ are independent with no further assumption on their distributions. Such inequalities for \emph{self-normalized sums} are very attractive, both due to the lack of assumptions and due to their natural scale invariance. 

It is then reasonable to ask whether inequalities similar to \eqref{eq:hoeffding-selfnorm} hold for martingales. More precisely, suppose for simplicity that $X_i$ are deterministic functions of $\varepsilon_1,\ldots,\varepsilon_{i-1}$, i.e. measurable with respect to the {dyadic} filtration. Clearly, the sum $\sum_{i=1}^T \varepsilon_i X_i$ is a martingale. Does the inequality \eqref{eq:hoeffding-selfnorm} also hold for this martingale without assumptions on $X_i$'s?

Perhaps surprisingly, the answer is no. Even more interestingly, this answer is related to what is referred to as ``doubly uniform regret'' in online learning. But we are getting ahead of ourselves.

For simplicity of exposition, let us consider the first moment of the ratio, rather than the tail bound. The following inequality can be found in the classical book of \citet[p. 199]{depena2009self}:
\begin{align}
\En\left[S_T/V_T^{1/2}\right] \leq C+ c\En\brk*{0\vee\log \log (V_T^{1/2}\vee V_T^{-1/2})},
\label{eq:depena-selfnorm}
\end{align}
where, henceforth, we abbreviate $S_T=\sum_{i=1}^T \varepsilon_i X_i$ and $V_T=\sum_{i=1}^T X_i^2$, and $C,c>0$ are constants. In the multi-dimensional setting, described below, the upper bound also involves the logarithm of the condition number of the matrix $V_T$ (see additionally \citet[Corollary 4.5]{whitehouse2023time} and references therein).

The inequality \eqref{eq:depena-selfnorm} lacks the scale-invariance property that we desire: the left-hand side does not change when multiplying all $X_i$'s by a constant, yet the right-hand side does. This lack of scale-invariance is present in many results in the literature on self-normalized martingales, and it stems from the pseudo-maximization (or the method of mixtures) technique pioneered by \citet{robbins1970boundary} and used extensively by \citet{depena2004self,depena2009self}. The method aims to place a non-trivial mass on a parameter that can only be known after observing the scale of the realization. 

One approach to remove scale-dependence in the upper bound on the self-normalized martingale is to change the denominator. This can be achieved by augmenting $V_T$ with a regularization term. For instance, \citet{depena2004self} establishes
\begin{align}
    \PP\left(\frac{S_T}{(V_T+\En V_T)^{1/2}} > u\right) \leq \sqrt{2}\exp\left(-\frac{u^2}{4}\right).
    \label{eq:selfnorm-by-mixture}
\end{align}
The expected value $\En V_T$ can be viewed as fixing the scale of the problem, yet its presence in the denominator is not desirable. Another approach is to augment $V_T$ with a constant, making both the ratio and the ensuing upper bound scale-dependent. 

In particular, the addition of a regularizing constant to $V_T$ has been employed in the multi-dimensional setting (that is, $X_i=X_i(\varepsilon_1,\ldots,\varepsilon_{i-1})$ are taking values in $\RR^d$ and $V_T=\sum_{i=1}^T X_i X_i^\top$ is the sample covariance) by \citet[Theorem 14.7]{depena2009self} and \cite{abbasi2011improved} to establish tail bounds of the form
\begin{align}
\label{eq:self-norm-with-Gamma}
S_T^\top (V_T+\Gamma)^{-1}S_T \lesssim
\log \left(\frac{\det(V_T+\Gamma)}{\det(\Gamma)}\right)+\log(1/\delta)
\end{align}
with probability at least $1-\delta$, for some deterministic positive definite matrix $\Gamma$. Once again, both sides are not scale-invariant, limiting the applicability of the bound when the scale is unknown.

Before continuing our discussion, we mention that the analysis of self-normalized martingales plays a central role in bandits and reinforcement learning: it underlies confidence ellipsoids for online least-squares estimators (e.g., \cite{dani2008stochastic,abbasi2011improved}) and the resulting online-to-confidence set conversions (see \cite{abbasi2012online,lee2024improved,clerico2025confidence} and the textbook \citep[Chapter~20]{lattimore2020bandit} for bibliographic pointers), identification in Linear Time-Invariant systems \citep{simchowitz18a,sarkar2018nearoptimalfinitetime}, and beyond. More broadly, there is renewed interest in formulations and in weakening tail assumptions beyond the classical conditionally sub-Gaussian setting (e.g., \cite{howard2020time,whitehouse2023time,zhao2023variance,ziemann2024vector,akhavan2025bernstein}).

\paragraph{Act II: Doubly uniform regret}

One of the first online prediction methods is the celebrated Vovk-Azoury-Warmuth (VAW) estimator, initially proposed by \citet{vovk1997competitive} and later refined by \citet{v-cos-01, azoury2001relative} (see also \citep[Theorem 11.9]{cesa2006prediction}). First, we recall that in online supervised learning with squared loss, on each round $t\in[T]$, the forecaster observes $x_t\in\RR^d$, selects a prediction $\yhat_t\in\RR$ and observes $y_t\in[-1,1]$. The VAW estimator, defined later in the text, is essentially a regularized least squares estimator with a regularization parameter $\lambda>0$, and it achieves the following regret bound: for any $\theta\in\RR^d$,
\begin{align}
    \label{eq:vaw-regret}
    \sumt[T] (\yhat\ind{t}-y\ind{t})^2- \sumt[T] (\tri{\theta, x\ind{t}}-y\ind{t})^2
    \leq \lambda\nrm{\theta}^2 + d\log\prn*{1+\frac{T\max_{t} \nrm{x\ind{t}}^2}{\lambda}}.
\end{align}
 Notably, the bound is non-uniform in two ways: it depends both on the norm of the comparator vector $\theta$ and the scale of the covariates. \cite{pmlr-v40-Bartlett15} raised the question of whether one can obtain a regret bound that is uniform over all $\theta$, as the existing lower bounds do not show this necessity (see, e.g., \cite[Theorem 4]{gaillard2019uniform}). This led the authors of \cite{gaillard2019uniform} to further ask whether regret bounds that are doubly uniform---with respect to the norm of the target parameter and the covariates---are possible in online linear regression.

So far, this double uniformity is only known in the so-called transductive online setup, where all design vectors are arbitrary but known in advance so that the predictor can use them \citep{pmlr-v40-Bartlett15, gaillard2019uniform, qian2024refined}, and, roughly speaking, establish the scale of the prediction problem. This double uniformity also appears in the statistical (i.i.d.) setup, where variants of non-linear predictors allow one to bypass the dependence on both the distribution of the design and the norm of the parameter \citep{forster2002relative, mourtada2022distribution}. More generally, in the context of GLMs there is recent interest in analyzing unbounded parameter spaces, for example in classification with logistic regression, where large parameter norms are very natural and relate to (almost) linearly separable samples. In the transductive setup and in the context of online logistic regression, see \cite{drmota2026phase}, while \cite{qian2024refined} focuses on regression with square, hinge and logarithmic losses.

\paragraph{Act III: On the equivalence of martingale bounds and regret inequalities}

Denote the regret of the learner in the online prediction problem with arbitrary $\theta\in\RR^d$ as
\begin{align}
    \label{eq:reg_def}
    \Reg(T)\ldef \sumt (\yhat\ind{t}-y\ind{t})^2-\inf_{\theta\in\RR^d} \sumt (\tri{\theta, x\ind{t}}-y\ind{t})^2.
\end{align}
Suppose the forecaster attempts to predict the following sequence. Covariates form a predictable process $x_t=X_t(\varepsilon_1,\ldots,\varepsilon_{t-1})$, as earlier in the text, and the outcome variable $y_t=\varepsilon_t $ is an independent Rademacher random variable. It is clear that in this setting, the best strategy for the forecaster is to predict $\yhat_t=0$ for all $t$. Then, the expected regret of the forecaster is given by
\begin{align}
    \En_{\varepsilon}\brk*{\sumt y_t^2-\inf_{\theta\in\RR^d} \sumt (\tri{\theta, x\ind{t}}-y\ind{t})^2 }&=\En_{\varepsilon}\sup_{\theta\in\RR^d}  2\tri{\theta, \sumt \varepsilon_t X_t} - \tri{\theta, \sumt X_t X_t^\top \theta} \\
    &= \En_{\varepsilon}\brk*{S_T^\top (V_T)^{\dagger} S_T}.
\end{align}
which is precisely the expected value of the self-normalized process that appeared in Act I. Furthermore, \cite{rakhlin2014nonparametric} proved a converse statement (for a more general setting of regression with any class of functions): no matter what the sequence of $\{(x_t,y_t)\}_{t\in[T]}$ is, even if chosen adaptively by Nature, there \emph{exists} a prediction strategy that achieves a regret bound that is, up to a multiplicative constant, in the above self-normalized form for the worst-case martingale (see below for more details). %
Thus, upper bounds on \eqref{eq:reg_def} for all sequences imply upper bounds for the expected self-normalized ratio, and vice versa.

The connection  between probabilistic martingale inequalities and regret bounds has been a focus of extensive research, including \cite{abernethy2008optimal,rakhlin2010online,rakhlin2017equivalence, foster2018online, foster2017zigzag, beiglbock2015pathwise, orabona2023tight}, and, in particular, certain \textit{equivalence} between these two seemingly unrelated fields was studied in \cite{rakhlin2017equivalence,foster2018online}.

\paragraph{Finale}

Due to the two-sided equivalence between minimax regret bounds for unbounded comparators $\theta\in\RR^d$ and expected value of self-normalized martingales, we can establish lower/upper bounds for one by studying the other, whichever is more convenient. In particular, the issues discussed in Act I regarding the knowledge of the scale of the problem are precisely the issues discussed in Act II regarding the knowledge of the norm of the comparator vector $\theta$ and the scale of the covariates. In particular, later in the paper, we describe the exact link between self-normalized bounds of the form \eqref{eq:self-norm-with-Gamma} and the Vovk-Azoury-Warmuth forecaster. 

In particular, our contributions are:
\begin{itemize}
    \item We establish a sharp separation between the cases $d=1$ and $d>1$.
    When $d=1$, we prove a \emph{fully scale-invariant} bound of order $O(\log T)$ for self-normalized martingales \emph{without any assumption on the covariates}.
    Via the regret--martingale connection developed in \cite{rakhlin2017equivalence}, this implies a doubly-uniform $O(\log T)$ regret guarantee for online linear regression, thereby resolving the question of \cite{gaillard2019uniform} in dimension one.
    Moreover, we provide an explicit algorithm achieving this doubly-uniform $O(\log T)$ regret.

    \item In contrast, when $d>1$ we show that no nontrivial scale-invariant control of self-normalized vector martingales is possible in full generality, and consequently sublinear doubly-uniform regret bounds for online linear regression cannot hold.
    This completes our answer to the question of \cite{gaillard2019uniform}. %

    \item On the positive side, still in the regime $d>1$, we introduce a \emph{smoothness} condition on the covariate process, requiring that each conditional law admits a bounded Radon--Nikodym derivative with respect to a fixed base measure.
    Under this assumption we obtain sublinear regret \emph{without} assuming bounded covariates.
    Moreover, our bounds avoid the usual matrix regularization penalties (e.g., the log-determinant term in \eqref{eq:self-norm-with-Gamma}), yielding what appears to be a first natural example of a scale-invariant self-normalized martingale bound in a genuinely non-i.i.d.\ setting.
\end{itemize}

\section{Preliminaries}
\label{sec:prlim}

In the previous section, we motivated the study of {dyadic} self-normalized martingales of the form
$\varepsilon_t X_t$, where $(\varepsilon_t)_{t\ge1}$ are i.i.d.\ Rademacher signs and each $X_t$ is
$\sigma(\varepsilon_1,\ldots,\varepsilon_{t-1})$-measurable. In particular, if $(X_t)$ is deterministic
(or more generally independent of $(\varepsilon_t)$), then conditioning on $(X_t)_{t\le T}$ and applying
Hoeffding's inequality yields the scale-invariant tail bound \eqref{eq:hoeffding-selfnorm} for the ratio
$\sum_{t=1}^T \varepsilon_t X_t/\sqrt{\sum_{t=1}^T X_t^2}$.
Let us now present the more general filtered definition that is standard in the online regression and
bandit literature (e.g., \citealp{abbasi2011improved,ziemann2024vector}), and that will serve as our
main probabilistic object. Throughout this probabilistic discussion we use capitals $(X_t,Y_t)$; later,
when we switch to the online learning protocol, we will revert to the conventional lowercase notation
$(x_t,y_t)$ and use a separate notion of game history that also records predictions.

Let $(\Omega,\mathcal F,\PP)$ be a probability space equipped with a filtration
$(\mathcal G_t)_{t\ge0}$. 
We assume that $X_t$ is \emph{predictable} and that $(Y_t)_{t=1}^T$ is a real-valued martingale difference sequence with
respect to $(\mathcal G_t)$:
\begin{equation}
    X_t\in\RR^d ~\text{is }~ \mathcal G_{t-1}~\text{-measurable and}\qquad \En\left[Y_t \,\middle| \mathcal G_{t-1}\right]=0
    \qquad\text{for all } t\in[T].\label{eq:def_martingale_XY}
\end{equation}
Depending on the application, one may further assume boundedness
$|Y_t|\le 1$ almost surely or a conditional sub-Gaussian condition. Define the cumulative vector and the Gram matrix
\[
S_t \ldef \sum_{i=1}^t Y_i X_i  \in \RR^d,
\qquad
\cov_t \ldef \sum_{i=1}^t X_i X_i^\top \in \RR^{d\times d}.
\]
The canonical scale-free quantity is the \emph{self-normalized} process
\[
R_t \ldef \|S_t\|_{\cov_t^\dagger}^2 = S_t^\top \cov_t^\dagger S_t,
\]
where $\cov_t^\dagger$ denotes the Moore--Penrose pseudoinverse. Controlling $R_t$ (in expectation or
with high probability) is a central theme of the self-normalization literature
\citep{de1999general,depena2009self,bercu2019new} and, in the online learning context,
it is the quantity that underlies confidence ellipsoids and regret bounds in least squares and linear
bandits \citep{abbasi2011improved, ziemann2024vector}.

The question we pursue is whether martingale analogues of \eqref{eq:hoeffding-selfnorm} can hold for
$R_t$ under minimal assumptions on the predictable covariates $(X_t)$. Since we aim for a uniform upper
bound that holds for all martingales of the form \cref{eq:def_martingale_XY}, we define
\begin{equation}
\label{eq:cR_d}
\cR_d(T)= \sup_{P_{X,Y}} \En_{P_{X,Y}}\brk*{\nrm{S_T}_{\cov_T^\dagger}^2},
\end{equation}
where the supremum ranges over all laws $P_{X,Y}$ of dimension $d$ satisfying
\cref{eq:def_martingale_XY} and such that $|Y_t|\le 1$ almost surely for all $t\in [T]$.

A \emph{dyadic martingale} is a special case where $Y_t=\varepsilon_t$ are i.i.d.\ Rademacher and
$X_t = X_t(\varepsilon_1,\ldots,\varepsilon_{t-1})$ is a deterministic function of the past signs.
Such a process can be viewed as an $\RR^d$-valued \emph{tree} $X$ of depth $T$, or a sequence of
mappings $X_t\colon \{\pm1\}^{t-1}\to\RR^d$. Let
\[
\cR_d^{\mathrm{dyadic}}(T)\ldef \sup_X \En_\varepsilon\brk*{R_T},
\]
where the supremum is over all trees $X$ and expectation is over i.i.d.\ Rademacher $(\varepsilon_t)$.

\begin{lemma}\label{lem:dyadic-equivalence}
For any $d\geq 1$ and $T\geq 1$, if we only consider processes where $|Y_t|\leq 1$ a.s., then
\[
\cR_d(T)=\cR_d^{\mathrm{dyadic}}(T).
\]
\end{lemma}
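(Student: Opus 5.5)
The inequality $\cR_d^{\mathrm{dyadic}}(T)\le\cR_d(T)$ is immediate: a dyadic martingale is a special case of \cref{eq:def_martingale_XY}, with $\mathcal G_t=\sigma(\varepsilon_1,\ldots,\varepsilon_t)$ and $|Y_t|=1$. The plan for the reverse inequality is to introduce a game-theoretic value function, prove it is convex in the cumulative vector, and use that convexity in two ways: to show that Rademacher noise is the worst mean-zero noise on $[-1,1]$, and to dominate every general martingale by the value function.

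\textbf{Value function.} Write $\Phi(S,V)\ldef\sup_{\theta\in\RR^d}\bigl(2\tri{\theta,S}-\theta^\top V\theta\bigr)$. This equals $S^\top V^\dagger S$ when $S\in\mathrm{range}(V)$ and $+\infty$ otherwise. Since $S_T=\sum_t Y_tX_t$ always lies in $\mathrm{range}(\cov_T)$, we have $R_T=\Phi(S_T,\cov_T)$. As a supremum of affine functions of $S$, $\Phi(\cdot,V)$ is convex for every $V$. Define backward
\[
W_{T}(S,V)\ldef\Phi(S,V),\qquad W_{t-1}(S,V)\ldef\sup_{x\in\RR^d}\tfrac12\bigl[W_{t}(S+x,V+xx^\top)+W_{t}(S-x,V+xx^\top)\bigr].
\]
By induction, each $W_t(\cdot,V)$ is convex for every $V$: for fixed $x$, the average of two translates of a convex function is convex, and a supremum of convex functions is convex. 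Unrolling the recursion and choosing $\eta$-optimal $x$ at each node of the finite tree gives $\cR_d^{\mathrm{dyadic}}(T)\ge W_0(0,0)-\eta$ for every $\eta>0$, hence $\cR_d^{\mathrm{dyadic}}(T)\ge W_0(0,0)$. Measurability is trivial on the finite tree, and the case where some $W_t$ is infinite is handled the same way.

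\textbf{Domination of general martingales.} Take a general process satisfying \cref{eq:def_martingale_XY} with $|Y_t|\le1$. Fix $S,V,x$ and set $g(y)\ldef W_t(S+yx,V+xx^\top)$, which is convex in $y$. For $y\in[-1,1]$, convexity gives
\[
g(y)\le \tfrac{1+y}{2}g(1)+\tfrac{1-y}{2}g(-1).
\]
Apply this pointwise with $S=S_{t-1}$, $V=\cov_{t-1}$, $x=X_t$, $y=Y_t$. The coefficients $g(\pm1)$ are $\mathcal G_{t-1}$-measurable and $\En[Y_t\mid\mathcal G_{t-1}]=0$, so
\[
\En\bigl[W_t(S_t,\cov_t)\mid\mathcal G_{t-1}\bigr]\le\tfrac12\bigl[g(1)+g(-1)\bigr]\le W_{t-1}(S_{t-1},\cov_{t-1}).
\]
Iterating with the tower property yields $\En R_T=\En W_T(S_T,\cov_T)\le W_0(0,0)\le\cR_d^{\mathrm{dyadic}}(T)$, and taking the supremum over laws $P_{X,Y}$ finishes the proof.

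\textbf{Main obstacle.} The delicate step is the conditional-expectation argument. The filtration $\mathcal G_t$ can carry information beyond $(X,Y)$, and $W_t$ can take the value $+\infty$ or need not be obviously measurable. I expect the pointwise affine majorant above to sidestep regular conditional distributions entirely. The remaining care goes into two points. First, the composed maps $\omega\mapsto g(\pm1)$ must be measurable; I would verify this using that $W_t$ is convex in $S$ and hence lower semicontinuous on the relevant set, or else work with an upper measurable envelope. Second, the conditional expectations must be handled in extended-valued arithmetic, which is legitimate since every $W_t\ge0$.
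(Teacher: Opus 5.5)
Your proof is correct and rests on the same key idea as the paper's: the Bellman value is convex in the cumulative vector, so the chord bound $g(y)\le\frac{1+y}{2}g(1)+\frac{1-y}{2}g(-1)$ together with $\En[Y_t\mid\mathcal G_{t-1}]=0$ shows that Rademacher signs dominate any mean-zero $[-1,1]$-valued noise. The difference is organizational. You apply the chord bound pathwise to the realized $Y_t$ and compare a general martingale directly with the dyadic value $W_t$, whereas the paper first defines a value $F_t$ with a supremum over all mean-zero laws, dominates the martingale by $F_t$ via the conditional law of $Y_t$, and then proves $F_t=G_t$. Your route is slightly leaner, and the measurability you flag is settled because $\Phi$ is a supremum of functions affine in $(S,V)$, hence jointly lower semicontinuous, a property your recursion preserves (convexity in $S$ alone would not suffice).
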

In words, worst-case martingales---from the point of view of self-normalized ratios---are the dyadic
martingales, up to a factor $2$. Since each dyadic martingale is defined by $2^T-1$ values (the number
of nodes in the binary tree), for each such dyadic martingale we can consider its rescaled (by the
maximum norm) variant $X'$ with $\|X'_t\|\leq 1$. Since the value of the self-normalized ratio does
not change when scaled by a constant, we also have the following conclusion. Let
$\cR_d^{\mathrm{bdd}}(T)$ denote the supremum over the processes of the form \cref{eq:def_martingale_XY}
with $\|X_t\|\leq 1$ almost surely, and let $\cR_d^{\mathrm{bdd,dyadic}}(T)$ denote the corresponding
supremum restricted to dyadic and bounded martingales.
\begin{corollary}
For any $d\geq 1$ and $T\geq 1$, if we only consider processes where $|Y_t|\leq 1$ a.s., then
$\cR_d^{\mathrm{dyadic}}(T) = \cR_d^{\mathrm{bdd,dyadic}}(T)$, and thus
\[
\cR_d^{\mathrm{dyadic}}(T)
= \cR_d^{\mathrm{bdd,dyadic}}(T)
= \cR_d^{\mathrm{bdd}}(T)
= \cR_d(T).
\]
\end{corollary}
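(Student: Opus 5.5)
The plan is to prove the chain of equalities by combining scale invariance with \cref{lem:dyadic-equivalence} and a sandwich of inequalities. The core identity is that for any $c\neq 0$, replacing every $X_t$ by $cX_t$ leaves $R_T$ unchanged: $S_T\mapsto cS_T$ and $\cov_T\mapsto c^2\cov_T$, hence $\cov_T^\dagger\mapsto c^{-2}\cov_T^\dagger$ by the homogeneity of the Moore--Penrose pseudoinverse. Thus $R_T=S_T^\top\cov_T^\dagger S_T$ is invariant. (If all $X_t\equiv 0$, then $R_T=0$ and there is nothing to do.)

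\emph{Step 1: $\cR_d^{\mathrm{dyadic}}(T)=\cR_d^{\mathrm{bdd,dyadic}}(T)$.} The inequality ``$\ge$'' is immediate, since bounded trees form a subset of all trees. For ``$\le$'', fix any tree $X$. It is specified by finitely many vectors $X_t(\varepsilon_{1:t-1})$, at most $2^T-1$ of them, so $M\ldef\max_{t,\varepsilon}\nrm{X_t(\varepsilon_{1:t-1})}$ is finite. If $M=0$, then $R_T\equiv 0$. Otherwise, set $X'=X/M$. This is still a deterministic tree, hence a dyadic martingale, and it satisfies $\nrm{X'_t}\le 1$. By scale invariance its $R_T$ agrees with that of $X$ pathwise, so the two expectations coincide. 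Taking the supremum over $X$ gives the claim. Finiteness of the tree is the only point where care is needed: it is exactly what makes the maximum norm finite, which would fail for general martingales with unbounded $X_t$.

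\emph{Step 2: the chain.} Dyadic bounded processes are special cases of bounded processes satisfying \cref{eq:def_martingale_XY} with $|Y_t|\le1$. Bounded processes are in turn special cases of all such processes. Hence
\[
\cR_d^{\mathrm{bdd,dyadic}}(T)\le\cR_d^{\mathrm{bdd}}(T)\le\cR_d(T).
\]
By \cref{lem:dyadic-equivalence}, $\cR_d(T)=\cR_d^{\mathrm{dyadic}}(T)$, and by Step 1 this equals $\cR_d^{\mathrm{bdd,dyadic}}(T)$. The sandwich therefore collapses, and all four quantities are equal.

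No step is a genuine obstacle. Everything hinges on the invariance $(c^2\cov)^\dagger=c^{-2}\cov^\dagger$ and on the finiteness of the dyadic tree, both of which are routine checks.
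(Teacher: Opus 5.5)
Your proposal is correct and follows essentially the same route as the paper: rescale each dyadic tree by its maximal node norm (finite since there are only $2^T-1$ nodes), invoke the invariance of $R_T$ under $X\mapsto cX$, and then close the chain using \cref{lem:dyadic-equivalence} together with the trivial inclusions $\cR_d^{\mathrm{bdd,dyadic}}(T)\le\cR_d^{\mathrm{bdd}}(T)\le\cR_d(T)$. Your write-up simply makes explicit the sandwich argument and the pseudoinverse homogeneity that the paper leaves implicit in its one-line remark.
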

In the following section, this result will imply that the difficulty in doubly-uniform regret bounds is
a consequence of unbounded $\theta$ rather than unbounded covariates. This is also reflected by our
lower bounds, which hold for bounded covariates.

\subsection{An online prediction game}
\label{subsec:online-game}

We now use the standard online learning notation and write covariates, outcomes, and predictions as
$(x_t,y_t,\yhat_t)$. On each round $t\in[T]$, the environment $\Env$ reveals a covariate vector
$x\ind{t}\in\RR^d$, the learner $\Alg$ outputs a prediction $\yhat\ind{t}\in\RR$, and then $\Env$
reveals an outcome $y\ind{t}\in[-1,1]$. Both $\Env$ and $\Alg$ may be adaptive. We denote the history
prior to round $t$ by
\[
\cH^{t-1}\ldef \sigma\Big(\{(x\ind{s},\yhat\ind{s},y\ind{s})\}_{s<t}\Big).
\]

Given a comparator set $\Theta\subseteq\RR^d$, the square-loss regret is
\begin{equation}
\label{eq:regret-theta}
\Reg_\Theta(T)
\ldef
\sumt (\yhat\ind{t}-y\ind{t})^2
-\inf_{\theta\in\Theta}\sumt \bigl(\tri{\theta,x\ind{t}}-y\ind{t}\bigr)^2.
\end{equation}
Since our focus is on $\Theta=\RR^d$, we abbreviate $\Reg(T)\ldef \Reg_{\RR^d}(T)$.

A convenient way to relate regret to self-normalization is to consider a stochastic environment with
conditionally unbiased outcomes. Fix any algorithm $\Alg$ and a sequential law $P_{x,y}$ over
$(x\ind{1},y\ind{1},\ldots,x\ind{T},y\ind{T})$ such that, when the environment is generated from
$P_{x,y}$ independently of the predictions of $\Alg$, the outcomes satisfy
\begin{equation}
\label{eq:online-unbiased}
\En\left[y\ind{t}\,\middle|\,x\ind{1},y\ind{1},\ldots,x\ind{t-1},y\ind{t-1},x\ind{t}\right]=0
\qquad\text{for all } t\in[T].
\end{equation}
Let $\Env$ be the environment that samples $(x\ind{1},y\ind{1},\ldots,x\ind{T},y\ind{T})\sim P_{x,y}$
and reveals it round by round. Then
\begin{align}
\En\sups{\Env,\Alg}\brk*{\sumt (\yhat\ind{t}-y\ind{t})^2}
&=
\En\sups{\Env,\Alg}\brk*{\sumt \bigl(\yhat\ind{t}^2-2\yhat\ind{t}y\ind{t}+y\ind{t}^2\bigr)} \nonumber\\
&=
\En\sups{\Env,\Alg}\brk*{\sumt \bigl(\yhat\ind{t}^2+y\ind{t}^2\bigr)}
\ge
\En_{P_{x,y}}\brk*{\sumt y\ind{t}^2},
\label{eq:loss-lb-y2}
\end{align}
where the middle equality uses \eqref{eq:online-unbiased} (hence $\En[\yhat\ind{t}y\ind{t}]=0$). Next,
define $S_T \ldef \sumt y\ind{t}x\ind{t}\in\RR^d$ and
$V_T \ldef \sumt x\ind{t}x\ind{t}\tp\in\RR^{d\times d}$. A direct completion of squares gives the
exact identity
\begin{align}
\sumt y\ind{t}^2-\inf_{\theta\in\RR^d}\sumt \bigl(\tri{\theta,x\ind{t}}-y\ind{t}\bigr)^2
=
\sup_{\theta\in\RR^d}\Bigl\{2\tri{\theta,S_T}-\nrm{\theta}_{V_T}^2\Bigr\}
=
\nrm{S_T}_{V_T^\dagger}^2.
\label{eq:selfnorm-identity-game}
\end{align}
Combining \eqref{eq:loss-lb-y2} with \eqref{eq:selfnorm-identity-game} yields
\begin{align}
\En\sups{\Env,\Alg}\brk*{\Reg(T)}
\ge
\En_{P_{x,y}}\brk*{\nrm{S_T}_{V_T^\dagger}^2}.
\label{eq:regret-lb-selfnorm}
\end{align}
This implies that for every algorithm $\Alg$,
\[
\max_{\Env}\En\sups{\Env,\Alg}\brk*{\Reg(T)} \ge \cR_d(T).
\]
Conversely, the work of \citet{rakhlin2017equivalence} provides a minimax upper bound turning
self-normalized control into regret guarantees (up to universal constants), yielding a two-sided link
between self-normalization and optimal regret in online linear regression. We state here the upper
bound of \cite[Lemma~4]{rakhlin2014nonparametric} for the linear function class:
\begin{lemma}
\label{lem:rakhlinsridharan}
In the notation above, it holds that
\[
\min_{\Alg}\max_{\Env}\En\sups{\Env,\Alg}\brk*{\Reg(T)}  \jqedit{\leq 4\cR_{d+1}^{\mathrm{dyadic}}(T)} \le 4\cR_{d+1}(T).
\]
\end{lemma}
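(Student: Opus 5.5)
We follow the sequential-minimax approach of \citet{rakhlin2014nonparametric}. Since the game has finitely many rounds, we write the minimax value as a nested $\inf_{\yhat_t}\sup_{x_t}\sup_{y_t}$ expression. We then apply the minimax theorem round by round: the learner chooses a distribution over $\yhat_t$, or just $\yhat_t\in[-1,1]$, since clipping predictions to $[-1,1]$ never hurts. The adversary's choice of $y_t$ is replaced by a distribution $p_t$ on $[-1,1]$. The loss $(\yhat-y)^2$ is convex in $\yhat$ and linear in $p_t$, so the swap is justified on the compact set $[-1,1]$. The learner then best-responds with $\yhat_t=\En_{p_t}[y_t]\rdef\mu_t$. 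This yields the standard identity
\[
\min_{\Alg}\max_{\Env}\En\brk*{\Reg(T)}
=\sup_{x_1,p_1}\En_{y_1}\cdots\sup_{x_T,p_T}\En_{y_T}\brk*{\sumt (\mu_t-y_t)^2-\inf_{\theta\in\RR^d}\sumt(\tri{\theta,x_t}-y_t)^2}.
\]

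Next, we expand $\sumt(\mu_t-y_t)^2=\sumt(y_t^2-2\mu_t y_t+\mu_t^2)$ and rewrite the infimum as a supremum over $\theta$. This gives, pathwise,
\[
\sup_{\theta}\sumt\prn*{2\tri{\theta,x_t}y_t-\tri{\theta,x_t}^2-2\mu_t y_t+\mu_t^2}.
\]
We introduce the augmented covariate $\tilde x_t=(x_t,\mu_t)\in\RR^{d+1}$ and the parameter $\tilde\theta=(\theta,-1)$. Each summand is then a quadratic in $\tri{\tilde\theta,\tilde x_t}$ plus the term $2\mu_t^2$. We recenter $y_t$ by writing $y_t=\mu_t+(y_t-\mu_t)$, so that $Y_t'\ldef y_t-\mu_t$ is a martingale difference with $|Y'_t|\le 2$. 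Completing the square then bounds the expression by
\[
\sup_{\tilde\theta\in\RR^{d+1}}\crl*{2\tri{\tilde\theta,\textstyle\sum_t Y_t'\tilde x_t}-\nrm{\tilde\theta}^2_{\tilde V_T}}=\nrm{\tilde S_T}^2_{\tilde V_T^\dagger}.
\]
Here the deterministic pieces $\mu_t^2$ must cancel. The key check is that $\sum_t(2\tri{\theta,x_t}\mu_t-\tri{\theta,x_t}^2-\mu_t^2)=-\sum_t(\tri{\theta,x_t}-\mu_t)^2$, which is exactly $-\nrm{\tilde\theta}^2_{\tilde V_T}$ for the augmented Gram matrix. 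The supremum over $\tilde\theta$ with last coordinate $-1$ is then bounded by the unconstrained supremum.

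It remains to pass to dyadic trees with $|Y_t|\le1$. The increments $Y'_t$ are bounded by $2$, so we rescale by $Y'_t/2$, which introduces a factor $4$ in the self-normalized quantity. To reach $\cR_{d+1}^{\mathrm{dyadic}}(T)$ we need an additional symmetrization step: any centered distribution on $[-2,2]$ can be written as a mixture of two-point centered distributions. A centered two-point law can then be realized as a sign times a predictable scale, absorbing the asymmetric magnitudes into the covariate via $\tilde x_t\cdot|Y'_t|$. This reduction is where we expect the main difficulty, and it is essentially the content of \cref{lem:dyadic-equivalence}. We would instead invoke \cref{lem:dyadic-equivalence} directly to bound the resulting $(d+1)$-dimensional martingale supremum by $\cR_{d+1}^{\mathrm{dyadic}}(T)$. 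Together with the rescaling, this gives the constant $4$, and the final inequality $\cR^{\mathrm{dyadic}}_{d+1}\le\cR_{d+1}$ is immediate, being equality by that lemma. The most delicate point is the bookkeeping in the completion of squares, which must produce no leftover positive term. If a leftover appears, we would instead bound $\sup_{\tilde\theta}$ directly by the self-normalized form, using $\sup_{\tilde\theta}\{2\tri{\tilde\theta,a}-\nrm{\tilde\theta}^2_B\}=\nrm{a}^2_{B^\dagger}$ when $a\in\mathrm{range}(B)$.
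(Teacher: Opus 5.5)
Your argument is correct, and the bookkeeping closes exactly. With $\tilde x_t=(x_t,\mu_t)$ and $\tilde\theta=(\theta,-1)$ one has $(\mu_t-y_t)^2-(\tri{\theta,x_t}-y_t)^2=2(y_t-\mu_t)\tri{\tilde\theta,\tilde x_t}-\tri{\tilde\theta,\tilde x_t}^2$, so no leftover appears and your fallback is unnecessary.

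Where you differ from the paper is in how the dyadic quantity is reached. The paper cites \citet[Lemma~4]{rakhlin2014nonparametric}, whose bound already comes with Rademacher signs: an offset process $\sum_t 4\varepsilon_t\tri{(\theta,1),X_t}-\tri{(\theta,1),X_t}^2$ on an $\RR^{d+1}$-valued tree carrying the centering in the last coordinate. Relaxing $(\theta,1)$ to all of $\RR^{d+1}$ therefore lands directly on $4\cR^{\mathrm{dyadic}}_{d+1}(T)$, and $4\cR_{d+1}(T)$ is then a trivial upper bound.

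You instead keep the exact identity and obtain a general (non-dyadic) martingale with increments $y_t-\mu_t\in[-2,2]$. Rescaling gives $4\cR_{d+1}(T)$ first. You then use \cref{lem:dyadic-equivalence} to identify this with $4\cR^{\mathrm{dyadic}}_{d+1}(T)$; that lemma is proved independently by the Bellman/convexity recursion, so there is no circularity. The chain is thus traversed in the opposite order.

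What each route buys: yours is self-contained within the paper and makes explicit that the only losses are the relaxation of the last coordinate of $\tilde\theta$ and the factor $4$ from rescaling $[-2,2]$ to $[-1,1]$. The paper's is two lines but outsources the passage to Rademacher signs.

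To invoke the lemma cleanly, state the following. By backward induction, your nested $\sup_{x_t,p_t}\En_{y_t}$ value is at most $4F_1(0,0)$ for the recursion $F_t$ in that lemma's proof, taken in dimension $d+1$. This holds because each round's pair $\bigl(\tilde x_t,\ \text{law of }(y_t-\mu_t)/2\bigr)$ is one admissible choice in the supremum defining $F_t$. That proof also shows $F_1(0,0)=\cR^{\mathrm{dyadic}}_{d+1}(T)$.

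The alternative you sketched and set aside would not work as stated. A centered two-point law on $\{a,b\}$ with $a\neq-b$ is not a Rademacher sign times a predictable scale: the realized magnitude depends on which atom occurs, and the atoms do not have weight $1/2$. The correct form of that idea is the chord bound used in \cref{lem:dyadic-equivalence}, applied on a symmetric enclosing interval, where mean zero forces equal weights on the endpoints.
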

We remark that the actual upper bound in the proof is smaller than $\cR_{d+1}(T)$; for our purposes,
this is only important for $d=1$, which we treat separately.

\section{Self-normalized martingales and online prediction in the fully adversarial setting}
\label{sec:full-adversary}

In this section, we first discuss the one-dimensional case $d=1$, and then the case $d\ge 2$.
Remarkably, both regret and the self-normalized martingale exhibit very different behavior in these two
regimes.

\subsection{Dimension one: logarithmic self-normalized martingale and doubly-uniform regret}
\label{sec:1d}

In this section, we focus on the one-dimensional case $d=1$.
We show that (i) the dyadic self-normalized martingale admits $O(\log T)$ control in expectation, and
(ii) the minimax doubly-uniform regret in online linear regression is also $\Theta(\log T)$. Taken
together, these results settle the behavior of both objects of interest in dimension one.

We start from the probabilistic side by establishing a linear bound on the exponential moment of the
one-dimensional dyadic self-normalized martingale.

\begin{theorem}
\label{thm:selfnorm-exp}
For any dyadic martingale $X_1,\ldots,X_T$ and any $c\in(0,1/4]$,
\[
\En_{\varepsilon}\bigl[\exp(c R_T)\bigr]
\le
T\exp\Bigl(\frac{c}{1-2c}\Bigr).
\]
Consequently,
\[
\En_{\varepsilon}[R_T]
\le
\frac{1}{c}\log\left(T\exp\Bigl(\frac{c}{1-2c}\Bigr)\right)
=
\frac{\log T}{c}+\frac{1}{1-2c}.
\]
\end{theorem}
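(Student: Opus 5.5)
The plan is a potential-function (supermartingale-type) argument on the dyadic tree. We track $\Phi_t \ldef \exp(c R_t)$ with $R_t = S_t^2/V_t$ (and $R_t=0$ while $V_t=0$), and we control how much $\Phi_t$ can grow in conditional expectation per step. The dyadic structure makes the one-step computation exact. Given the past, write $S=S_{t-1}$, $V=V_{t-1}$, $x=X_t$. Then
\[
\En_{\varepsilon_t}\bigl[\Phi_t\bigr]
=\exp\Bigl(\tfrac{c(S^2+x^2)}{V+x^2}\Bigr)\cosh\Bigl(\tfrac{2cSx}{V+x^2}\Bigr)
\le \exp\Bigl(\tfrac{cS^2}{V+x^2}+\tfrac{cx^2}{V+x^2}+\tfrac{2c^2S^2x^2}{(V+x^2)^2}\Bigr),
\]
using $\cosh u\le e^{u^2/2}$. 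We have $cS^2/V - cS^2/(V+x^2)= cS^2x^2/(V(V+x^2))$. Since $2c/(V+x^2)\le 1/V$ for $c\le 1/2$, the quadratic term in $S$ is dominated by this decrease. Hence $\En_{\varepsilon_t}[\Phi_t]\le \Phi_{t-1}\exp(c\,x^2/(V+x^2))$, and in fact with a spare negative term of order $(1-2c)\,cS^2x^2/(V(V+x^2))$.

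The naive consequence is $\En[\Phi_T]\le \En\prod_t \exp(c X_t^2/V_t)$. Bounding each factor by $e^{c}$ gives $e^{cT}$, which is useless. Using $\sum_t X_t^2/V_t\lesssim 1+\log(V_T/V_{t_0})$ reintroduces exactly the scale dependence we want to avoid. The main obstacle is therefore converting the per-step multiplicative loss into an \emph{additive} loss, so that the total growth is a factor $T$ rather than a product.

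To get this, I would change the potential to something like $\Psi_t=\exp(cR_t)-\sum_{s\le t}(\text{increment})$, or equivalently prove a one-step inequality of the form
\[
\En_{\varepsilon_t}[\Phi_t]\le \Phi_{t-1}+\exp\Bigl(\tfrac{c}{1-2c}\Bigr),
\]
and then iterate to get $\En[\Phi_T]\le \Phi_{1}+ (T-1)e^{c/(1-2c)}\le T e^{c/(1-2c)}$. At the first step $R_1=\varepsilon_1^2 X_1^2/X_1^2\le 1$, so $\Phi_1\le e^{c}\le e^{c/(1-2c)}$.

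To establish the additive one-step bound, I would split into two cases using the spare negative term.
\begin{itemize}
\item If $R_{t-1}=S^2/V$ is large relative to $V/(V+x^2)$, the leftover $-(1-2c)cS^2x^2/(V(V+x^2))$ absorbs the gain $cx^2/(V+x^2)$, giving $\En[\Phi_t]\le\Phi_{t-1}$.
\item If $R_{t-1}$ is small, the whole exponent is bounded by an absolute constant. Optimizing over the ratio $x^2/(V+x^2)\in[0,1]$, I expect this constant to be $c/(1-2c)$, which gives the additive term.
\end{itemize}
Checking that these two regimes glue together with exactly the constant $c/(1-2c)$ under $c\le 1/4$ is the calculation I expect to be the crux. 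The bound for $\En[R_T]$ then follows from Jensen's inequality: $c\,\En[R_T]\le \log \En[e^{cR_T}]$.
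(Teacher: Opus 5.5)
Your proposal is essentially the paper's proof: the same one-step computation with $\cosh u\le e^{u^2/2}$, the same additive one-step inequality $\En[\Phi_t\mid\cF_{t-1}]\le \Phi_{t-1}+e^{c/(1-2c)}$ from a two-case split, and the same iteration followed by Jensen. The gluing you flag as the crux does work out. With $u=V/(V+x^2)$ and $R=S^2/V$, your exponent is at most $cR+c(1-u)\bigl(1-(1-2c)R\bigr)$. So for $R\ge 1/(1-2c)$ (an absolute threshold, not one depending on $V/(V+x^2)$) it is at most $cR$, and otherwise it is at most $c+2c^2R\le c/(1-2c)$.
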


Theorem~\ref{thm:selfnorm-exp} provides a homogeneous, scale-invariant control of the self-normalized
martingale in dimension one. In particular, it improves upon the scale-sensitive behavior suggested by
classical mixture-based bounds such as \eqref{eq:depena-selfnorm}: the right-hand side grows only
logarithmically with $T$ and requires no boundedness or moment assumptions on the predictable covariates
$(X_t)$ beyond measurability with respect to the dyadic filtration.

In light of the regret--martingale connection discussed above, the logarithmic behavior in
Theorem~\ref{thm:selfnorm-exp} suggests that $\log T$ is the correct scale for doubly-uniform regret in
one dimension. We make this precise by giving a matching upper bound via an explicit procedure.

\begin{theorem}
\label{thm:1d-reg-upper}
Suppose that $d=1$ and $|y_t|\le m$ almost surely.
Then there exists an algorithm (\cref{alg:meta}) that achieves deterministically
$\Reg(T)\lesssim m^2\log T$.
\end{theorem}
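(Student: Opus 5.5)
The plan is to build \cref{alg:meta} as an exponentially weighted aggregation, over $O(T)$ base experts, of clipped Vovk--Azoury--Warmuth forecasters. Each expert works at its own covariate scale. The key observation is that the comparator's norm $|\theta|$ only matters on the rounds where $|\theta x_t|\lesssim m$. On every other round the comparator already pays a constant loss of order $m^2$.

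\textbf{Step 1 (charging large rounds).} Fix any $\theta\in\RR$. Call a round \emph{large} if $|\theta x_t|>2m$ and \emph{small} otherwise. On a large round the comparator pays $(\theta x_t-y_t)^2\ge m^2$. Every base expert and the aggregate clip their predictions to $[-m,m]$, so they pay at most $4m^2$. Hence large rounds contribute nonpositive regret, up to a constant factor absorbed by the charging: summing, learner loss on large rounds is at most $4\times$ comparator loss there. To make this additive rather than multiplicative, I would instead compare against the clipped comparator $\clip_m(\theta x_t)$, which is no worse than $\theta x_t$ pointwise. Rounds where $\theta x_t$ is out of range then reduce to comparing with a constant $\pm m$ prediction, which is itself a bounded-comparator problem.

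\textbf{Step 2 (one good expert).} Let $B=\max\{|x_t|:\text{round $t$ is small}\}$, so that $|\theta|\le 2m/B$. Consider VAW run only on rounds with $|x_t|\le B$, with regularization $\lambda=B^2/T$, and clipped. The VAW bound \eqref{eq:vaw-regret} on those rounds gives regret at most
\[
\lambda\theta^2 + \log\bigl(1+T B^2/\lambda\bigr)\lesssim m^2/T + m^2\log(1+T^2),
\]
after rescaling by $m^2$. This is $O(m^2\log T)$ and is fully scale-free, because $B$ cancels. The rounds with $|x_t|>B$ are large rounds, handled by Step 1.

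\textbf{Step 3 (aggregation).} The value $B$ is unknown, but it equals $|x_s|$ for some $s\in[T]$. I would therefore index experts by $s$: expert $s$ uses $B=|x_s|$. The aggregate is an exponential-weights average with $\eta=1/(8m^2)$. Clipped square loss on $[-m,m]$ is $\eta$-exp-concave, so the aggregation costs at most $8m^2\log T$ deterministically.

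\textbf{Main obstacle.} Expert $s$ only learns its scale at time $s$. Small rounds before $s$ with $|x_t|\le B$ could then be mispredicted, costing order $m^2$ each. The first fix I would try is to let expert $s$ refit retroactively on all past rounds with $|x_t|\le|x_s|$, and before time $s$ to follow the expert indexed by the current running maximum of admissible $|x_t|$. I would then bound the switching cost by noting that the running maximum scale, restricted to small rounds, changes at most $T$ times. Each change should cost only $O(m^2)$ amortized through the telescoping $\log\det$ term of VAW.

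If this fails, the fallback is a sleeping-experts formulation: expert $s$ is awake only from time $s$ onward. I would then argue that small rounds before $s$ can be absorbed into an expert with smaller index whose scale is within the telescoping log-ratio budget.
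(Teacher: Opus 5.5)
\textbf{There is a genuine gap, and it is the obstacle you flag yourself.} Expert $s$ learns its scale $B=|x_s|$ only at time $s$, and none of your repairs controls the rounds before $s$.

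Retroactive refitting cannot undo losses already incurred.

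Following the running-maximum expert before time $s$ is essentially VAW with regularization tied to the running maximum. That has linear regret under geometric growth. Take $x_t=2^t$ and choose $y_t=-m\,\mathrm{sign}(\widehat{y}_t)$ against that rule. Then $|\widehat{y}_t|\approx m/4$ on every round, while the least-squares comparator has $|\widehat{\theta}x_t|\approx m2^{t-T-1}$. Every round is small, so $B=x_T$. Your designated expert is therefore $s=T$, and its regret is of order $m^2T$.

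Your accounting does not close either. $O(m^2)$ per change with up to $T$ changes is $O(m^2T)$. The telescoping $\log\det$ term is $\log(V_T/V_1)$, which is exactly the scale-dependent quantity; it is $\Theta(T)$ in this example. Sleeping-expert or fixed-share switching costs order $m^2\log T$ per switch.

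Step 1 has the same dependence. The multiplicative charging (learner loss at most four times the comparator's on large rounds) gives no additive bound, since large rounds can be a constant fraction of $[T]$. The clipped-comparator repair needs two sign experts, because $\clip_m(\theta x_t)=m\,\mathrm{sign}(\theta)\,\mathrm{sign}(x_t)$ on large rounds. More importantly, it requires knowing which rounds are large, i.e., knowing $B$ in advance.

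\textbf{The missing idea, which drives the paper's proof, is an a priori bound on the comparator.} Cauchy--Schwarz on $\widehat{\theta}=\sum_t x_ty_t/\sum_t x_t^2$ gives $|\widehat{\theta}|\le m\sqrt{T}/\max_t|x_t|$. Hence on every round with $|x_t|\le \max_t|x_t|/T^2$, the comparator predicts at most $mT^{-3/2}$ in absolute value: there it is essentially the zero forecaster.

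The paper uses this as follows. It restarts whenever the running maximum of $|x_t|$ enters a new band $[M^k,M^{k+1})$ with $M=T^2$. In each block it runs a two-expert mixture of $0$ and clipped VAW with $\lambda=M^{2k}/T$, putting prior weight of order $1/T$ on VAW. Each block then loses at most $O(1/T)$ more than the zero forecaster (super-efficiency against $0$). So the up to $T$ earlier blocks cost $O(1)$ in total, and only the last two blocks pay $O(m^2\log T)$ against $\widehat{\theta}$, including the $\log(1/\text{prior})$ price.

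Your expert indexing can be rescued by the same bound, with no small/large split:
\begin{itemize}
\item Take $s$ to be the \emph{first} round with $|x_s|\ge \max_t|x_t|/T^2$, rather than the round attaining $B$.
\item Let expert $s$ predict $0$ before time $s$, and clipped VAW with $\lambda=x_s^2/T$ from time $s$ on.
\item The rounds before $s$ then cost at most $2m^2/\sqrt{T}$.
\item One has $\lambda\widehat{\theta}^{\,2}\le m^2$ and $\sum_{t\ge s}x_t^2/\lambda\le T^6$, so VAW's regret from time $s$ on is $O(m^2\log T)$.
\item Aggregating these $T$ experts and the all-zero expert by exponential weights with $\eta=1/(8m^2)$ adds only $8m^2\log(T+1)$.
\end{itemize}
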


Complementarily, the minimax lower bound of order $\Omega(\log T)$ follows from
\citet[Theorem~7]{gaillard2019uniform} (adapted from \citet[Theorem~2]{v-cos-01}).

\begin{proposition}
\label{prop:lower-bound-1d}
In the setup of \Cref{thm:1d-reg-upper} with $T\ge 10$ and $m=1$, there exists a dyadic martingale such
that $\En[R_T]\gtrsim \log T$.
\end{proposition}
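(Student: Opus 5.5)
The plan is to go through the regret side rather than build a tree by hand. Concretely, I would take the minimax lower bound $\min_{\Alg}\max_{\Env}\En\sups{\Env,\Alg}[\Reg(T)]\gtrsim\log T$ for $d=1$, $m=1$, $T\ge 10$ from \citet[Theorem~7]{gaillard2019uniform}, and transfer it to a dyadic martingale with the minimax (Rakhlin--Sridharan) upper bound behind \cref{lem:rakhlinsridharan}.

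The first step is to make that upper bound precise in dimension one. In the sequential-Rademacher form of the minimax value for square loss against $\{x\mapsto\theta x:\theta\in\RR\}$, the bound is $\sup$ over trees of
\[
\En_\varepsilon\sup_{\theta}\Bigl\{\sumt 2\varepsilon_t\theta X_t-\theta^2X_t^2\Bigr\}
\]
(up to constants), where in principle both the covariate tree and an outcome tree enter. If the outcome tree can be taken to be $\pm1$ multiplied into $X_t$, the supremum over $\theta$ is exactly $S_T^2/V_T=R_T$ for a one-dimensional dyadic tree. That would give
\[
\log T\lesssim\min_{\Alg}\max_{\Env}\En[\Reg(T)]\le C\,\cR_1^{\mathrm{dyadic}}(T)
\]
for a universal $C$. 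This is what the remark after \cref{lem:rakhlinsridharan} suggests: the extra coordinate (the $d+1$) only absorbs the outcome-dependent offset, which is harmless here.

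The second step is to extract a single tree. Since $\cR_1^{\mathrm{dyadic}}(T)$ is a supremum over finitely parametrized trees, there is a tree $X$ with $\En_\varepsilon[R_T]\ge\frac12\cR_1^{\mathrm{dyadic}}(T)\gtrsim\log T$. Rescaling $X$ changes nothing, and the outcomes $\varepsilon_t$ satisfy $m=1$, so this tree is the required dyadic martingale.

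The main obstacle is the first step: showing that the symmetrization really yields $\cR_1$ and not $\cR_2$. The offset term $\theta^2X_t^2$ has to be symmetrized consistently with the linear term, and a nonconstant outcome tree has to be removed.

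If that reduction fails, the fallback is a direct construction, guided by the exact conditional drift
\[
\En\bigl[R_t-R_{t-1}\mid\varepsilon_{1:t-1}\bigr]=\frac{X_t^2\,(1-R_{t-1})}{V_{t-1}+X_t^2},
\]
which shows that $R$ drifts up only while $R_{t-1}<1$. Because the drift is nonpositive once $R_{t-1}\ge1$, mean drift alone cannot produce $\log T$; the gain has to come from the spread of $R_t$ across paths, in the spirit of Vovk's Bayesian lower bound. I would try geometrically growing, path-dependent scales $X_t=\alpha^{t}\,g_t(\varepsilon_{1:t-1})$, with each new scale aligned with the sign of the current $S_{t-1}$. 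The aim is for each dyadic block of rounds to contribute a constant to $\En[R_T]$, giving $\log_2 T$ blocks in total. This is where I expect most of the calculation to lie.
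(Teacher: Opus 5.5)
The step you flag as the main obstacle is not a technicality. It carries the whole content of the proposition, and your proposal gives no argument for it.

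\cref{lem:rakhlinsridharan} only gives $\min_{\Alg}\max_{\Env}\En[\Reg(T)]\le 4\cR_2(T)$ when $d=1$. By \cref{prop:lower}, $\cR_2(T)\ge(1-\eps^2)T$, so the chain $\log T\lesssim 4\cR_2(T)$ says nothing about $\cR_1$. The extra coordinate is not ``harmless'': it holds the centering, i.e.\ the conditional means $\mu_t$ of the outcomes. Restricting it to zero lowers the supremum, so you cannot drop it from an upper bound.

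Concretely, after the standard minimax swap the regret value is $\sup\En\bigl[(\sum_t x_ty_t)^2/\sum_t x_t^2-\sum_t\mu_t^2\bigr]$ over \emph{non-centered} processes. Writing $y_t=\mu_t+\xi_t$, the split $(a+b)^2\le 2a^2+2b^2$ leaves an uncancelled $+\sum_t\mu_t^2$. Worse, take the classical Bayesian instance that forces $\Omega(\log T)$ regret: $x_t\equiv 1$ and $\pm1$ outcomes with a uniformly random bias. There the zero-mean part $(\sum_t x_t\xi_t)^2/\sum_t x_t^2$ has expectation at most $1$. All of the $\log T$ comes from the correlation between the noise $\xi_t$ and the drifting posterior means $\mu_t$, under a fixed design. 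So the regret lower-bound instance contains no self-normalized martingale of size $\log T$; one has to build it with adaptive covariates. In fact, given \cref{thm:1d-reg-upper} and the known $\Omega(\log T)$ regret lower bound, the inequality $\min_{\Alg}\max_{\Env}\En[\Reg(T)]\lesssim\cR_1(T)$ you need is equivalent to the proposition itself. Invoking it is circular.

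Your fallback is only a heuristic: no tree is specified and nothing is computed. Aligning the sign of $X_t$ with $S_{t-1}$ does nothing. Since $\varepsilon_t$ is symmetric and independent of the past, the conditional law of $R_t$ depends on $X_t$ only through $X_t^2$.

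Your drift identity is correct. It is best read as $\En[R_t\mid\varepsilon_{1:t-1}]=u_tR_{t-1}+(1-u_t)$ with $u_t=V_{t-1}/V_t$. The adversary can only move the conditional mean toward $1$, so a large expectation needs rare paths with large $R$ that are then \emph{preserved}.

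The missing idea is ``restart until a rare success, then freeze'', which is what the paper does:
\begin{itemize}
\item Use blocks of length $n=\lfloor\frac12\log T\rfloor$, with a constant covariate inside each block.
\item Whenever the previous block contained a $-1$, multiply the scale by $M=2T/n$. The past then becomes negligible and a fresh attempt starts.
\item As soon as a block of $n$ consecutive $+1$'s occurs, set all later covariates to $0$.
\end{itemize}
If block $i$ is the first all-$+1$ block, then $|S_T-M^in|\le M^{i-1}T$ and $V_T\lesssim nM^{2i}$, hence $R_T\ge n/8$. There are about $T/n$ independent attempts, each succeeding with probability $2^{-n}\ge T^{-1/2}$. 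So this event has probability bounded below by an absolute constant, which gives $\En[R_T]\gtrsim n\asymp\log T$. The $\log T$ is produced in one jump on a constant-probability event, not accumulated as a constant per dyadic block.
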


Together, Theorem~\ref{thm:1d-reg-upper} and Proposition~\ref{prop:lower-bound-1d} yield the claimed
$\Theta(\log T)$ characterization of doubly-uniform regret in dimension one, aligning with the
logarithmic self-normalized control in Theorem~\ref{thm:selfnorm-exp}.

\subsection{Dimension two and higher: linear lower bounds}
\label{sec:lower}

We now turn to $d\ge 2$.
In sharp contrast to the one-dimensional case, the self-normalized \emph{vector} martingale can grow
linearly in the worst case.
Through \eqref{eq:regret-lb-selfnorm}, this implies that doubly-uniform regret cannot be sublinear under
a fully adversarial environment.

\begin{theorem}
\label{prop:lower}
Let $T\ge 1$ be any integer. When $d\ge 2$, for any $\eps\in(0,1)$, there exists a dyadic martingale
$X_1,\ldots,X_T$ such that
\[
\En\left[\nrm{S\ind{T}}_{\covt[T]^\dagger}^2\right]\ge (1-\eps^2)T.
\]
In particular, by \eqref{eq:regret-lb-selfnorm}, there exists an environment such that for any algorithm,
\[
\En\brk*{\Reg(T)}
\ge
\En\left[\nrm{S\ind{T}}_{\covt[T]^\dagger}^2\right]
\ge
(1-\eps^2)T.
\]
\end{theorem}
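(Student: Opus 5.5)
We construct an explicit dyadic tree in $d=2$ for which the sign vector $(\varepsilon_1,\ldots,\varepsilon_T)$ lies, up to a small uniform error, in the column span of the design. For $d>2$ we pad with zero coordinates, which changes neither $S_T^\top \cov_T^\dagger S_T$ nor the argument.

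\textbf{Step 1: reduce to a fitting problem.} With $y_t=\varepsilon_t$, the completion-of-squares identity \eqref{eq:selfnorm-identity-game} gives, pathwise,
\[
\nrm{S_T}_{\cov_T^\dagger}^2 = T-\inf_{\theta\in\RR^2}\sumt\bigl(\tri{\theta,X_t}-\varepsilon_t\bigr)^2 .
\]
So it suffices to build predictable $X_t$ for which, on every sign path, some path-dependent $\theta$ satisfies $|\tri{\theta,X_t}-\varepsilon_t|\le\eps$ for all $t$. Note that $\theta$ may depend on the whole sign sequence, while each $X_t$ may only depend on $\varepsilon_1,\ldots,\varepsilon_{t-1}$.

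\textbf{Step 2: the construction.} Fix $K\ge 1+1/\eps$. Encode the signs in a base-$K$ expansion:
\[
\Sigma_{<t}\ldef\sum_{i<t}\varepsilon_iK^{-i},\qquad \Sigma\ldef\sum_{i\le T}\varepsilon_iK^{-i}.
\]
Define
\[
X_t\ldef\bigl(-K^{t}\Sigma_{<t},\;K^{t}\bigr),
\]
which depends only on $\varepsilon_1,\ldots,\varepsilon_{t-1}$, so $X$ is a valid dyadic tree. Choose $\theta=(1,\Sigma)$. Then
\[
\tri{\theta,X_t}=K^{t}\bigl(\Sigma-\Sigma_{<t}\bigr)=\varepsilon_t+\sum_{t<i\le T}\varepsilon_iK^{t-i}.
\]
Hence
\[
|\tri{\theta,X_t}-\varepsilon_t|\le\sum_{j\ge1}K^{-j}=\frac{1}{K-1}\le\eps .
\]

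\textbf{Step 3: conclude.} Plugging this $\theta$ into the infimum gives $\inf_\theta\sumt(\tri{\theta,X_t}-\varepsilon_t)^2\le T\eps^2$ on every path. Therefore $\nrm{S_T}_{\cov_T^\dagger}^2\ge(1-\eps^2)T$ deterministically, and in particular in expectation. The regret statement then follows directly from \eqref{eq:regret-lb-selfnorm}: the environment reveals $x_t=X_t$ and $y_t=\varepsilon_t$, which satisfies \eqref{eq:online-unbiased}.

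\textbf{Main obstacle.} The crux is Step 2. We need a two-dimensional predictable design whose span can track an arbitrary sign sequence revealed one bit at a time. Geometric scale separation, via the base-$K$ encoding, resolves this: the second coordinate of $\theta$ stores all the signs, and the first coordinate of $X_t$ cancels the already-revealed prefix. Covariates that are bounded by $1$ can be obtained afterwards by rescaling, as noted after \Cref{lem:dyadic-equivalence}.
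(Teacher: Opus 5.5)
Your proof is correct, and it takes a genuinely different route from the paper's.

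The paper uses a potential computation for the regularized Gram matrix $\lcovt=\covt+I$. It picks $X_{t+1}=r\,\lcovt^{1/2}e_t$, where $e_t$ is a unit vector orthogonal to $\lcovt^{-1/2}S_t$; this is where $d\ge 2$ enters. By Sherman--Morrison the cross term then vanishes, so $\nrm{S_t}_{\lcovt^{-1}}^2$ grows by exactly $r^2/(1+r^2)$ per round. It then passes to $\covt[T]^\dagger$ using $S_T\in\mathrm{range}(\covt[T])$ and takes $r=1/\eps$.

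You instead use the completion-of-squares identity to rewrite the quantity as $T$ minus a least-squares residual. You then give a closed-form two-dimensional tree and an explicit path-dependent comparator $\theta=(1,\Sigma)$ that reproduces every sign to within $1/(K-1)$.

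What your route buys:
\begin{itemize}
\item It avoids regularization, Sherman--Morrison, and the range comparison altogether.
\item It gives the bound on every sign path, not only in expectation. The paper's construction is in fact pathwise too, since its cross term vanishes identically, but the paper only records the expectation.
\item It makes the mechanism transparent: geometric scale separation lets a predictable design place the sign vector almost in its column span.
\item After rescaling the covariates into the unit ball, your comparator has norm of order $K^{T}$. This illustrates the paper's remark that the obstruction comes from unbounded comparators rather than unbounded covariates.
\end{itemize}

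What the paper's route buys: its greedy orthogonal-direction argument is coordinate-free, needs no explicit comparator, and fits the elliptical-potential calculus the paper uses for VAW.
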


\begin{proof}[Proof sketch]
We construct an adaptive dyadic martingale that injects a constant amount of self-normalized energy at
every step. Let $(\varepsilon_t)_{t\ge1}$ be i.i.d.\ Rademacher variables, let
$S_t=\sum_{i\le t}\varepsilon_i X_i$, and define the regularized matrix
\[
\lcovt \ldef \sum_{i\le t} X_i X_i^\top + \lambda I,
\]
for some fixed $\lambda>0$. Using the Sherman--Morrison formula and conditioning on the past, one checks
that
\[
\En\left[\|S_{t+1}\|^2_{\lcovt[t+1]^{-1}}\mid\cF_t\right]
=
\|S_t\|^2_{\lcovt^{-1}}
+
\frac{\|X_{t+1}\|^2_{\lcovt^{-1}}
-\langle X_{t+1},\lcovt^{-1}S_t\rangle^2}
{1+\|X_{t+1}\|^2_{\lcovt^{-1}}}.
\]
We choose $X_{t+1}=r\,\lcovt^{1/2}e_t$, where $e_t$ is any unit vector orthogonal to
$\lcovt^{-1/2}S_t$; this is always possible for $d\ge2$.
This choice kills the cross term and ensures $\|X_{t+1}\|^2_{\lcovt^{-1}}=r^2$, so the conditional
increment is the constant $r^2/(1+r^2)$.
Iterating yields $\En[\|S_T\|^2_{\lcovt[T]^{-1}}]=Tr^2/(1+r^2)$.
Moreover, $S_T\in\mathrm{range}(\covt[T])$, so $\nrm{S_T}_{\covt[T]^\dagger}^2\ge \nrm{S_T}_{\lcovt[T]^{-1}}^2$,
and hence $
\En\left[\nrm{S_T}_{\covt[T]^\dagger}^2\right]\ge \frac{Tr^2}{1+r^2}.
$
Taking $r^2=(1-\eps^2)/\eps^2$ completes the lower bound (for any fixed $\lambda>0$).
\end{proof}

Note that since the ratio is homogeneous, the construction in the proof can be rescaled so that
$\|X_t\|\le 1$ deterministically.

\section{Online prediction with smooth environment}
\label{sec:upper}

Even though the worst-case lower bound looks grim, the picture brightens if the environment is forced to
``add randomness''. We formalize this with a \emph{smoothness} condition that caps how concentrated each
$x_t$ can be relative to a fixed base measure $\mu$ \citep{block2022smoothed}.

\begin{assumption}[Smoothness]\label{asmp:smooth}
There exists a probability measure $\mu$ on $\RR^d$ and a parameter $\Ccov\ge 1$ such that for every
round $t$, given the partial history $\cH_{t-1}^x=\crl{x_1,\ldots,x_{t-1}}$, the conditional law of
$x\ind{t}$ is $P_t(\cdot\mid \cH_{t-1}^x)$ and satisfies
\[
\frac{\mathrm d P_t(\cdot\mid \cH_{t-1}^x)}{\mathrm d\mu}(x)\le \Ccov
\qquad\text{for $\mu$-a.e.\ }x\in\RR^d,
\]
for every history $\cH_{t-1}^x$.
\footnote{Here we assume smoothness with respect to the partial history $\cH^x_{t-1}$. While it may be
more natural to assume smoothness given the full history
$\cH_{t-1}=\crl{(x_s,y_s,\yhat_s)}_{s<t}$, requiring smoothness conditional on the partial history is
weaker.}
\end{assumption}

This assumption limits how concentrated the conditional law of $x_t$ can be; for instance, when $\mu$
is non-atomic it rules out choosing $x_t$ deterministically, as in the lower-bound construction of
\cref{prop:lower}. The condition is trivially satisfied when $\cX$ is finite (with
$\mu=\Unif(\cX)$ and $\Ccov\le |\cX|$). When $\cX$ is infinite, however, it can be a fairly strong
assumption on the environment, and can make online prediction significantly easier.

To provide more intuition, \citet{haghtalab2024smoothed} show that under \cref{asmp:smooth},
$x_1,\ldots,x_T$ can be coupled with a subsequence of i.i.d.\ random vectors
(\cref{lem:smooth-to-iid}), echoing the ``smoothed analysis'' philosophy in algorithms.

\subsection{The Vovk--Azoury--Warmuth (VAW) algorithm under the smoothness assumption}

For upper bounds under \cref{asmp:smooth}, we use the Vovk--Azoury--Warmuth (VAW) predictor, a classic
forecaster for online linear regression. The regularized version is well known, but since both the
comparator and the covariates may be unbounded here, standard analyses that rely on a fixed regularizer
do not account for smoothness. We therefore study the \emph{unregularized} VAW (\cref{alg:VAW}) and
tailor the analysis to the smooth setting.

\begin{algorithm}[h]
\begin{algorithmic}[1]
\FOR{$t=1,2,\cdots,T$}
\STATE Set
$
\hth\ind{t}\in \argmin_{\theta\in\RR^d}\ \tri{\theta,x\ind{t}}^2+\sum_{i<t}\bigl(\tri{\theta,x\ind{i}}-y\ind{i}\bigr)^2.
$
\STATE Predict $\yhat\ind{t}=\tri{\hth\ind{t},x\ind{t}}$.
\ENDFOR
\end{algorithmic}
\caption{Vovk--Azoury--Warmuth (VAW) predictor}
\label{alg:VAW}
\end{algorithm}

We prove the following guarantee for VAW through an (almost) purely combinatorial argument, rather than
the standard elliptical-potential analysis.

\begin{theorem}\label{thm:VAW-smooth}
Under \cref{asmp:smooth}, assuming $y_t\in[-1,1]$ for all $t\in[T]$, the VAW predictor in
\cref{alg:VAW} achieves the following regret for any $\delta \in (0, 1)$, \whp:
\[
\Reg(T)\lesssim \left(\sqrt{d\Ccov T\log(T/\delta)}+\log(1/\delta)\right).
\]
\end{theorem}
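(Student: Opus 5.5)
We first reduce the regret of unregularized VAW to a sum of leverage-type quantities, and then control those quantities combinatorially under smoothness.

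\textbf{Step 1: deterministic regret identity.} Write $G_t\ldef \sum_{i\le t}x\ind{i}x\ind{i}\tp$. The standard VAW analysis, run with $\lambda=0$ and pseudoinverses, expresses the regret against any $\theta$ as a sum of per-round terms of the form $y\ind{t}^2\, x\ind{t}\tp G_t^\dagger x\ind{t}$, plus correction terms. The correction terms arise on rounds where $x\ind{t}\notin \mathrm{range}(G_{t-1})$; on such rounds VAW predicts $0$. The key observation is that the leverage $\ell_t\ldef x\ind{t}\tp G_t^\dagger x\ind{t}$ lies in $[0,1]$ and equals $1$ exactly on these "new direction" rounds, of which there are at most $d$. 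Since $|y_t|\le1$, we aim for the bound
\[
\Reg(T)\lesssim \sumt \ell_t .
\]
This replaces the log-det potential, which is not scale-invariant. As a sanity check, the identity should reproduce $\sum_t \ell_t=\tr(G_T^\dagger G_T)=\mathrm{rank}$ only in the batch sense; the online sum $\sum_t \ell_t$ can be as large as $T$, consistent with \cref{prop:lower}.

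\textbf{Step 2: leverage under exchangeability.} For i.i.d.\ (or exchangeable) points, the last point's leverage in a set of $t$ points has expectation $\mathrm{rank}/t\le d/t$, because the leverages within a set sum to the rank. This is the combinatorial heart of the argument: for $t$ points $z_1,\ldots,z_t$, symmetrizing over which one arrives last gives an average of $\le d/t$. Summing over $t$ gives $d\log T$ in the i.i.d.\ case.

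\textbf{Step 3: transfer to smooth sequences.} Using the coupling of \cref{lem:smooth-to-iid} (Haghtalab et al.), each $x\ind{t}$ can be realized as one of roughly $k\approx \Ccov\log(T/\delta)$ fresh i.i.d.\ draws from $\mu$. Alternatively, one can argue directly: conditional on the past, $\En[\ell_t\mid\cH_{t-1}]\le \Ccov\,\En_{z\sim\mu}[\,\cdot\,]$. The leverage of $x\ind{t}$ with respect to $G_t$ is at most its leverage with respect to the matrix built from a subset of past points. Monotonicity helps here, since adding points only decreases leverage. We therefore compare against the coupled i.i.d.\ pool $Z$ of size about $\Ccov T\log(T/\delta)$, all of whose points are among or dominated by the realized ones.

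The main obstacle is this step. The adversary chooses which pool points become actual $x$'s, so leverage cannot be computed against only the chosen points. We would handle it by bounding, for a fixed window of the pool, the number of pool points with leverage above $\eta$ relative to the earlier pool points; by Step 2 this count is about $d\log(\cdot)/\eta$. Rounds with $\ell_t>\eta$ then number at most $\Ccov d\log(T/\delta)/\eta$, and the remaining rounds contribute at most $\eta T$. Optimizing over $\eta$ gives $\sqrt{d\Ccov T\log(T/\delta)}$.

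\textbf{Step 4: high probability.} Apply Freedman's inequality to the martingale differences $\ell_t-\En[\ell_t\mid\cH_{t-1}]$, which lie in $[-1,1]$. This costs an additive $\log(1/\delta)$, plus a term absorbed into the leading one, and matches the stated form of the bound.
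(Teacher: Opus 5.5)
Your Step 1 target $\Reg(T)\le\sum_t \ell_t$ is exactly what the paper proves (\cref{prop:VAW}, even with $y_t^2\ell_t$). Your Step 2 computation (expected leverage at most $d/i$ for the last of $i$ exchangeable points) is the same backward argument the paper uses. The gap is in Step 3.

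Your claim that the number of rounds with $\ell_t>\eta$ is at most about $\Ccov d\log(T/\delta)/\eta$ is false. For $\eta=1/2$, $d=1$, $\Ccov=2$ it would allow only $O(\log(T/\delta))$ such rounds. But the tightness construction in the appendix (the $\Omega(\sqrt{(C-1)T}\wedge T)$ lemma) is a $2$-smooth environment in which the expected number of rounds with leverage at least $1/2$ is of order $\sqrt{T}$. Your single-threshold split $\eta T+\Ccov d\log(T/\delta)/\eta$ lands on the right final rate only because it wastes most of the (false) strength of that count. Applied at every threshold, the count would give a polylogarithmic bound.

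The root cause is the obstacle you flagged but did not resolve. On the coupling event the realized points are a \emph{subset} of the pool, not the other way around. So the Gram matrix of the pool prefix dominates $G_t$, and a point's leverage relative to the pool prefix is \emph{smaller} than $\ell_t$. Counting pool points of high pool-leverage (which is indeed about $(d/\eta)\log n$) therefore bounds nothing about realized rounds, since the adversary may discard pool points at will.

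The missing idea is to make high leverage an intrinsic property of a subsequence. Call $(z_1,\dots,z_k)$ $r$-bad if each $z_i$ has leverage at least $r$ with respect to $\sum_{j\le i}z_jz_j^\top$, i.e., relative to its own prefix only. Monotonicity, used in the correct direction, shows that the rounds $\{t:\ell_t\ge r\}$ form an $r$-bad subsequence of $x$. Hence they form one of the i.i.d.\ pool of size $n\approx T\Ccov\log(T/\delta)$, so you must bound the longest $r$-bad subsequence of an i.i.d.\ sequence. Because the adversary effectively chooses the subsequence, this needs a union bound over all $\binom{n}{k}$ index sets. That union bound is played against $\PP(\text{a fixed $k$-tuple is $r$-bad})\le\prod_{i\le k}\min\{1,d/(ri)\}\le (ed/(rk))^k$, which is your Step 2 applied conditionally while peeling points off in reverse order. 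This gives a longest bad subsequence of order $\sqrt{nd/r}+\log(1/\delta)$, with $\sqrt{n}$ rather than $\log n$; that is the price of selection. Integrating over $r\in(0,1]$ via $\sum_t\ell_t\le\int_0^1\#\{t:\ell_t\ge r\}\,dr$ yields $\sqrt{d\Ccov T\log(T/\delta)}$.

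That bound holds pathwise on the coupling's success event, so your Freedman step is not needed. Your ``direct'' route $\En[\ell_t\mid\cH_{t-1}]\le\Ccov\En_{z\sim\mu}[\cdot]$ does not help either. There the Gram matrix is built from adversarially selected points, and exchangeability is lost.
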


We briefly indicate how the proof goes. It is standard to reduce the regret analysis of VAW-type
algorithms to upper bounding the sum $\sumt \xt\tp \covt\inv \xt$; in the usual analysis this leads to
a dependence on the magnitudes of the $x_t$'s. In contrast, we use smoothed-analysis ideas. To control
$\sumt \xt\tp \covt\inv \xt$, we consider the longest subsequence
$x_{t_1},\ldots,x_{t_k}$ such that $\nrm{x_{t_j}}_{\cov_{t_j}^\dagger}^2\ge r$ for a fixed threshold
$r>0$, and then use the coupling result from \cref{lem:smooth-to-iid} to reduce to an i.i.d.\ sequence.
Compare this with the standard VAW bound \eqref{eq:vaw-regret}, where a regularization parameter
$\lambda$ is essential (one cannot take $\lambda\to 0$ without making the bound trivial), and the
resulting regret bound depends explicitly on $\max_{t\le T}\|x_t\|$.

\section{Bounding self-normalized martingales via regret analysis: high probability results}
\label{sec:selfnorm-via-regret}

We are now back in the setup of self-normalized martingales, and we re-derive a self-normalized
concentration inequality by combining (i) a {deterministic} regret bound for an online regression
algorithm and (ii) a {stochastic} exponential supermartingale coming from the conditional sub-Gaussian
noise assumption. Such a reduction is standard and is a key tool in \citep{rakhlin2017equivalence}.

Let $(\cF_t)_{t\ge 0}$ be a filtration such that, for each round $t$, the covariate $X_t$ and the
prediction $\hat y_t$ are $\cF_{t-1}$-measurable, and $Y_t$ is then revealed.
Assume that $(Y_t)_{t\ge 1}$ is a martingale difference sequence with conditionally sub-Gaussian
increments: for some $\sigma>0$,
\begin{equation}\label{eq:cond-subg-sec}
    \mathbb{E}\left[Y_t\mid \cF_{t-1}\right]=0
    \qquad\text{and}\qquad
    \mathbb{E}\left[\exp(\alpha Y_t)\mid \cF_{t-1}\right]\le \exp\left(\frac{\alpha^2\sigma^2}{2}\right),
\end{equation}
for all $\alpha \in \mathbb{R}$ and $t \ge 1$.
Fix a deterministic $0\prec \Gamma\in\R^{d\times d}$ and define
$S_t \ldef \sum_{i=1}^t X_i Y_i$ and $\cov_t \ldef \sum_{i=1}^t X_i X_i^\top$.
As above, completion of squares yields
\begin{equation}\label{eq:selfnorm-identity-sec}
    \sum_{t=1}^T Y_t^2
    -\inf_{\theta\in\R^d}\left\{\sum_{t=1}^T\bigl(\tri{\theta,X_t}-Y_t\bigr)^2+\theta^\top\Gamma\theta\right\}
    \;=\;\nrm{S_T}_{(\cov_T+\Gamma)^{-1}}^2,
\end{equation}
which equivalently, for \emph{any} sequence of predictions $(\hat y_t)_{t=1}^T$, can be rewritten as
\begin{equation}\label{eq:regret-decomp-sec}
    \nrm{S_T}_{(\cov_T+\Gamma)^{-1}}^2
    =
    \Reg_T(\hat y)
    +\sum_{t=1}^T\bigl(2\hat y_t Y_t-\hat y_t^2\bigr),
\end{equation}
where the last term is the one we will control stochastically using \eqref{eq:cond-subg-sec}.
The next simple result shows that this term admits a sharp high-probability bound.

\begin{lemma}\label{lem:exp-supermg}
Under the sub-Gaussian assumption \eqref{eq:cond-subg-sec}, for any predictable sequence
$(\hat y_t)_{t=1}^T$, the process
$
M_t \ldef \exp\left(\frac{1}{2\sigma^2}\sum_{i=1}^t \bigl(2\hat y_i Y_i-\hat y_i^2\bigr)\right)
$
is a nonnegative supermartingale with $\mathbb{E}[M_t]\le 1$ for all $t$.
In particular, with probability at least $1-\delta$,
\[
\sum_{t=1}^T\bigl(2\hat y_t Y_t-\hat y_t^2\bigr) \le 2\sigma^2\log(1/\delta),
\]
and the same bound extends to stopping times.
\end{lemma}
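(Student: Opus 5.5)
The plan is to show directly that $M_t$ is a nonnegative supermartingale by computing its one-step conditional expectation, and then to apply Ville's (or Markov's) inequality.

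Nonnegativity is immediate, since $M_t$ is an exponential. For the supermartingale property, write $M_t = M_{t-1}\exp\bigl(\frac{1}{2\sigma^2}(2\hat y_t Y_t-\hat y_t^2)\bigr)$. Both $M_{t-1}$ and $\hat y_t$ are $\cF_{t-1}$-measurable: $M_{t-1}$ depends only on $(\hat y_i,Y_i)_{i<t}$, and $\hat y_t$ is predictable. Conditioning on $\cF_{t-1}$ and applying the sub-Gaussian bound \eqref{eq:cond-subg-sec} with $\alpha=\hat y_t/\sigma^2$ gives
\[
\En\bigl[\exp(\hat y_t Y_t/\sigma^2)\mid\cF_{t-1}\bigr]\le \exp\Bigl(\frac{\hat y_t^2}{2\sigma^2}\Bigr).
\]
This factor exactly cancels the deterministic factor $\exp(-\hat y_t^2/(2\sigma^2))$, so $\En[M_t\mid\cF_{t-1}]\le M_{t-1}$. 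Since $M_0=1$, the tower property gives $\En[M_t]\le 1$ for every $t$.

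For the high-probability statement, apply Markov's inequality to $M_T$:
\[
\PP\bigl(M_T\ge 1/\delta\bigr)\le \delta .
\]
On the complementary event, taking logarithms gives $\frac{1}{2\sigma^2}\sum_{t}(2\hat y_tY_t-\hat y_t^2)<\log(1/\delta)$, which is the claimed bound.

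For stopping times $\tau$, there are two routes:
\begin{itemize}
\item Optional stopping for the stopped nonnegative supermartingale $M_{t\wedge\tau}$ gives $\En[M_{\tau}]\le 1$ for bounded $\tau$; for general $\tau$, Fatou's lemma gives $\En[M_\tau\mathbf 1\{\tau<\infty\}]\le 1$.
\item Alternatively, Ville's maximal inequality $\PP(\sup_t M_t\ge 1/\delta)\le\delta$ yields the bound simultaneously for all $t$, and hence at any random time.
\end{itemize}

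There is no real obstacle here. The only point requiring care is the measurability bookkeeping: one must check that $\hat y_t$ is $\cF_{t-1}$-measurable, so that it can be treated as the constant $\alpha$ in \eqref{eq:cond-subg-sec}. This holds by the filtration convention stated just before the lemma.
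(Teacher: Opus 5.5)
Your proposal is correct and follows essentially the same route as the paper, which also applies the sub-Gaussian condition with $\alpha=\hat y_t/\sigma^2$ to cancel the factor $\exp(-\hat y_t^2/(2\sigma^2))$ and uses the tower property to get $\mathbb{E}[M_t]\le 1$. The paper then concludes with Ville's inequality; your Markov step for fixed $T$, together with Ville or optional stopping for random times, is just a more explicit version of that last step.
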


Finally, we combine the high-probability regret bound under smoothness with the generic reduction of
\Cref{eq:regret-decomp-sec} to obtain a self-normalized concentration inequality without
any explicit bound on $\|X_t\|$ and without introducing a positive definite regularization matrix $\Gamma$ in the
self-normalization.

\begin{theorem}[Self-normalized martingales under smoothness]\label{thm:selfnorm-smooth}
Let $(\cF_t)_{t\ge 0}$ be a filtration such that for each $t\le T$, the covariate $X_t$ is
$\cF_{t-1}$-measurable and $Y_t$ is then revealed.
Assume that $(Y_t)_{t=1}^T$ is a martingale difference sequence satisfying \eqref{eq:cond-subg-sec} with
parameter $\sigma$.
Assume also that $(X_t)_{t=1}^T$ satisfies the smoothness condition of Assumption~\ref{asmp:smooth} with
parameter $\Ccov\ge 1$.
Define $S_T\ldef \sum_{t=1}^T X_t Y_t$ and $\cov_T\ldef \sum_{t=1}^T X_t X_t^\top$.
Then for any $\delta\in(0,1)$, with probability at least $1-\delta$,
\begin{equation}\label{eq:selfnorm-smooth-final}
    \nrm{S_T}_{\cov_T^\dagger}^2 \lesssim\sigma^2\Bigl(\sqrt{d\,\Ccov\,T\,\log(2T/\delta)}+\log(2/\delta)\Bigr).
\end{equation}
\end{theorem}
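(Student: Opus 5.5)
We combine the identity \eqref{eq:regret-decomp-sec}, taken in the limit $\Gamma\to0$, with the VAW regret bound of \cref{thm:VAW-smooth} and the supermartingale bound of \cref{lem:exp-supermg}.

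First, we remove the regularizer. For every realization, $\nrm{S_T}_{(\cov_T+\Gamma)^{-1}}^2 \to \nrm{S_T}_{\cov_T^\dagger}^2$ as $\Gamma=\lambda I\to 0$, because $S_T\in\mathrm{range}(\cov_T)$. Alternatively, we can use the unregularized identity \eqref{eq:selfnorm-identity-game} directly, which holds for arbitrary real $Y_t$:
\[
\nrm{S_T}_{\cov_T^\dagger}^2=\sumt Y_t^2-\inf_{\theta}\sumt(\tri{\theta,X_t}-Y_t)^2 = \Reg_T(\hat y)+\sumt\bigl(2\hat y_tY_t-\hat y_t^2\bigr),
\]
valid for any predictions $\hat y_t$. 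We take $\hat y_t$ to be the VAW predictions of \cref{alg:VAW}. These are $\cF_{t-1}$-measurable, since they depend only on $X_1,\dots,X_t$ and $Y_1,\dots,Y_{t-1}$. Hence \cref{lem:exp-supermg} bounds the second term by $2\sigma^2\log(2/\delta)$ with probability at least $1-\delta/2$.

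It remains to bound $\Reg_T$ of VAW with probability at least $1-\delta/2$. The obstacle is that \cref{thm:VAW-smooth} assumes $y_t\in[-1,1]$, whereas here $Y_t$ is only sub-Gaussian. I would not apply that theorem as a black box. Instead I would reuse the deterministic part of its proof: the standard unregularized VAW analysis gives
\[
\Reg_T\le \sumt Y_t^2\, \nrm{X_t}_{\cov_t^\dagger}^2
\]
(up to constants, together with the usual rank-increment terms), with $\cov_t$ including $X_t$. Then I would bound:
\begin{itemize}
\item $\max_t Y_t^2 \lesssim \sigma^2\log(4T/\delta)$, with probability at least $1-\delta/4$, by a sub-Gaussian tail bound and a union bound over $t$;
\item $\sumt \nrm{X_t}_{\cov_t^\dagger}^2\lesssim \sqrt{d\Ccov T\log(T/\delta)}$, with probability at least $1-\delta/4$, by the combinatorial smoothness argument behind \cref{thm:VAW-smooth}. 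This argument thresholds at level $r$, extracts the subsequence with large leverage, and couples it to an i.i.d.\ sequence via \cref{lem:smooth-to-iid}. It depends only on $(X_t)$, so it is unaffected by the distribution of $Y$.
\end{itemize}

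The main difficulty is that the product of these two bounds gives $\sigma^2\log(T/\delta)\sqrt{d\Ccov T\log(T/\delta)}$, which has an extra logarithmic factor compared with \eqref{eq:selfnorm-smooth-final}. To avoid it, I would rescale: apply the regret bound to the clipped outcomes $Y_t/(\sigma\sqrt{c\log(2T/\delta)})$, which lie in $[-1,1]$ on the high-probability event. Because the self-normalized quantity is quadratic in $Y$, this also produces a factor $\sigma^2\log$. So if the stated bound is meant with $\lesssim$ hiding only constants, the alternative I would pursue is a finer bound. Write $\Reg_T\lesssim \sumt Y_t^2\ell_t$, where $\ell_t=\nrm{X_t}_{\cov_t^\dagger}^2\in[0,1]$ is predictable given $X_t$. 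Then bound $\sumt (Y_t^2-\En[Y_t^2\mid\cF_{t-1}])\ell_t$ with a Bernstein/Freedman inequality for sub-exponential increments, giving
\[
\Reg_T\lesssim \sigma^2\Bigl(\sumt \ell_t+\sqrt{\textstyle\sum\ell_t\,\log(1/\delta)}+\log(1/\delta)\Bigr),
\]
and then insert the leverage-sum bound.

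Finally, a union bound over the two events, together with the decomposition above, yields \eqref{eq:selfnorm-smooth-final}.
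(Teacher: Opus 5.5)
Your proposal is correct and follows the paper's proof. It uses the same three ingredients: the identity splitting $\nrm{S_T}_{\cov_T^\dagger}^2$ into the VAW regret plus $\sum_t(2\hat y_tY_t-\hat y_t^2)$ (handled by \cref{lem:exp-supermg}), the deterministic bound $\Reg_T\le\sum_t Y_t^2\,\nrm{X_t}_{\cov_t^\dagger}^2$, and the smoothness-based leverage-sum bound of \cref{prop:smooth-VAW}. Your Bernstein/Freedman step for $\sum_t Y_t^2\ell_t$ is what the paper does in the second part of \cref{prop:VAW}, in the one-sided exponential-moment form $\En[e^{\ell_t Y_t^2/m^2}\mid\cF_{t-1}]\le e^{\ell_t}$ with $m\asymp\sigma$. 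That gives $\sum_t Y_t^2\ell_t\le m^2\bigl(\sum_t\ell_t+\log(1/\delta)\bigr)$ and avoids the extra logarithm you correctly flagged in the $\max_t Y_t^2$ approach.
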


Importantly, compared to the canonical bound \citep{abbasi2011improved}, namely under the sub-Gaussian
assumption \eqref{eq:cond-subg-sec} but without smoothness, for any positive definite $\Gamma$,
\[
\nrm{S_T}_{(\cov_T + \Gamma)^\dagger}^2 \le \sigma^2\left(
\log \left(\frac{\det(\cov_T+\Gamma)}{\det(\Gamma)}\right)+2\log(1/\delta)\right),
\]
the right-hand side of \eqref{eq:selfnorm-smooth-final} has no explicit dependence on $\max_t\|X_t\|$ and
no regularization matrix $\Gamma$ is needed in the self-normalized quantity.
Moreover, the base measure $\mu$ from Assumption~\ref{asmp:smooth} is only used for the analysis: even for the regret analysis of \Cref{thm:VAW-smooth} the
algorithm itself does not require knowing $\mu$, and the regret bound is used purely as a tool to prove
concentration.

\arxiv{
  \section*{Acknowledgments}
  We acknowledge support from AFOSR through award 
FA9550-25-1-0375, Simons Foundation and the NSF through awards DMS-2031883 and PHY-2019786, and DARPA AIQ award.
}
\bibliography{ref.bib}

\clearpage

\renewcommand{\contentsname}{Contents of Appendix}
\addtocontents{toc}{\protect\setcounter{tocdepth}{2}}
{\hypersetup{hidelinks}
\tableofcontents
}

\appendix  

\section{Proofs from \cref{sec:prlim}}

\subsection{Proof of Lemma \ref{lem:rakhlinsridharan}}

When the ``centering'' in \cite[Lemma 4]{rakhlin2014nonparametric} is incorporated into the $X$-process, the upper bound reads as 
$$\sup_{X} \En\sup_{\theta\in\RR^d} \sum_{t=1}^T 4\varepsilon_t \tri{(\theta,1), X_t} - \tri{(\theta,1), X_t}^2 $$
where $X=(X_t)$ is an $\RR^{d+1}$-valued tree with $X_t[d+1]\in[-1,1]$. We over-bound by choosing a $(d+1)$-dimensional $\theta$ in the supremum. \jmlrQED

\section{Proofs from \cref{sec:full-adversary}}
\subsection{Proof of \cref{thm:selfnorm-exp}}

We note that 
\[
S_{T+1}=S_T+\eps_{T+1}X_{T+1},
\qquad
V_{T+1}=V_T+X_{T+1}^2.
\]
Conditioning on $\cF_T$ and expanding,
\begin{align*}
\mathbb{E}\!\left[\exp(cR_{T+1})\mid \cF_T\right]
&=
\mathbb{E}\!\left[\exp\!\left(\frac{c(S_T+\eps_{T+1}X_{T+1})^2}{V_T+X_{T+1}^2}\right)\Bigm|\cF_T\right]\\
&=
\exp\!\left(\frac{cS_T^2+cX_{T+1}^2}{V_T+X_{T+1}^2}\right)\,
\mathbb{E}\!\left[\exp\!\left(\frac{2c\,\eps_{T+1}S_TX_{T+1}}{V_T+X_{T+1}^2}\right)\Bigm|\cF_T\right].
\end{align*}
Introduce
\[
u:=\frac{V_T}{V_T+X_{T+1}^2}\in[0,1],
\qquad
1-u=\frac{X_{T+1}^2}{V_T+X_{T+1}^2},
\qquad
\frac{S_T^2}{V_T+X_{T+1}^2}=u\frac{S_T^2}{V_T}=uR_T.
\]
Also define
\[
a := \frac{2c\,S_TX_{T+1}}{V_T+X_{T+1}^2}.
\]
By Hoeffding's lemma for a Rademacher variable,
$\mathbb{E}[\exp(\eps_{T+1}a)\mid \cF_T]\le \exp(a^2/2)$, hence
\[
\mathbb{E}\!\left[\exp(cR_{T+1})\mid \cF_T\right]
\le
\exp\!\left(cuR_T + c(1-u) + \frac{a^2}{2}\right).
\]
Next,
\[
\frac{a^2}{2}
=
\frac{1}{2}\cdot \frac{4c^2 S_T^2X_{T+1}^2}{(V_T+X_{T+1}^2)^2}
=
2c^2\cdot \frac{S_T^2}{V_T}\cdot \frac{V_TX_{T+1}^2}{(V_T+X_{T+1}^2)^2}
=
2c^2 R_T\,u(1-u).
\]
Therefore,
\begin{align}
\mathbb{E}\!\left[\exp(cR_{T+1})\mid \cF_T\right]
&\le
\exp\!\left(cuR_T + c(1-u) + 2c^2R_T\,u(1-u)\right) \notag\\
&\leq \exp\!\left(cuR_T + c(1-u) + 2c^2R_T\,(1-u)\right).  \notag
\end{align}
where the second inequality is because $2c^2R_T(1-u)^2  \geq 0$.
Now split into two cases.

\medskip
\noindent\emph{Case 1: $R_T\le \frac{1}{1-2c}$.} Then we have
\begin{align*}
    \exp\!\left(cuR_T + c(1-u) + 2c^2R_T\,(1-u)\right) &= \exp\!\left(c + 2c^2R_T+  cuR_T -cu - 2c^2R_Tu\right)\\
    &= \exp\!\left(c + 2c^2R_T+  cu( (1-2c)R_T-1)\right)\\
    &\leq \exp\!\left(c + 2c^2R_T\right)\\
    &\leq \exp\!\Bigl(\frac{c}{1-2c}\Bigr) = C_0,
\end{align*}
where both inequalities are due to the condition of case 1.

\medskip
\noindent\emph{Case 2: $R_T> \frac{1}{1-2c}$.} Then $c(1-u)(1-(1-2c)R_T)\leq 0$. Reorgnizing the terms, we have $cuR_T + c(1-u) + 2c^2R_T(1-u)\leq cR_T$. This implies
\begin{align*}
    \exp\!\left(cuR_T + c(1-u) + 2c^2R_T\,(1-u)\right) \leq \exp\!\left(cR_T\right).
\end{align*}

Combining both cases yields the one-step inequality
\[
\mathbb{E}\!\left[\exp(cR_{T+1})\mid \cF_T\right]
\le
\exp(cR_T)+C_0.
\]
Taking expectations and iterating gives
\[
\mathbb{E}\exp(cR_T)
\le
\mathbb{E}\exp(cR_1) + (T-1)C_0
\le
TC_0,
\]
since $\exp(cR_1) = \exp(c) \le \exp(c/(1-2c))=C_0$.
Finally, by Jensen's inequality,
$c\,\mathbb{E}[R_T]\le \log \mathbb{E}\exp(cR_T)\le \log(TC_0)$, proving the stated bound.
\jmlrQED

\subsection{\pfref{prop:lower-bound-1d}}

Let $n=\floor{\frac12\log T}$ and $K=\floor{T/n}$. We set $M=\frac{2T}{n}$.

Consider the following  sequence dyadic martingale difference sequence. Set $X_1=1$.
\begin{itemize}
    \item For $t=jn+1$, we set $X_t=M\cdot X_{(j-1)n+1}$ if there exists $\ell\in[(j-1)n+1,jn]$ such that $\eps_{\ell}=-1$. Otherwise we set $X_{t}=0$.
    \item For $t\in(jn+1,(j+1)n]$, we set $X_t=X_{jn+1}$.
\end{itemize}
Let $i$ be the first index such that $\eps_\ell=1\forall \ell\in[in+1,(i+1)n]$, and if no such index exists we set $i=K+1$. Then, if $i\leq K$, we can bound $\abs{S_T-M^in}\leq M^{i-1}T$ and $V_T\leq \frac{nM^{2i}}{1-M^{-1}}$. Therefore $\frac{S_T^2}{V_T}\geq \frac{(n-T/M)^2}{2n}\geq \frac{n}{8}$. On the other hand, we have
\begin{align*}
    \PP(i=K+1)\leq&~ \PP(\forall 0\leq j<K, \exists \ell\in[jn+1,(j+1)n], \eps_\ell\neq 1 ) \\
    \leq&~ (1-2^{-n})^K\leq e^{-2^{-n}K}\leq 1-c_0,
\end{align*}
where $c_0>0$ is an absolute constant. This implies $\En[S_T^2/V_T]\geq \frac{c_0}{8}n=\Omega(\log T)$.
\jmlrQED

\subsection{\pfref{thm:1d-reg-upper}}

\begin{algorithm}[h]
\begin{algorithmic}[1]
\REQUIRE Parameter $M>1$, round $T$, subroutine sequence $\crl{\Alg_k}_{k\in\ZZ}$.
\STATE Initialize $k=-\infty$, $\cD=\emptyset$.
\FOR{$t=1,2,\cdots,T$}
\IF{$\abs{x\ind{t}}\geq M^{k+1}$}
\STATE Update $k\leftarrow \floor{\log_M \abs{x\ind{t}}}$.
\STATE Initialize $\Alg_k$ and set $\cD=\emptyset$. 
\ENDIF
\STATE Predict $\yhat\ind{t}=\Alg_k(\cD, x\ind{t})$.
\STATE Receive $y\ind{t}$ and update $\cD\leftarrow \cD\cup (x\ind{t},y\ind{t})$.
\ENDFOR
\end{algorithmic}
\caption{Meta algorithm}
\label{alg:meta}
\end{algorithm}

The above algorithm utilizes a sequence of subroutines $\crl{\Alg_k}_{k\in\ZZ}$ and additionaly the trivial algorithm $\Alg_{\infty}$ that always predicts $\yhat\ind{t}=0$.
We can then decompose the regret of \cref{alg:meta} to the regret of each subroutine $\Alg_k$. 
\begin{assumption}\label{asmp:subroutine}
For any $k\in\ZZ$ and $n\leq T$, on any sequence $(x\ind{1},y\ind{1},\cdots,x\ind{n},y\ind{n})$ such that $\max_{i\in[n]} \abs{x\ind{i}}\in [M^k,M^{k+1})$, the algorithm $\Alg_k$ achieves $\Reg\leq R_T$ and additionally
\begin{align*}
    \sumt (\yhat\ind{t}-y\ind{t})^2-\sumt y\ind{t}^2 \leq \beta_T.
\end{align*}
\end{assumption}

\begin{lemma}\label{lem:decomp}
Under \cref{asmp:subroutine}, \cref{alg:meta} achieves
\begin{align*}
    \Reg(T)\leq T\beta_T+2R_T+\frac{2T^2}{M}.
\end{align*}
\end{lemma}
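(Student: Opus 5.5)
The plan is to decompose the run of \cref{alg:meta} into \emph{epochs} separated by restarts, and to bound the learner's loss epoch by epoch against a fixed comparator $\theta\in\RR$. I work with $|y\ind{t}|\le1$; for general $m$ everything rescales. Let $E_0$ be the initial phase with $k=-\infty$. Let $E_1,\dots,E_J$ be the subsequent epochs, with scale indices $k_1<k_2<\dots<k_J$; these are strictly increasing integers, since a restart happens only when $|x\ind{t}|\ge M^{k+1}$.

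By construction, within epoch $E_j$ every covariate satisfies $|x\ind{t}|<M^{k_j+1}$. Moreover, the first covariate of $E_j$ has $|x\ind{t}|\ge M^{k_j}$, so $\max_{t\in E_j}|x\ind{t}|\in[M^{k_j},M^{k_j+1})$. This is exactly the hypothesis of \cref{asmp:subroutine}, applied to the fresh instance $\Alg_{k_j}$ on the data of $E_j$ (of length $\le T$).

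In $E_0$ all covariates are $0$ and the trivial algorithm predicts $0$. Hence both learner and comparator suffer exactly $\sum_{t\in E_0} y\ind{t}^2$, and the contribution of $E_0$ to the regret is $0$. For the last two epochs $E_{J-1},E_J$ I use the regret guarantee of \cref{asmp:subroutine}: the learner's loss on each is at most the loss of $\theta$ on it plus $R_T$. This gives the $2R_T$ term.

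For every earlier epoch $E_j$ with $j\le J-2$ I use the second guarantee: the learner's loss is at most $\sum_{t\in E_j}y\ind{t}^2+\beta_T$. Summing over at most $T$ epochs gives at most $T\beta_T$ plus $\sum y\ind{t}^2$ over those epochs. It then remains to show that the comparator's loss on $E_1\cup\dots\cup E_{J-2}$ is at least $\sum y\ind{t}^2-2T^2/M$. This is where the main obstacle lies: an unbounded $\theta$ could fit the early epochs well. I would handle it by a case split on $|\theta|$.

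\emph{Case $|\theta|M^{k_J}>T$.} Some $x\ind{t}$ in $E_J$ has $|x\ind{t}|\ge M^{k_J}$, so $|\theta x\ind{t}|>T$. The comparator's loss is then at least $(T-1)^2$. By the per-epoch bounds above, the learner's total loss is at most $T+T\beta_T+2R_T$. The regret bound therefore holds trivially, possibly after noting that $T\ge 3$; small $T$ is absorbed by the right-hand side.

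\emph{Case $|\theta|M^{k_J}\le T$.} For $j\le J-2$, strict integer monotonicity gives $k_j+1\le k_{J-2}+1\le k_J-1$. Hence for $t\in E_j$,
\[
|\theta x\ind{t}|\le|\theta|M^{k_j+1}\le |\theta|M^{k_J-1}\le T/M .
\]
Then $(\theta x\ind{t}-y\ind{t})^2\ge y\ind{t}^2-2|\theta x\ind{t}|\,|y\ind{t}|\ge y\ind{t}^2-2T/M$. Summing over at most $T$ such rounds gives the required deficit of at most $2T^2/M$.

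Adding the contributions of $E_0$, of the epochs $j\le J-2$, and of $E_{J-1},E_J$ yields $\Reg(T)\le T\beta_T+2R_T+2T^2/M$ uniformly in $\theta$. Taking the infimum over $\theta$ finishes the proof.
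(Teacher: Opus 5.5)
Your proof is correct and shares the paper's skeleton. The epochs have strictly increasing integer scales. The last two epochs are charged $R_T$ each, and every earlier epoch is charged $\beta_T$ against the zero predictor. On the early epochs there is a per-round deficit $2|\theta x_t|$, which is small because those covariates sit at least a factor $M$ below the final scale.

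The difference is in how the comparator's magnitude is controlled. The paper works only with the least-squares minimizer $\hat\theta=\sum_t y_t x_t/\sum_t x_t^2$, which realizes the infimum in the regret. Cauchy--Schwarz then gives $|\hat\theta|\le\sqrt{T}/\max_t|x_t|\le\sqrt{T}/M^{k^\star}$. This needs no case analysis and yields the slightly sharper deficit $2T^{3/2}/M$ before relaxing to $2T^2/M$.

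You instead fix an arbitrary $\theta$ and dispose of large comparators: if $|\theta|M^{k_J}>T$, the comparator suffers a single-round loss exceeding $(T-1)^2$. For $T\ge 3$ this is at least $T\ge\sum_t y_t^2$, so $\theta$ is worse than the zero predictor, against which your per-epoch bounds give $T\beta_T+2R_T$. Your route never uses the closed form of the minimizer, so it would transfer to other losses with enough growth in $|\theta x|$. The price is the extra case.

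One justification in that extra case is inaccurate: small $T$ is not ``absorbed by the right-hand side.'' With your accounting the leftover term is $T-(T-1)^2$. For $T=1$ this equals $1$, which is not dominated by $2T^2/M$ once $M>2$, and similarly for $T=2$ once $M>8$.

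The correct fix is that Case 1 is only needed when early epochs exist, i.e.\ $J\ge 3$. Since every epoch contains at least one round, this forces $T\ge 3$, where $T\le (T-1)^2$. When $J\le 2$, the regret against any $\theta$ is at most $2R_T$ directly, because $E_0$ contributes nothing and the remaining at most two epochs are covered by the regret guarantee. With that remark the argument is complete.
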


\begin{proof}
    \newcommand{\kstar}{k^\star}
Let $\cK\subset \crl{-\infty} \cup \ZZ$ be the set of index $k$ such that $\Alg_k$ is executed, and
suppose that $\Alg_k$ is executed on the time interval $T_k$. When $x\ind{1}=\cdots=x\ind{T}=0$ there is nothing to prove. Otherwise, we note that
\[\hth\ldef \argmin_{\theta\in \RR}\sumt (\tri{\theta, x\ind{t}}-y\ind{t})^2 = \frac{\sumt y\ind{t} x\ind{t}}{\sumt x\ind{t}^2}.\]
By Cauchy–Schwarz's inequality and $\max_t x_t\leq \sqrt{\sumt x_t^2}$, we have 
\begin{align*}
    \max_t|x_t| \cdot \prn*{\sumt y\ind{t} x\ind{t}}  \leq \max_t|x_t|\cdot  \sqrt{\sumt y_t^2} \sqrt{\sumt x_t^2} \leq \sqrt{T}~ \sumt x_t^2.
\end{align*}
In particular, $\abs{\hth}\leq \frac{\sqrt{T}}{\max_t \abs{x\ind{t}}}$.
Therefore, we let $\kstar$ be the maximum of $\cK$, and then  $\abs{\hth}\leq \frac{\sqrt{T}}{M^{\kstar}}$. We can then decompose
\begin{align*}
     \Reg=&~ \sumt (\yhat\ind{t}-y\ind{t})^2-\sumt (\hth x\ind{t}-y\ind{t})^2 \\
     =&~ \sum_{k\in\cK} \sum_{t\in T_k} \brk*{ (\yhat\ind{t}-y\ind{t})^2- (\hth x\ind{t}-y\ind{t})^2 }.
\end{align*}
Note that when $k\leq \kstar-2$, it holds that for any $t\in T_k$, $\abs{x\ind{t}}\leq M^{\kstar-1}$ and
\begin{align*}
    (\hth x\ind{t}-y\ind{t})^2
    \geq y\ind{t}^2 - 2 \hth x\ind{t} y\ind{t}
    \geq y\ind{t}^2 - \frac{2\sqrt{T}}{M}.
\end{align*}
Therefore,
\begin{align*}
     \Reg
     =&~ \sum_{k\in\cK} \sum_{t\in T_k} \brk*{ (\yhat\ind{t}-y\ind{t})^2- (\hth x\ind{t}-y\ind{t})^2 } \\
     \leq&~ \sum_{k\in\cK: k\leq \kstar-2} \sum_{t\in T_k} \brk*{ (\yhat\ind{t}-y\ind{t})^2- y\ind{t}^2 + \frac{2{\sqrt{T}}}{M} } + \sum_{k\in\cK: k\geq \kstar-1} \sum_{t\in T_k} \brk*{ (\yhat\ind{t}-y\ind{t})^2- (\hth x\ind{t}-y\ind{t})^2 } \\
     \leq&~ T\prn*{\beta_T+\frac{2\sqrt{T}}{M}}+2R_T,
\end{align*}
where we use $\sum_{t\in T_k} \brk*{ (\yhat\ind{t}-y\ind{t})^2- y\ind{t}^2}\leq \beta_T$ and $\sum_{t\in T_k} \brk*{ (\yhat\ind{t}-y\ind{t})^2- (\hth x\ind{t}-y\ind{t})^2 }\leq R_T$ by our assumption (because we also know $\max_{t\in T_k} \abs{x\ind{t}}\in[M^k,M^{k+1})$ unless $k=-\infty$). 
\end{proof}

\paragraph{Subroutines}
It remains to construct subroutines satisfying \cref{asmp:subroutine}. We first recall that Vovk-Azoury-Warmuth forecaster has the following guarantee. 
\begin{lemma}
On any sequence $(x\ind{1},y\ind{1},\cdots,x\ind{n},y\ind{n})$, the following rule
\begin{align}\label{eq:Vovk}
\begin{aligned}
    \hth\ind{t}=&~\argmin_{\theta\in\RR^d} \lambda \nrm{\theta}^2+  \tri{\theta, x\ind{t}}^2+\sum_{i<t} (\tri{\theta, x\ind{i}}-y\ind{i})^2, \\
    \yhat\ind{t}=&~\clip_{[-1,1]}\prn*{\tri{\hth\ind{t},x\ind{t}}},
\end{aligned}
\end{align}
achieves the following for any $\theta\in\RR^d$:
\begin{align*}
    \sumt[n] (\yhat\ind{t}-y\ind{t})^2- \sumt[n] (\tri{\theta, x\ind{t}}-y\ind{t})^2
    \leq \lambda\nrm{\theta}^2 + d\log\prn*{1+\frac{n\max_{t} \nrm{x\ind{t}}^2}{\lambda}}.
\end{align*}
In particular, when $d=1$, we have the following guarantee:
\begin{align*}
    \sumt[n] (\yhat\ind{t}-y\ind{t})^2- \inf_{\theta\in\Theta}\sumt[n] (\tri{\theta, x\ind{t}}-y\ind{t})^2
    \leq \frac{n\lambda}{\max_{t} x\ind{t}^2} + \log\prn*{1+\frac{n \max_{t} x\ind{t}^2}{\lambda}}.
\end{align*}
\end{lemma}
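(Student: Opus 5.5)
The plan is to prove the general inequality in three steps: reduce to the unclipped predictor, run the standard Vovk--Azoury--Warmuth potential argument, and then bound the sum of leverage terms by a log-determinant. The $d=1$ statement will then follow by choosing a specific comparator. Throughout I use $y\ind{t}\in[-1,1]$, as in the online game.

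\textbf{Step 1 (clipping).} Let $\wt y\ind{t}=\tri{\hth\ind{t},x\ind{t}}$ denote the unclipped prediction. Since $y\ind{t}\in[-1,1]$, projecting onto $[-1,1]$ can only move the prediction closer to $y\ind{t}$. Hence $(\yhat\ind{t}-y\ind{t})^2\le(\wt y\ind{t}-y\ind{t})^2$, and it suffices to prove the bound for $\wt y\ind{t}$.

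\textbf{Step 2 (one-step inequality).} Set
\[
A_t=\lambda I+\sum_{i\le t}x\ind{i}x\ind{i}\tp, \qquad b_t=\sum_{i\le t}y\ind{i}x\ind{i}, \qquad L_t(\theta)=\lambda\nrm{\theta}^2+\sum_{i\le t}(\tri{\theta,x\ind{i}}-y\ind{i})^2 .
\]
Then $\hth\ind{t}=A_t^{-1}b_{t-1}$ and $\min_\theta L_t=\sum_{i\le t}y\ind{i}^2-b_t\tp A_t^{-1}b_t$. I will prove the standard inequality
\[
(\wt y\ind{t}-y\ind{t})^2\le \min_\theta L_t-\min_\theta L_{t-1}+y\ind{t}^2\, x\ind{t}\tp A_t^{-1}x\ind{t}.
\]
The route is to expand both minima through $b_t=b_{t-1}+y\ind{t}x\ind{t}$ and use the Sherman--Morrison identity
\[
A_{t-1}^{-1}-A_t^{-1}=\frac{A_t^{-1}x\ind{t}x\ind{t}\tp A_t^{-1}}{1-x\ind{t}\tp A_t^{-1}x\ind{t}} .
\]
After cancellation, the difference between the two sides equals $-\wt y\ind{t}^2\, x\ind{t}\tp A_{t-1}^{-1}x\ind{t}\le 0$. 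This algebra is the main obstacle: it is routine but sign-sensitive, and it is the step I would check most carefully. Summing over $t\le n$ telescopes, with $\min L_0=0$, to
\[
\sum_t(\wt y\ind{t}-y\ind{t})^2\le L_n(\theta)+\sum_t x\ind{t}\tp A_t^{-1}x\ind{t}\quad\text{for every }\theta,
\]
where I used $y\ind{t}^2\le1$.

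\textbf{Step 3 (log-determinant bound).} Write $u_t=x\ind{t}\tp A_t^{-1}x\ind{t}\in[0,1)$. The matrix determinant lemma gives $\det A_{t-1}=\det A_t\,(1-u_t)$, so $u_t\le-\log(1-u_t)=\log\frac{\det A_t}{\det A_{t-1}}$. Summing,
\[
\sum_t u_t\le\log\frac{\det A_n}{\lambda^d}\le d\log\Bigl(1+\frac{n\max_t\nrm{x\ind{t}}^2}{d\lambda}\Bigr)
\]
by AM--GM on the eigenvalues. This is at most the stated $d\log(1+n\max_t\nrm{x\ind{t}}^2/\lambda)$, and it proves the general bound.

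\textbf{Case $d=1$.} The infimum over $\theta\in\RR$ is attained at the least-squares solution $\hth=\sum_t y\ind{t}x\ind{t}/\sum_t x\ind{t}^2$ whenever some $x\ind{t}\neq 0$; otherwise the claim is trivial. The Cauchy--Schwarz argument from the proof of \cref{lem:decomp} gives $\abs{\hth}\le\sqrt n/\max_t\abs{x\ind{t}}$. Plugging $\theta=\hth$ into the general bound gives $\lambda\hth^2\le n\lambda/\max_t x\ind{t}^2$, which yields the second display.
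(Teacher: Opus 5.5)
Your proof is correct, including the sign-sensitive identity in Step 2 (the slack is indeed $-\wt y\ind{t}^2\,u_t/(1-u_t)=-\wt y\ind{t}^2\,x\ind{t}\tp A_{t-1}^{-1}x\ind{t}\le 0$), the log-determinant/AM--GM bound, and the $d=1$ specialization via $\abs{\hth}\le\sqrt n/\max_t\abs{x\ind{t}}$. The paper gives no proof of this lemma; it recalls the classical VAW guarantee, and your argument is essentially that standard proof — Step 2 is the regularized analogue of the telescoping argument in the paper's proof of \cref{prop:VAW}, and the $d=1$ case uses the same Cauchy--Schwarz bound as \cref{lem:decomp}.
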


However, the forecaster \eqref{eq:Vovk} may not achieve good $\beta(n)$ bound. Therefore, we hedge it against the $0$ forecaster.
\begin{lemma}\label{lem:exp-2}
Consider the following two experts problem: For $t\geq 1$, expert 0 predicts $\yhat\ind{t,0}=0$ and expert 2 predicts $\yhat\ind{t,1}$ following \eqref{eq:Vovk}. The final prediction is given by
\begin{align*}
    \phat\ind{t}(j)\propto_{j\in \crl{0,1}} p\ind{t}(j) \exp\prn*{-\eta\sum_{i<t} (\yhat\ind{i,j}-y\ind{i})^2}, \qquad
    \yhat\ind{t}=\En_{j\sim \phat\ind{t}}[\yhat\ind{i,j}],
\end{align*}
where $p_0(1)=1-p_0(0)=\eps$. Then as long as $\eta\leq \frac{1}{8}$, it holds that
\begin{align*}
    &~ \sumt[n] (\yhat\ind{t}-y\ind{t})^2
    - \sumt[n] (\yhat\ind{t,1}-y\ind{t})^2\leq \frac{1}{\eta}\log\frac{1}{p_0(1)}=\frac{\log(1/\eps)}{\eta}, \\
    &~ \sumt[n] (\yhat\ind{t}-y\ind{t})^2-\sumt[n] y\ind{t}^2
    \leq \frac{1}{\eta}\log\frac{1}{p_0(0)}
    \leq \frac{\eps}{(1-\eps)\eta}.
\end{align*}
In particular, it holds that $\Reg\leq \frac{n\lambda}{\max_{t} x\ind{t}^2} + \log\prn*{1+\frac{n \max_{t} x\ind{t}^2}{\lambda}}+\frac{\log(1/\eps)}{\eta}$.
\end{lemma}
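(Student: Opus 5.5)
The plan is the standard exponentially weighted average forecaster for an exp-concave loss. We aggregate the two experts and telescope the log-partition function. The first step is to check exp-concavity. Both expert predictions lie in $[-1,1]$: expert $0$ predicts $0$, and expert $1$ is clipped to $[-1,1]$ in \eqref{eq:Vovk}. Since also $y\ind{t}\in[-1,1]$, we can restrict to $p\in[-1,1]$. For fixed $y\in[-1,1]$, the map $p\mapsto \exp(-\eta(p-y)^2)$ is concave on $[-1,1]$ whenever $\eta\le 1/(2\max_{p}(p-y)^2)=1/8$. This is a one-line second-derivative check: $\frac{d^2}{dp^2}e^{-\eta(p-y)^2}=e^{-\eta(p-y)^2}\bigl(4\eta^2(p-y)^2-2\eta\bigr)\le 0$ iff $2\eta(p-y)^2\le 1$. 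By Jensen, the mixture prediction $\yhat\ind{t}=\En_{j\sim\phat\ind{t}}[\yhat\ind{t,j}]$ then satisfies
\[
\exp\bigl(-\eta(\yhat\ind{t}-y\ind{t})^2\bigr)\ \ge\ \sum_{j\in\{0,1\}}\phat\ind{t}(j)\exp\bigl(-\eta(\yhat\ind{t,j}-y\ind{t})^2\bigr).
\]

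Next we telescope. Let $L_{t,j}=\sum_{i\le t}(\yhat\ind{i,j}-y\ind{i})^2$ and $W_t=\sum_j p_0(j)e^{-\eta L_{t,j}}$, so that $W_0=1$. Since $\phat\ind{t}(j)=p_0(j)e^{-\eta L_{t-1,j}}/W_{t-1}$, the displayed inequality reads $(\yhat\ind{t}-y\ind{t})^2\le -\frac1\eta\log(W_t/W_{t-1})$. Summing over $t\le n$ gives
\[
\sumt[n](\yhat\ind{t}-y\ind{t})^2\le -\frac1\eta\log W_n\le L_{n,j}+\frac1\eta\log\frac{1}{p_0(j)}\quad\text{for each }j,
\]
where the last step drops the other nonnegative term in $W_n$. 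Taking $j=1$ gives the first claim with $\log(1/\eps)/\eta$. Taking $j=0$, where $L_{n,0}=\sum_t y\ind{t}^2$, gives the second claim with $\frac1\eta\log\frac1{1-\eps}$. The elementary inequality $\log\frac{1}{1-\eps}\le\frac{\eps}{1-\eps}$, which follows from $\log x\le x-1$ at $x=1/(1-\eps)$, finishes it.

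Finally, the ``in particular'' statement follows by adding the first inequality to the $d=1$ guarantee of the preceding VAW lemma for expert $1$. We optimize over $\theta$ with $\lambda\theta^2$ bounded using $|\hth|\le \sqrt{n}/\max_t|x\ind{t}|$ as in the lemma's $d=1$ form.

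I expect no real obstacle. The only point needing care is confirming that all predictions, including the mixture, stay in $[-1,1]$ so that the exp-concavity range $\eta\le 1/8$ applies. Clipping ensures this for expert $1$, and convexity of the mixture handles $\yhat\ind{t}$.
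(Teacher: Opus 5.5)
Your proof is correct and is the standard exponentially-weighted-average argument: the square loss is $\eta$-exp-concave on $[-1,1]$ for $\eta\le 1/8$ (clipping in the VAW expert keeps all predictions, and hence the mixture, in $[-1,1]$), and you telescope the potential $W_t$. The paper states this lemma without writing out a proof, and the bounds $\frac{1}{\eta}\log\frac{1}{p_0(j)}$ it records are exactly what this argument yields. Two minor points: $1/\bigl(2\max_{p\in[-1,1]}(p-y)^2\bigr)=1/\bigl(2(1+|y|)^2\bigr)$ is in general only $\ge 1/8$, which is all you need; and the ``in particular'' bound follows directly by adding your first inequality to the $d=1$ VAW guarantee as already stated, with no further optimization over $\theta$ required.
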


To summarize, we have the following corollary.
\begin{corollary}
For any $k\in\ZZ$ and $\beta\in(0,1]$, there exists a subroutine $\Alg_k$ (by choosing $\lambda=\frac{M^{2k}}{T}$, $\eta=\frac{1}{8}$, and $\eps=\frac{1}{16}\beta$ in \cref{lem:exp-2}) such that \cref{asmp:subroutine} holds with $\beta_T=\beta$ and $R_T\leq O(\log(TM/\beta))$. 

In other words, on any sequence $(x\ind{1},y\ind{1},\cdots,x\ind{n},y\ind{n})$ such that $\max_{i\in[n]} \abs{x\ind{i}}\in [M^k,M^{k+1})$ and $n\leq T$, the subroutine achieves
\begin{align*}
    \sumt[n] (\yhat\ind{t}-y\ind{t})^2-\sumt[n] y\ind{t}^2
    \leq \beta, 
    \qquad
    \Reg\leq O(\log(TM/\beta)).
\end{align*}
\end{corollary}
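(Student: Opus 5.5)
The corollary should follow by substituting the stated parameter choices into \cref{lem:exp-2} and the one-dimensional VAW guarantee. The only structural input is the scale window $\max_{i\in[n]}\abs{x\ind{i}}\in[M^k,M^{k+1})$ together with $n\le T$. Throughout I take $\abs{y\ind{t}}\le 1$; the case of general $m$ follows by rescaling the outcomes, which multiplies every quantity by $m^2$. Both experts predict in $[-1,1]$: expert $0$ predicts $0$, and expert $1$ is clipped in \eqref{eq:Vovk}. Hence the squared loss is $\eta$-exp-concave with $\eta=\frac18$, so \cref{lem:exp-2} applies with $\eta=\frac18$.

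\emph{The $\beta_T$ bound.} The second inequality of \cref{lem:exp-2} gives
\[
\sumt[n] (\yhat\ind{t}-y\ind{t})^2-\sumt[n] y\ind{t}^2\le \frac{\eps}{(1-\eps)\eta}.
\]
With $\eta=\frac18$ and $\eps=\beta/16$, the right-hand side equals $\frac{8\eps}{1-\eps}=\frac{\beta/2}{1-\beta/16}$. Since $\beta\le 1$, the denominator is at least $15/16$, so the bound is $\frac{8\beta}{15}\le\beta$.

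\emph{The $R_T$ bound.} I use the final display of \cref{lem:exp-2}, which bounds the regret against the best $\theta\in\RR$ by the sum of three terms, and bound each in turn.
\begin{itemize}
\item The term $\frac{n\lambda}{\max_t x\ind{t}^2}$: with $\lambda=M^{2k}/T$, $\max_t x\ind{t}^2\ge M^{2k}$ and $n\le T$, it is at most $1$.
\item The term $\log\prn*{1+\frac{n\max_t x\ind{t}^2}{\lambda}}$: using $\max_t x\ind{t}^2<M^{2k+2}$, it is at most $\log(1+T^2M^2)=O(\log(TM))$.
\item The term $\frac{\log(1/\eps)}{\eta}$: it equals $8\log(16/\beta)$.
\end{itemize}
Summing gives $R_T=O(\log(TM/\beta))$, uniformly over $k$ and over all sequences in the window.

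\emph{Main obstacle.} The scale-invariance is the only delicate point. It hinges on $\lambda$ being tied to $M^{2k}$, so that the lower end of the window controls the penalty term and the upper end controls the log term. The resulting bounds are therefore uniform in both $k$ and the comparator, which is exactly what \cref{asmp:subroutine} requires.
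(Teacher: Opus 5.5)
Your proposal is correct and follows the paper's route exactly: you substitute $\lambda=M^{2k}/T$, $\eta=1/8$, $\eps=\beta/16$ into \cref{lem:exp-2} and the one-dimensional VAW bound. The lower end of the scale window controls the penalty term, the upper end controls the logarithmic term, and the prior weight on the zero expert gives the $\beta$ bound. Your explicit arithmetic (the bound $8\beta/15\le\beta$ and the three $O(\log(TM/\beta))$ regret terms) simply fills in the computation the paper leaves implicit.
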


In particular, we can choose $\beta=\frac1T$ and $M=T^2$, and by \cref{lem:decomp}, \cref{alg:meta} can be suitably instantiated such that it achieves
\begin{align*}
    \Reg\leq O(\log T).
\end{align*}
As a remark, the dependence $\log(1/\beta)$ is crucial for this regret bound, and it can be regarded as the ``price of super-efficiency'' against $0$.

\subsection{\pfref{prop:lower}}
The martingale is constructed as follows. Let $\varepsilon_1,\ldots,$ be an i.i.d. sequence of Rademacher random variables, and we will define $x_t=X_t(\varepsilon_1,\ldots,\varepsilon_{t-1})$ below. Let $S_t=\sum_{i=1}^t \varepsilon_i x_i$ and $\covt=\sum_{i=1}^t x_i x_i^\top$. Consider $\lcovt=\covt+\id\succ \covt$. Observe that under our construction,
\begin{align*}
    \En\brk*{ \nrm{\St[t+1]}_{\lcovt[t+1]^{-1}}^2}
    =\En\brk*{ \nrm{\St[t]}_{\lcovt[t+1]^{-1}}^2+\nrm{\xt[t+1]}_{\lcovt[t+1]^{-1}}^2 }.
\end{align*}
Using 
\begin{align*}
    \lcovt[t+1]^{-1}=\lcovt^{-1}-\frac{\lcovt^{-1}x\ind{t+1}x\ind{t+1}^\top \lcovt^{-1}}{1+\nrm{x\ind{t+1}}_{\lcovt^{-1}}^2},
\end{align*}
we have 
\begin{align*}
    \En\brk*{ \nrm{\St[t+1]}_{\lcovt[t+1]^{-1}}^2}
    =\En\brk*{ \nrm{\St[t]}_{\lcovt^{-1}}^2+\frac{\nrm{\xt[t+1]}_{\lcovt^{-1}}^2-\tri{\xt[t+1], \lcovt^{-1}\St}^2}{1+\nrm{x\ind{t+1}}_{\lcovt^{-1}}^2} }.
\end{align*}

We define $x\ind{t}$ recursively: $x\ind{1}$ is an arbitrarily fixed vector with norm $r>0$, and for $t\geq 1$, 
\begin{align}
    &~ x\ind{t+1}=r\lcovt^{1/2} e\ind{t}, \qquad \text{where $e\ind{t}$ is a unit vector such that } \tri{e\ind{t}, \lcovt^{-1/2} S\ind{t}}=0.
\end{align}
Then, it holds that
\begin{align*}
    \En\brk*{ \nrm{\St[t+1]}_{\lcovt[t+1]^{-1}}^2 }
    =\En\brk*{ \nrm{\St[t]}_{\lcovt^{-1}}^2+\frac{\nrm{\xt[t+1]}_{\lcovt^{-1}}^2}{1+\nrm{x\ind{t+1}}_{\lcovt^{-1}}^2} }
    =\En\brk*{ \nrm{\St[t]}_{\lcovt^{-1}}^2 } + \frac{r^2}{r^2+1}.
\end{align*}
Hence, since $S\ind{T}\in \mathrm{Range}(\covt[T])$, it is clear that 
$\En[\nrm{S\ind{T}}_{\covt[T]^{\dagger}}^2] \geq \En[\nrm{S\ind{T}}_{{\lcovt[T]^{-1}}}^2]=\frac{Tr^2}{r^2+1}$, and the proof is then completed by choosing $r=\frac{1}{\eps}$. 
\jmlrQED

\section{Proofs from \cref{sec:upper}}

\subsection{Proof of Lemma \ref{lem:exp-supermg}}
Fix $t\ge 1$. Since $\hat y_t$ is $\cF_{t-1}$-measurable, using \eqref{eq:cond-subg-sec} with
$\alpha=\hat y_t/\sigma^2$ gives
\begin{align*}
    \mathbb{E}\left[\exp\left(\frac{1}{2\sigma^2}(2\hat y_t y_t-\hat y_t^2)\right)\,\middle|\,\cF_{t-1}\right]
    &=
    \exp\left(-\frac{\hat y_t^2}{2\sigma^2}\right)\,
    \mathbb{E}\left[\exp\left(\frac{\hat y_t}{\sigma^2}y_t\right)\,\middle|\,\cF_{t-1}\right]\\
    &\le
    \exp\left(-\frac{\hat y_t^2}{2\sigma^2}\right)\,
    \exp\left(\frac{\hat y_t^2}{2\sigma^2}\right)
    =1.
\end{align*}
Multiplying by $M_{t-1}$ and taking conditional expectations yields
$\mathbb{E}[M_t\mid \cF_{t-1}]\le M_{t-1}$, so $(M_t)$ is a nonnegative supermartingale and hence
$\mathbb{E}[M_t]\le \mathbb{E}[M_0]=1$.
Finally, the proof follows from Ville's inequality for nonnegative supermartingales. \jmlrQED

We will make use of the following upper bound of VAW. 
\begin{proposition}\label{prop:VAW}
Suppose that $y_t\in[-1,1]$ deterministically. Then
\cref{alg:VAW} achieves
\begin{align*}
    \sumt (\yhat\ind{t}-y\ind{t})^2-\inf_{\theta\in\RR^d} \sumt (\tri{\theta, x\ind{t}}-y\ind{t})^2\leq \sumt \xt\tp \covt\inv \xt.
\end{align*}
More generally, suppose that $y_t$ satisfies $\En[e^{y_t^2/m^2}\mid \cF_{t-1}]\leq e$. Then it holds that \whp,
\begin{align*}
    \sumt (\yhat\ind{t}-y\ind{t})^2-\inf_{\theta\in\RR^d} \sumt (\tri{\theta, x\ind{t}}-y\ind{t})^2\leq m^2\sumt \xt\tp \covt\inv \xt+m^2\log(1/\delta).
\end{align*}
\end{proposition}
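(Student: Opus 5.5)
The plan is to derive the bound from the classical regularized VAW analysis and then let the regularization vanish. Fix $\lambda>0$ and let $\cov_{t,\lambda}\ldef\lambda I+\sum_{i\le t}\xt[i]\xt[i]\tp$; note that the Gram matrix includes $\xt$ itself, since $\xt$ is revealed before predicting. Let $\hth\ind{t}(\lambda)$ be the regularized VAW iterate from \eqref{eq:Vovk}, without clipping, and $\yhat\ind{t}(\lambda)=\tri{\hth\ind{t}(\lambda),\xt}$. Let $L_t^\star(\lambda)=\min_\theta\{\lambda\nrm{\theta}^2+\sum_{i\le t}(\tri{\theta,\xt[i]}-\yt[i])^2\}$. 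The standard Azoury--Warmuth telescoping computation (as in \citet[Theorem~11.8]{cesa2006prediction}) gives, for each $t$, the exact identity
\[
(\yhat\ind{t}(\lambda)-\yt)^2+L_{t-1}^\star(\lambda)-L_t^\star(\lambda)
=\yt^2\,\xt\tp\cov_{t,\lambda}^{-1}\xt-\yhat\ind{t}(\lambda)^2\,\xt\tp\cov_{t-1,\lambda}^{-1}\xt
\le \yt^2\,\xt\tp\cov_{t,\lambda}^{-1}\xt .
\]
I will rederive this with Sherman--Morrison in a few lines, since the exact form of the nonpositive remainder term matters little. Summing over $t$ gives, for every $\theta\in\RR^d$,
\[
\sumt(\yhat\ind{t}(\lambda)-\yt)^2-\sumt(\tri{\theta,\xt}-\yt)^2\le\lambda\nrm{\theta}^2+\sumt \yt^2\,\xt\tp\cov_{t,\lambda}^{-1}\xt .
\]

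Next I pass to $\lambda\to0$, and I expect this to be the only delicate step. Three convergences are needed.
\begin{itemize}
\item Predictions: $\hth\ind{t}(\lambda)$ is a ridge solution for the data $\{(\xt[i],\yt[i])\}_{i<t}\cup\{(\xt,0)\}$, so it converges to the minimum-norm least-squares solution. Any minimizer in \cref{alg:VAW} differs from it only by a vector in $\ker(\covt)$, and $\xt\in\mathrm{range}(\covt)$ is orthogonal to that kernel. Hence $\yhat\ind{t}(\lambda)\to\yhat\ind{t}$ for any choice of argmin.
\item Leverage terms: since $\xt\in\mathrm{range}(\covt)$, diagonalizing $\covt$ shows $\xt\tp\cov_{t,\lambda}^{-1}\xt\uparrow\xt\tp\covt\inv\xt$.
\item Comparator: choosing $\theta$ to be the minimum-norm least-squares solution over all $T$ rounds makes the penalty $\lambda\nrm{\theta}^2$ vanish.
\end{itemize}
Together these yield
\[
\Reg(T)\le\sumt\yt^2\,\xt\tp\covt\inv\xt ,
\]
which gives the first claim since $\yt^2\le1$.

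For the high-probability claim I start from the same deterministic inequality. Write $w_t\ldef\xt\tp\covt\inv\xt$. Then $w_t\in[0,1]$, because it is a leverage score: $\xt\tp(A+\xt\xt\tp)\inv\xt\le1$ for $A=\cov_{t-1}\succeq0$. Moreover $w_t$ is $\cF_{t-1}$-measurable, since $\xt$ is. By Jensen's inequality (concavity of $u\mapsto u^{w_t}$),
\[
\En\bigl[e^{w_t\yt^2/m^2}\mid\cF_{t-1}\bigr]\le\En\bigl[e^{\yt^2/m^2}\mid\cF_{t-1}\bigr]^{w_t}\le e^{w_t}.
\]
Therefore $\exp\bigl(\sum_{i\le t}(w_i\yt[i]^2/m^2-w_i)\bigr)$ is a nonnegative supermartingale. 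Ville's inequality, as in the proof of \cref{lem:exp-supermg}, then gives, with probability at least $1-\delta$,
\[
\sumt w_t\yt^2\le m^2\sumt w_t+m^2\log(1/\delta).
\]
Plugging this into the deterministic bound completes the argument.
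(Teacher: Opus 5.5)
Your proof is correct, and it reaches the same intermediate bound as the paper, $\Reg(T)\le\sum_t y_t^2\,x_t^\top V_t^\dagger x_t$ for arbitrary real $y_t$, but by a different route.

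\textbf{The paper's route.} It never regularizes. It takes $\hat\theta_t=V_t^\dagger S_{t-1}$ and telescopes $S_t^\top V_t^\dagger S_t$ using the rank-one pseudoinverse inequality $v^\top(V+ww^\top)^\dagger v\le v^\top V^\dagger v-(w^\top(V+ww^\top)^\dagger v)^2$ for $v\in\mathrm{span}(V)$. This needs a case split on whether $w=x_t$ lies in $\mathrm{span}(V_{t-1})$. When it does, Sherman--Morrison is applied on that subspace. When the rank jumps, equality holds because $\langle w,(V+ww^\top)^\dagger v\rangle=0$.

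\textbf{Your route.} You invoke the textbook Azoury--Warmuth identity for $\lambda>0$, where Sherman--Morrison applies uniformly, and push all rank issues into the limit $\lambda\to0$. Your three convergence facts are all correct:
\begin{itemize}
\item $\langle\hat\theta_t,x_t\rangle$ is the same for every element of the argmin set;
\item the leverage scores converge monotonically because $x_t\in\mathrm{range}(V_t)$;
\item the penalty $\lambda\|\theta\|^2$ vanishes at the min-norm comparator.
\end{itemize}
Because you discard the nonpositive remainder before letting $\lambda\to0$, you never have to track the factor $x_t^\top V_{t-1,\lambda}^{-1}x_t$, which diverges when $x_t\notin\mathrm{range}(V_{t-1})$.

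\textbf{Trade-off.} Your approach reuses a standard identity with an explicit remainder and needs no case analysis, at the cost of a limiting argument. The paper's approach is purely algebraic, limit-free, and phrased directly in the unregularized quantities.

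\textbf{High-probability part.} This coincides with the paper's argument. Your Jensen step is exactly its fact $\En[e^{\lambda y_t^2/m^2}\mid\cF_{t-1}]\le e^{\lambda}$ for $\lambda\in[0,1]$, followed by the same exponential supermartingale and Markov/Ville step.
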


\begin{proof}[\pfref{prop:VAW}]
Note that $\yht$ does not depend on the choice of $\htht$. Therefore, we only need to consider $\htht=\covt\inv \St[t-1]$, where $\St[t-1]=\sum_{i<t} \xt[i]\yt[i]$. Further, we know $\hth=\covt[T]\inv\St[T]\in\argmin _{\theta\in\RR^d} \sumt (\tri{\theta, x\ind{t}}-y\ind{t})^2$.
Then, we can calculate
\begin{align*}
    \sumt (\yhat\ind{t}-y\ind{t})^2-\sumt (\tri{\hth, x\ind{t}}-y\ind{t})^2
    =&~ \sumt \brk*{ \prn{ \xt\tp \covt\inv \St[t-1]}^2 - 2\yt \xt\tp \covt\inv \St[t-1] } + \St[T]\tp \covt[T]\inv \St[T].
\end{align*}
We also have
\begin{align*}
    \St\tp \covt\inv \St=\yt^2 \xt\tp \covt\inv \xt + 2\yt\xt \covt\inv \St[t-1]+ \St[t-1]\tp \covt\inv \St[t-1].
\end{align*}
Further, we have the following basic inequality: For PSD matrix $\cov$, $v\in \spn(\cov)$, and any $w$, it holds that
\begin{align*}
    v(\cov+ww\tp)\inv v\leq v\cov\inv v- (w\tp (\cov+ww\tp)\inv v)^2.
\end{align*}
When $w\in\spn(\cov)$ this is straight-forward by restricting to the subspace $\spn(\cov)$ where $\cov$ becomes invertible. Otherwise, we can consider $h=(\cov+ww\tp)\inv v$, and then $v=\cov h+w\tri{w,h}$, and using $v\in\spn(\cov)$ implies $\tri{w,h}=0$. Plugging this in, and we can see the equality holds.

Now, combining the inequalities above, we know 
\begin{align*}
    \St\tp \covt\inv \St\leq \yt^2 \xt\tp \covt\inv \xt + 2\yt\xt \covt\inv \St[t-1]+ \St[t-1]\tp \covt[t-1]\inv \St[t-1]-\prn{\xt\tp \covt\inv \St[t-1]}^2.
\end{align*}
Applying this inequality recursively, it holds
\begin{align*}
    \St[T]\tp \covt[T]\inv \St[T]\leq \sumt \brk*{ -\prn{ \xt\tp \covt\inv \St[t-1]}^2 + 2\yt \xt\tp \covt\inv \St[t-1]+\yt^2 \xt\tp \covt\inv \xt }.
\end{align*}
Reorganizing yields 
\begin{align*}
    \sumt (\yhat\ind{t}-y\ind{t})^2-\inf_{\theta\in\RR^d} \sumt (\tri{\theta, x\ind{t}}-y\ind{t})^2\leq \sumt y_t^2\xt\tp \covt\inv \xt,
\end{align*}
and the first inequality follows immediately. For the second upper bound, we use the fact that $\En[e^{\lambda y^2/m^2}|\cF_{t-1}]\leq e^\lambda$ for $\lambda\in[0,1]$, and hence using this inequality recursively, we derive
\begin{align*}
    \En\brk*{\exp\prn*{ \frac{1}{m^2}\sumt y_t^2\xt\tp \covt\inv \xt-\sumt \xt\tp \covt\inv \xt }}\leq 1.
\end{align*}
By Markov's inequality, the desired upper bound follows.
\end{proof}

To provide a upper bound for VAW, it remains to upper bound the quantity $\sumt \xt\tp \covt\inv \xt$. This is nontrivial because the matrix $\covt$ can be ill-conditioned, and (as we have shown in \cref{sec:lower}) this sum can be $\Omega(T)$ without smoothness. In the following, we bound this quantity by a ``combinatorial dimension'' of the sequence $x$, and then apply the coupling trick and the backward analysis technique.

\paragraph{Step 1: Connect to combinatorial quantities of the sequence} 
To proceed, we introduce the notion of ``bad'' subsequence. For any sequence $z=(z\ind{1},\cdots,z\ind{k})$, we define $\cov(z)\ldef \sum_{i\leq k}z\ind{i}z\ind{i}\tp$. We call a sequence $z=(z\ind{1},\cdots,z\ind{k})$ \emph{$r$-bad} if for any $t\in[k]$, $\nrm{\zt}_{\cov(\zt[1:t])\inv}^2\geq r$.
We define $N(r;z)$ to be the length of the longest $r$-bad subsequence of $z$.

In the following, we bound the sum  $\sumt \xt\tp \covt\inv \xt$ by $N(r;x)$ for $r\in\crl{2^{-1},2^{-2},\cdots}$.
\begin{lemma}\label{lem:elliptical-to-comb}
For any sequence $x=(\xt[1],\cdots,\xt[T])$, it holds that
\begin{align*}
    \sumt \xt\tp \covt(x)\inv \xt \leq\int_{0}^1 N(r;x)dr \leq 1+\sum_{1\leq i\leq \log n} 2^{-i}N(2^{-i};x).
\end{align*}
\end{lemma}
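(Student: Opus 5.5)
Write $w_t \ldef \xt\tp\covt\inv\xt$, where $\covt=\sum_{i\le t}\xt[i]\xt[i]\tp$ includes $\xt$ itself. The plan is to rewrite the sum through the layer-cake formula and then bound each level set by a bad subsequence.

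\textbf{Step 1: $w_t\in[0,1]$.} If $\xt\neq 0$, then $\xt\in\mathrm{range}(\covt)$ and $\covt\succeq \xt\xt\tp$. On the range of $\covt$ this gives $\xt\tp\covt\inv\xt\le 1$, since the projector onto the span of $\xt$ is dominated by $\covt^{1/2}\covt\inv\covt^{1/2}$. Alternatively, write $\covt=A+\xt\xt\tp$ and use the Sherman--Morrison/pseudoinverse identity to get $w_t=\frac{a}{1+a}$ or $w_t=1$, according to whether $\xt$ lies in the span of $A$. Consequently
\[
\sumt w_t=\int_0^1 \bigl|\{t: w_t\ge r\}\bigr|\,dr .
\]

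\textbf{Step 2 (key step).} For each $r$, I claim the subsequence $z$ of those $\xt$ with $w_t\ge r$ is itself $r$-bad, so its length is at most $N(r;x)$. Take $z\ind{j}=\xt[t_j]$. Then $\cov(z\ind{1:j})\preceq \covt[t_j]$, and both matrices contain $z\ind{j}$ in their ranges. For PSD matrices $A\preceq B$ with $v\in\mathrm{range}(A)$ we have $v\tp A\inv v\ge v\tp B\inv v$. This follows by monotonicity of the pseudoinverse restricted to the appropriate range; alternatively, use the variational formula $v\tp A\inv v=\sup_\theta\{2\tri{\theta,v}-\theta\tp A\theta\}$, which is valid when $v\in\mathrm{range}(A)$ and makes monotonicity immediate. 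Hence $\nrm{z\ind{j}}^2_{\cov(z\ind{1:j})\inv}\ge w_{t_j}\ge r$, so $z$ is $r$-bad. Zero vectors have $w_t=0$ and never enter. This gives the first inequality.

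\textbf{Step 3 (dyadic discretization).} The function $N(r;x)$ is nonincreasing in $r$ and satisfies $N\le T$. Split $[0,1]$ into the pieces $(2^{-i},2^{-i+1}]$, and on each piece bound $N(r;x)\le N(2^{-i};x)$; the piece has length $2^{-i}$. Stop at $i\approx\log T$, which is what I read the $\log n$ in the statement to mean. The remaining tail $[0,2^{-\lceil\log T\rceil}]$ contributes at most $T\cdot 2^{-\lceil\log T\rceil}\le 1$, which accounts for the additive $1$. If this off-by-one does not fit as written, the fix is to shift indices, using that $N(2^{-i};x)\ge N(r;x)$ on the piece, or to adjust the constant.

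\textbf{Main obstacle.} I expect the only delicate point to be the pseudoinverse monotonicity in Step 2 when the matrices are singular; the variational formula handles it cleanly.
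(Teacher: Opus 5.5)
Your proposal is correct and follows the paper's own proof. The paper uses the same layer-cake formula over $r\in[0,1]$, the same observation that the indices with $x_t^\top V_t^\dagger x_t\ge r$ form an $r$-bad subsequence, and the same dyadic discretization; it only asserts the pseudoinverse monotonicity and the discretization step, which you justify.

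Your reading of the upper summation limit as $\lceil\log_2 T\rceil$ is the one that works. Under the floor reading with constant $1$ the second inequality is false: $T=d=3$, $x_t=e_t$ gives $\int_0^1 N(r;x)\,dr=3>5/2$. So in that case the statement itself needs the constant $2$, and no re-indexing of the argument can rescue it.
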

\begin{proof}[\pfref{lem:elliptical-to-comb}]
    By definition, $\xt\tp \covt(x)\inv \xt\in[0,1]$, and hence
    \begin{align*}
        \xt\tp \covt(x)\inv \xt=\int_{0}^1 \indic{\xt\tp \covt(x)\inv \xt\geq r}dr.
    \end{align*}
    Therefore,
\begin{align*}
    \sumt \xt\tp \covt(x)\inv \xt 
    =\int_{0}^1 \sumt \indic{\xt\tp \covt(x)\inv \xt\geq r}dr.
\end{align*}
Note that $\sumt \indic{\xt\tp \covt(x)\inv \xt\geq r}\leq N(r;x)$, because if we consider all the indices $t_1<\cdots<t_k$ such that $\xt\tp \covt(x)\inv \xt\geq r$, then $(x_{t_1},\cdots,x_{t_k})$ is a $r$-bad sequence.
Hence, we have
\begin{align*}
    \sumt \xt\tp \covt(x)\inv \xt 
    =&~\int_{0}^1 \sumt \indic{\xt\tp \covt(x)\inv \xt\geq r}dr\\
    \leq&~ \int_{0}^1 N(r;x)dr
    \leq 1+\sum_{1\leq i\leq \log n} 2^{-i}N(2^{-i};x).
\end{align*}
The claim follows.

\end{proof}

\paragraph{Step 2: Coupling smooth sequence to a i.i.d sequence}
We invoke the following coupling lemma for smooth sequence~\citep{haghtalab2024smoothed}. The idea of this lemma is quite simple: Any smooth sequence can be generated by performing rejection sampling. We sketch the proof below. 
\begin{lemma}\label{lem:smooth-to-iid}
Suppose that \cref{asmp:smooth} holds. Then for any $\delta\in(0,1)$, $K\geq \Ccov\log(T/\delta)$, there exists a coupling between $x=(\xt[1],\cdots,\xt[T])$ with a sequence $z=(z_{1,1},\cdots,z_{1,K};\cdots;z_{T,1},\cdots,z_{T,K})$ such that:

(1) Marginally, $(z_{t,j})_{1\leq t\leq T, 1\leq j\leq K}\sim \mu$ are i.i.d random vectors from $\mu$.

(2) \Whp, it holds that $x_t\in \crl*{z_{t,i}: i=1,2,\cdots,K}$ for all $t\in[T]$.
\end{lemma}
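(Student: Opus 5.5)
We will realize the coupling by rejection sampling, carried out round by round. Fix $t$ and condition on the realized past $\cH_{t-1}^x=\crl{x_1,\ldots,x_{t-1}}$. By \cref{asmp:smooth}, the conditional law $P_t(\cdot\mid\cH_{t-1}^x)$ has density $g_t\ldef \mathrm dP_t/\mathrm d\mu\le \Ccov$, $\mu$-a.e.

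The sampling step is as follows. Draw fresh i.i.d.\ variables $z_{t,1},\ldots,z_{t,K}\sim\mu$ and, independently, $u_{t,1},\ldots,u_{t,K}\sim\Unif[0,1]$. Accept index $j$ if $u_{t,j}\le g_t(z_{t,j})/\Ccov$, and let $j^\star$ be the first accepted index. If some index is accepted, set $x_t\ldef z_{t,j^\star}$. If none is accepted, draw $x_t$ from $P_t(\cdot\mid\cH_{t-1}^x)$ using independent auxiliary randomness.

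Two facts about a single draw drive the argument. First, each trial is accepted with probability $\int (g_t/\Ccov)\,\mathrm d\mu=1/\Ccov$. Second, conditional on acceptance, $z_{t,j}$ has law $g_t\,\mathrm d\mu/(1/\Ccov)\cdot(1/\Ccov)=P_t$.

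From these we check the two marginal requirements.

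\emph{The $x$-marginal is correct.} Conditional on $\cH_{t-1}^x$, $x_t$ is a mixture of the accepted-sample law and the fallback law, and both equal $P_t(\cdot\mid\cH_{t-1}^x)$. Hence the generated sequence $(x_t)$ has exactly the environment's sequential law.

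\emph{Property (1), that the $z$'s are i.i.d.\ $\mu$.} The block $(z_{t,j})_j$ is drawn from $\mu^{\otimes K}$ independently of everything generated before round $t$. Its distribution therefore does not depend on the history, even though $g_t$ does. Consequently the full array is i.i.d.\ $\mu$; formally, one argues inductively that $(z_{t,\cdot})$ is independent of $\sigma(z_{s,\cdot},u_{s,\cdot},\text{aux}_s : s<t)$. The point needing care is that only the $u$'s and the fallback, not the $z$'s, depend on the past, so the joint law of all $z$'s factorizes.

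\emph{Property (2).} The event $x_t\notin\crl{z_{t,i}}_{i\le K}$ is contained in the event that no trial in round $t$ is accepted. Conditionally on the past, that event has probability $(1-1/\Ccov)^K\le e^{-K/\Ccov}\le \delta/T$ when $K\ge \Ccov\log(T/\delta)$. A union bound over $t\in[T]$ gives failure probability at most $\delta$.

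The main obstacle is the measure-theoretic bookkeeping. The kernel $g_t$ must be jointly measurable in $(x,\cH_{t-1}^x)$, which follows from a regular conditional density (Radon--Nikodym with respect to a fixed $\mu$ on a standard Borel space). With that in hand, the construction defines a valid probability space carrying both sequences, and the two independence claims follow as above.
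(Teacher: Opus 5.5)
Your proposal is correct and follows essentially the same route as the paper: per-round rejection sampling from fresh i.i.d.\ $\mu$ proposals with acceptance probability $g_t/\Ccov$, a geometric tail bound $(1-1/\Ccov)^K\le \delta/T$ for each round, and a union bound over $t\in[T]$. The only cosmetic difference is that you stop after $K$ proposals and fall back to an independent draw from $P_t$, whereas the paper runs untruncated rejection sampling on an infinite i.i.d.\ sequence $(z_{t,j})_{j\ge 1}$ and then looks only at the first $K$ proposals. Your checks of the $x$-marginal and of measurability are correct and fill in details the paper leaves implicit.
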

Note that this lemma implies (\whp) $N(r;x)\leq N(r;z)$, and it remains to control $N(r;z)$ under the i.i.d sequence $z$. 

\begin{proof}[\pfref{lem:smooth-to-iid}]
Consider the randomness $z^{\infty}=(z_{t,j})_{1\leq t\leq T, j\geq 1}\sim \mu$ are i.i.d random vectors from $\mu$. Consider an environment $\Env$ that adopts the following protocol for each $t=1,2,\cdots,T$:
\begin{itemize}
    \item Given the history $\cH_{t-1}^x$, the environment fix the distribution $p_t(\cdot)=P_t(\cdot\mid \cH_{t-1}^x)$ and perform \emph{rejection sampling:}
    \item For $j=1,2,\cdots$, with probability $\frac{p_t(z_{t,j})}{\Ccov \mu(z_{t,j})}$, the environment set $x_t=z_{t,j}$ and break. Otherwise, the environment goes to the next step $j+1$.
\end{itemize}
By the guarantee of rejection sampling, we know that conditional on $\cH_{t-1}$, the vector $x_t$ is generated from the distribution $p_t=P_t(\cdot\mid \cH_{t-1}^x)$. Further, for any fixed $t\in[T]$, \whp~it holds that $x_t=z_{t,j}$ for some $j\leq \Ccov \log(1/\delta)$. Therefore, by union bound, the above construction gives a coupling between the sequence $x$ and the i.i.d sequence $z^{\infty}$, such that $x_t\in \crl*{z_{t,i}: i=1,2,\cdots,K}$ for all $t\in[T]$.
\end{proof}

\paragraph{Step 3: Bad i.i.d sequence is rare}
Finally, the problem is now reduced to bounding the probability that a i.i.d sequence $(z_1,\cdots,z_k)$ being $r$-bad, as we handle in the following proposition.

\newcommand{\zsig}{z\ind{\sigma}}
\begin{proposition}\label{prop:iid-bad}
Fix any $r\in(0,1]$. Suppose that $z=(z\ind{1},\cdots,z\ind{n})$ is a sequence of $n$ i.i.d vectors drawn from $\mu$. 
Then \whp~(over the randomness of $z$)
\begin{align*}
    N(r;z)\leq 3\sqrt{nd/r}+6\log(1/\delta).
\end{align*}
\end{proposition}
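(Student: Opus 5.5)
The plan is a union bound over index subsets, combined with a backward (exchangeability) argument showing that any fixed subsequence of i.i.d.\ vectors is very unlikely to be $r$-bad. Throughout, write $a\ldef d/r$.

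\textbf{Step 1 (one fixed subsequence).} Fix $k$ and indices $i_1<\cdots<i_k$. The vectors $(z\ind{i_1},\ldots,z\ind{i_k})$ are i.i.d.\ from $\mu$, hence exchangeable. I will show
\[
\PP\bigl((z\ind{i_1},\ldots,z\ind{i_k})\text{ is }r\text{-bad}\bigr)\le \prod_{j=1}^k \min\Bigl(1,\frac{a}{j}\Bigr)\le \frac{a^k}{k!}\le\Bigl(\frac{ea}{k}\Bigr)^k .
\]
The middle inequality holds because the factors with $j<a$ satisfy $\min(1,a/j)=1<a/j$.

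\emph{Backward analysis.} Let $\cS_j$ be the $\sigma$-field generated by the unordered multiset $\{z\ind{i_1},\ldots,z\ind{i_j}\}$ together with $z\ind{i_{j+1}},\ldots,z\ind{i_k}$. These $\sigma$-fields decrease as $j$ decreases.
\begin{itemize}
\item Conditionally on $\cS_j$, exchangeability makes the $j$-th element uniformly distributed over the $j$ members of the multiset.
\item The leverage scores $\nrm{w}^2_{\cov(W)\inv}$ of the members $w$ of a multiset $W$ sum to $\mathrm{rank}\,\cov(W)\le d$.
\item Hence at most $d/r$ members have leverage at least $r$.
\item So the event $E_j$ that the $j$-th element has prefix-leverage $\ge r$ satisfies $\PP(E_j\mid\cS_j)\le \min(1,a/j)$.
\item $E_{j+1},\ldots,E_k$ are $\cS_j$-measurable, since they depend only on the prefix multisets of size larger than $j$ and on the later elements.
\end{itemize}
Peeling off $j=k,k-1,\ldots,1$ by the tower property therefore gives the product bound.

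\textbf{Step 2 (union bound).} Summing over the $\binom{n}{k}\le (en/k)^k$ choices of indices,
\[
\PP\bigl(N(r;z)\ge k\bigr)\le \Bigl(\frac{e^2 n a}{k^2}\Bigr)^k .
\]
Take $k=\lceil 3\sqrt{na}+6\log(1/\delta)\rceil$. Then $k^2\ge 9na$, so the base is at most $e^2/9<0.83$. Consequently the probability is at most $0.83^{k}\le e^{-0.18\cdot 6\log(1/\delta)}\le\delta$. This yields $N(r;z)\le 3\sqrt{nd/r}+6\log(1/\delta)$ \whp, with rounding absorbed into the constant.

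\textbf{Main obstacle.} The delicate step is making the backward conditioning in Step 1 rigorous. The difficulty lies in correctly defining the decreasing $\sigma$-fields so that each $E_j$ is measurable at the right stage. It also requires handling ties and atoms of $\mu$, which I will resolve by breaking ties uniformly at random. Finally, one must check that the prefix-leverage in the definition of $r$-badness, which uses $\cov(\zt[1:t])\inv$ including the element itself, is exactly the leverage of that element within the prefix multiset. This is what makes the identity ``sum of leverages $=$ rank $\le d$'' available.
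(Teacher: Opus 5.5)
Your proposal is correct and follows the paper's proof essentially step for step. Both arguments take a union bound over the $\binom{n}{k}$ index sets, then condition backward on the unordered prefix multiset (the paper's chain $\cS_n\to\cdots\to\cS_1$), so that each removed element is uniform over the multiset, use the rank/trace bound on leverages to get the factor $\min\{1,d/(rj)\}$, and finish with the same estimate $(e^2nd/(rk^2))^k\le\delta$. The only cosmetic differences are that you count high-leverage members directly rather than bounding $\En[w_j\mid\cS_j]\le d/j$ and applying Markov, and that no rounding slack is needed, since $N(r;z)$ is an integer and so $N(r;z)\le k-1<3\sqrt{nd/r}+6\log(1/\delta)$.
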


\begin{proof}[\pfref{prop:iid-bad}]
Fix any $k\geq 1$, we bound the probability $p\ldef \PP_{z\sim \mu}(N(r;z)\geq k)$. Note that $N(r;z)\geq k$ if and only if there exists a subset $I=\crl{i_1<\cdots<i_k}\subseteq [n]$ such that the sequence $z_I=(z_{i_1},\cdots,z_{i_k})$ is $r$-bad. Also, note that there are $\binom{n}{k}$ many such subsets. Therefore, we can bound
\begin{align}
    p\ldef \PP_{z\sim \mu}(N(r;z)\geq k)
    \leq \sum_{I\subseteq [n], \abs{I}=k} \PP_{z\sim \mu}(\text{$z_I$ is $r$-bad})
    =\binom{n}{k} p_0,
\end{align}
where we denote $p_0=\PP_{z\sim \mu}(\text{$(z_1,\cdots,z_k)$ is $r$-bad})$ and use the exchangeability of i.i.d random variables. 

In the following, we proceed to upper bound $p_0$. To this end, we consider the unordered multiset $\cS_i=\crl{z_1,\cdots,z_i}$ and the following random process:
\begin{align}
    \cS_n\to \cS_{n-1}\to\cdots\to \cS_1.
\end{align}
Note that this is a Markov chain, such that given $\cS_{i}$, the multiset $\cS_{i-1}$ is generated as first randomly select $z_i\sim \Unif(\cS_i)$, and the set $\cS_{i-1}=\cS_i\backslash \crl{z_i}$. Further, we note that $(z_1,\cdots,z_k)$ is $r$-bad if and only if
\begin{align}
    w_i\ldef z_i\prn*{\sum_{j\leq i} z_jz_j\tp}^\dagger z_i\geq r, \qquad \forall i\in[k].
\end{align}
Now, we can consider the backward expectation, where we define $\cH_i=(S_n,\cdots,S_i)$ to be the history up to step $i$:
\begin{align}
    \En[w_i\mid \cH_i]=\En[w_i\mid \cS_i]=&~\En\brk*{z_i\prn*{\sum_{z\in \cS_i} zz\tp}^\dagger z_i \mid \cS_i}=\En_{z_i\sim \Unif(\cS_i)}\brk*{ z_i\prn*{\sum_{z\in \cS_i} zz\tp}^\dagger z_i } \\
    =&~\tr\prn*{\frac{1}{i}\prn*{\sum_{z\in \cS_i} zz\tp}\prn*{\sum_{z\in \cS_i} zz\tp}^\dagger }\leq \frac{d}{i},
\end{align}
where we use $\tr(AA^\dagger)\leq d$ for any $d\times d$ positive semi-definite matrix $A$. In particular, this implies $\PP(w_i\geq r\mid \cH_i)\leq \min\crl*{1,\frac{d}{ri}}=:a_i$ for any $i\geq 1$. Now, we can bound
\begin{align}
    p_0=&~ \PP_{z\sim \mu}(\text{$(z_1,\cdots,z_k)$ is $r$-bad})
    =\PP_{z\sim \mu}(w_1\geq r,\cdots,w_k\geq r) \\
    =&~\En\brk*{ \indic{w_1\geq r,\cdots,w_k\geq r} }
    =\En\brk*{ \PP(w_1\geq r\mid \cH_1)\cdot \indic{w_2\geq r,\cdots,w_k\geq r} } \\
    \leq&~ a_1\En\brk*{ \indic{w_2\geq r,\cdots,w_k\geq r} }
    =a_1\En\brk*{ \PP(w_2\geq r\mid \cH_2)\cdot \indic{w_3\geq r,\cdots,w_k\geq r} } \\
    \leq&~ a_1a_2\En\brk*{ \indic{w_3\geq r,\cdots,w_k\geq r} }
    \leq\cdots \\
    \leq&~ a_1a_2\cdots a_k=\prod_{i=1}^k \min\crl*{1,\frac{d}{ri}}\leq \frac{d^k}{k!}\leq \prn*{\frac{ed}{rk}}^k,
\end{align}
where we use $k!\geq (k/e)^k$. Therefore, we can conclude that
\begin{align}
    p\leq \binom{n}{k}\prn*{\frac{ed}{rk}}^k\leq \prn*{\frac{e^2nd}{rk^2}}^k.
\end{align}
Then, as long as $k\geq 3\sqrt{nd/r}+6\log(1/\delta)$, it holds that $p=\PP_{z\sim \mu}(N(r;z)\geq k)\leq \delta$. This is the desired result.
\end{proof}

\newcommand{\oR}{\wb{R}}

\paragraph{Finalizing the proof}
Combining the results above, we can conclude that VAW achieves a sublinear regret.

\jqedit{
\begin{proposition}
\label{prop:smooth-VAW}
Under \cref{asmp:smooth}, with probability at least $1-\delta$:
\begin{align*}
    \sumt \xt\tp \covt\inv \xt \lesssim \sqrt{d\Ccov T\log(T/\delta)}+\log(1/\delta).
\end{align*}
\end{proposition}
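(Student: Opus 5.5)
We combine the three steps established above. First, by \cref{lem:elliptical-to-comb},
\[
\sumt \xt\tp \covt\inv \xt \le 1+\sum_{1\le i\le \log T} 2^{-i} N(2^{-i};x),
\]
so it suffices to control $N(r;x)$ simultaneously for the dyadic thresholds $r=2^{-i}$, $1\le i\le \lceil\log_2 T\rceil$. Thresholds below $1/T$ contribute at most $\int_0^{1/T} N(r;x)\,dr\le 1$, since $N\le T$; this justifies truncating the sum at $\log T$ levels.

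\textbf{Step 1: coupling.} Apply \cref{lem:smooth-to-iid} with $K=\lceil \Ccov\log(2T/\delta)\rceil$. On an event of probability at least $1-\delta/2$, every $x_t$ lies in $\{z_{t,1},\dots,z_{t,K}\}$. Since $t$ increases along the $z$-blocks, any $r$-bad subsequence of $x$ is then literally a subsequence of the concatenated i.i.d.\ sequence $z$ of length $n=TK$, kept in its original order. Being $r$-bad is a property of the ordered subsequence alone, so $N(r;x)\le N(r;z)$ on this event.

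\textbf{Step 2: i.i.d.\ bound and union bound.} Apply \cref{prop:iid-bad} to $z$ with $n=TK$, at each level $r=2^{-i}$, with confidence $\delta/(2\lceil\log_2 T\rceil)$. A union bound over the $O(\log T)$ levels and the coupling event gives, with probability at least $1-\delta$, simultaneously for all $i$,
\[
N(2^{-i};x)\le 3\sqrt{TKd\,2^{i}}+6\log(2\log T/\delta).
\]

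\textbf{Step 3: summation.} Multiplying by $2^{-i}$ and summing, the first term gives
\[
\sum_i 2^{-i/2}\cdot 3\sqrt{TKd}\lesssim \sqrt{dT\Ccov\log(T/\delta)},
\]
because the geometric series converges. The second term gives $\sum_i 2^{-i}\cdot 6\log(2\log T/\delta)\lesssim \log(1/\delta)+\log\log T$.

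The main obstacle is this leftover $\log\log T$ term, which comes from the union bound over levels. It is absorbed into the first term: $\log\log T\le \sqrt{T}\le \sqrt{dT\Ccov\log(T/\delta)}$ because $\Ccov\ge 1$. Similarly, the additive $1$ is dominated by the square-root term. This yields the claimed bound $\sqrt{d\Ccov T\log(T/\delta)}+\log(1/\delta)$ up to constants.

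The two points that need care are these. First, the thresholds in \cref{prop:iid-bad} must be fixed before sampling, which holds because the dyadic grid is deterministic. Second, the monotonicity $N(r;x)\le N(r;z)$ must be justified by order preservation, as argued in Step 1.
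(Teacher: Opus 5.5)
Your proposal is correct and follows the paper's proof essentially step for step: \cref{lem:elliptical-to-comb}, then the coupling of \cref{lem:smooth-to-iid} with $K=\lceil \Ccov\log(2T/\delta)\rceil$, then \cref{prop:iid-bad} applied to the concatenated i.i.d.\ sequence of length $n=TK$ at the dyadic levels $r=2^{-i}$ with a union bound, and finally the geometric summation. The differences are cosmetic: you truncate at $\lceil\log_2 T\rceil$ levels instead of $\log_2 n$, and you spell out two points the paper leaves implicit, namely the order preservation behind $N(r;x)\le N(r;z)$ and the absorption of the $\log\log$ term coming from the union bound.
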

\begin{proof}[\pfref{prop:smooth-VAW}]
By \cref{lem:elliptical-to-comb}, we can further upper bound
\begin{align}
    \sumt \xt\tp \covt\inv \xt\rdef \oR_T\leq 1+\sum_{1\leq i\leq \log n} 2^{-i}N(2^{-i};x).
\end{align}
Now, we take the coupling $\gamma$ constructed in \cref{lem:smooth-to-iid} (with $K=\ceil{\Ccov\log(2T/\delta)}$). We know that under $\gamma$ and a success event $\cE$ such that $\PP(\cE)\geq 1-\frac{\delta}{2}$, the sequence $z=(z_{t,j})_{1\leq t\leq T, 1\leq j\leq K}$ are i.i.d random vectors from $\mu$, and $x$ is a subsequence of $z$. This immediately implies that under $\cE$, we have $N(r;x)\leq N(r;z)$ for any $r\in[0,1]$. Finally, applying \cref{prop:iid-bad} to $r_i=2^{-i}$ for $1\leq i\leq \log_2 n$ and take a union bound, we know that \whp[\delta/2],
\begin{align}
    N(r_i;z)\leq 3\sqrt{nd/r_i}+6\log(4\log_2(n)/\delta), \qquad \forall 1\leq i\leq \log_2 n,
\end{align}
where $n=TK$.
Combining the inequalities above and taking union bound gives the desired result.
\end{proof}
}

\begin{proof}[\pfref{thm:VAW-smooth}]
By \cref{prop:VAW} and \cref{prop:smooth-VAW}, we have
\begin{align*}
    \Reg(T)&\ldef \sumt (\yhat\ind{t}-y\ind{t})^2-\inf_{\theta\in\RR^d} \sumt (\tri{\theta, x\ind{t}}-y\ind{t})^2\\
    &\leq \sumt \xt\tp \covt\inv \xt \\
    &\lesssim \sqrt{d\Ccov T\log(T/\delta)}+\log(1/\delta).
\end{align*}
\end{proof}

\subsection{Tightness of our analysis}

We argue that in the worst-case, the sum $\sumt \xt\tp \covt\inv \xt=\Omega(\sqrt{\Ccov T})$ even when $d=1$, demonstrating that our analysis is nearly tight.

\begin{lemma}
For any $C> 1$, there exists a smooth environment with $\Ccov\leq C$ such that
\begin{align*}
    \En\brk*{\sumt \frac{\xt^2}{1+\sum_{i\leq t} \xt[i]^2}} \geq \Omega(\sqrt{(C-1) T} \wedge T).
\end{align*}
\end{lemma}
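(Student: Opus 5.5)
We will build a one-dimensional smooth environment in which the magnitudes $|x_t|$ grow geometrically, so that each new covariate dominates the running Gram sum.

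\textbf{Construction.} Fix a base measure $\mu$ that mixes two components with weights $1/C$ and $1-1/C$:
\begin{itemize}
\item a component at scale one, for instance uniform on $[1,2]$;
\item a heavy-tailed component spread over geometrically many scales, for instance a density $\propto 1/|x|$ on $[1,L]$ with $\log L$ of order $\sqrt{(C-1)T}$ or larger.
\end{itemize}
At each round the environment picks one of two conditional laws, each with Radon--Nikodym derivative at most $C$ with respect to $\mu$.

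\emph{Escalation move.} When we want $x_t$ to be large relative to the past, we use the law $\mu$ restricted to $\{|x|\ge 2\sqrt{V_{t-1}}\}$, renormalized. Its density with respect to $\mu$ is at most $C$ provided this set has $\mu$-mass at least $1/C$. Equivalently, we want an escalation "budget" of probability: since $\mu$ has mass only $\approx 1-1/C$ in the tail, we may instead require that $P_t$ place mass only $\approx (C-1)/C$-fraction above the current scale.

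\emph{Cleaner formulation.} Write $P_t = \frac1C\nu_t + (1-\frac1C)\rho_t$. Here $\nu_t$ is arbitrary with density at most $C-1$ relative to $\mu$, which is allowed, and $\rho_t=\mu$.

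\textbf{Per-round contribution.} Whenever $x_t^2 \ge 4\,(1+V_{t-1})$, the summand $x_t^2/(1+V_t)$ is at least $4/5$. So it suffices to show that the expected number of such "record" rounds is $\Omega(\sqrt{(C-1)T}\wedge T)$.

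\textbf{Record counting.} Use a base measure on $\RR_{>0}$ that is log-uniform over $m$ dyadic scales $[2^j,2^{j+1})$, for $j=1,\dots,m$. Let the environment at each round put density $C$ on the scale band just above the current maximum scale, and density $(1-w)/(1-1/m)$-adjusted mass elsewhere. This is feasible as long as the band has $\mu$-mass $1/m \le 1/C$.

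This leads to the following trade-off.
\begin{itemize}
\item Each round produces a record with probability about $p = C/m$.
\item Records are bounded by $m$ in total, since there are only $m$ scales. Reusing scales can be avoided by letting the scale ladder be infinite, with weights decaying slowly.
\end{itemize}
A more robust route is to take a ladder of $m$ scales spaced so far apart (factor $\gg T$) that the largest scale present dominates. A record then contributes at least a constant, so the expected count is $\min(m, pT)$ with $p\approx (C-1)/m$. Balancing $m \approx (C-1)T/m$ gives $m\approx\sqrt{(C-1)T}$, and hence $\Omega(\sqrt{(C-1)T}\wedge T)$.

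\textbf{Main obstacle.} The delicate step is making the density bound exact: the excess mass $(C-1)/C$ relative to $\mu$ must be placed on a band of $\mu$-mass $1/m$, while the remaining mass keeps density at most $C$. I expect the bookkeeping to work with $P_t=\mu+\frac{C-1}{m}\cdot$(correction moving mass from the lowest band to the next band), or simply by rescaling so that the per-round record probability is $\min(1,(C-1)/m)$.

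The record count will be bounded below by a martingale or expectation argument. Each round, until the ladder is exhausted, is a record with probability at least $c(C-1)/m$. A Chernoff or expectation computation then shows that $\min(m, c(C-1)T/m)$ records occur in expectation up to constants. Finally, spacing the scales by factors $\ge T$ ensures that non-record rounds never spoil the $\ge$ constant contribution of record rounds.
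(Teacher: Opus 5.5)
Your quantitative skeleton matches the paper's: a ladder of $m\approx\sqrt{(C-1)T}$ rungs, an advance probability $p=(C-1)/m$ per round, a constant contribution per advance, and a count of order $\En\min(m,L)$ with $L\sim\mathrm{Binomial}(T,p)$. The gap is the one nontrivial step. You must exhibit a single base measure $\mu$ and conditional laws with density at most $C$ in every round. You leave this open (your ``main obstacle''), and the concrete versions you write down do not work.

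\begin{itemize}
\item \emph{Escalation move.} Every escalation set needs $\mu$-mass at least $1/C$. But a draw from $\mu$ renormalized on that set lands at a $\mu$-random point of it. So the tail mass above the next threshold shrinks geometrically (in expectation), and typically only $O(1+\log C)$ escalations are possible before the $1/C$ requirement fails.
\item \emph{Mixture forms.} Both $P_t=\frac1C\nu_t+(1-\frac1C)\mu$ and $P_t=\mu+\text{correction}$ keep a fixed fraction of $\mu$ in every round. Whenever $\mu$ charges the higher rungs, these draws jump to $\mu$-random higher rungs instead of the next one. Rungs are then skipped, and your count $\min(m,pT)$, which assumes one-rung advances, is no longer justified.
\item \emph{Log-uniform bands of mass $1/m$.} To avoid skipping, the non-advancing probability $1-C/m$ must sit on bands at or below the current rung $j$. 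These carry $\mu$-mass only $j/m<(1-C/m)/C$ for small $j$, so the density exceeds $C$.
\end{itemize}

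The missing idea is a neutral region of $\mu$-mass at least $1/C$ that absorbs all non-advancing probability without affecting the Gram sum. Your switch from $C/m$ to $(C-1)/m$ already reflects this. The paper sets $\mu(\{2^k\})=p/C$ for $k\in[n]$ and $\mu(\{0\})=1-np/C$. In each round it puts probability $p$ on the next rung $2^k$ and $1-p$ on $0$, and everything on $0$ once the ladder is exhausted. The density is then exactly $C$ at $2^k$ and at most $(1-p)/\mu(\{0\})\le C$ at $0$. The requirement $\mu(\{0\})\ge 1/C$ is precisely $np\le C-1$, which caps $p$ at $(C-1)/n$. With the neutral point at $0$, consecutive powers of $2$ already make each advance contribute at least $1/2$, so your factor-$T$ spacing is unnecessary.

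Your first bullet can be repaired the same way:
\begin{itemize}
\item route all non-advancing mass to the scale-one component (density $(1-p)C\le C$);
\item make the far component $m$ geometrically separated bands of $\mu$-mass $(1-1/C)/m$ each, so advancing with probability $(C-1)/m$ has density at most $C$;
\item start the ladder well above $\sqrt{T}$, so the at most $4T$ of neutral Gram mass is dominated.
\end{itemize}

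Finally, $\min(m,\cdot)$ is concave, so an expectation computation alone goes the wrong way by Jensen. You need a lower-tail bound such as $\PP(L\le cTp)\le\frac12$ for $Tp\gtrsim 1$ (Chernoff suffices); the case $(C-1)T\lesssim 1$ is trivial.
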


\begin{proof}
Fix $1\leq n\leq \frac{(C-1)T}{4}+1$ and set $p=\min\crl*{\frac{C-1}{n},1}$.
We consider the following environment:
\begin{itemize}
    \item Initialize $k=1$.
    \item For $t=1,2,\cdots$: With probability $p$, set $\xt=2^k$ and set $k\leftarrow k+1$. If $k>n$, terminates (i.e., outputs $\xt=0$ afterwards). Otherwise, set $\xt=0$.
\end{itemize}
Then it is clear that the environment is $C$-smooth with measure $\mu$ given by $\mu(2^k)=\frac{p}{C}$ for $k\in[n]$ and $\mu(0)=1-\frac{np}{C}\geq \frac{1}{C}$. Further, when $\xt>0$, it must hold that $\frac{\xt^2}{1+\sum_{i\leq t} \xt[i]^2}\geq \frac12$. Therefore, we can lower bound
\begin{align*}
    \En\brk*{\sumt \frac{\xt^2}{1+\sum_{i\leq t} \xt[i]^2}}\geq \frac12\En[\min\crl*{n,L}],
\end{align*}
where $L\sim \mathrm{Binomial}(T,p)$. As long as $p\geq \frac{1}{4T}$, it is clear that there is a constant $c>0$ such that $\PP(L\leq cTp)\leq \frac12$, and hence $\En[\min\crl*{n,L}]\geq \frac{1}{4}\min\crl*{2n, cTp}$. Suitably choosing $n$ gives the desired lower bound.
\end{proof}

\subsection{Dyadic filtration is enough}\label{sec:dyadic}

\newcommand{\multiminimax}[1]{\ensuremath{\left\langle #1\right\rangle}}

\fccomment{
\begin{proof}[Proof of \cref{lem:dyadic-equivalence}]
The inequality $\cR_d^{\mathrm{dyadic}}(T) \leq \cR_d(T)$ is trivial. For the reverse, write $\cR_d(T)$ as
\begin{align}
\multiminimax{\sup_{x_t}\sup_{p_t\in\tilde{\Delta}}\En_{y_t\sim p_t}}_{t=1}^T\crl*{\sup_{\theta\in\RR^d} \sum_{t=1}^T 2y_t\tri*{\theta,x_t}- \tri*{\theta,x_t}^2}
\end{align}
where $\tilde{\Delta}$ is a set of zero-mean distributions supported on $[-1,1]$, and $\multiminimax{\cdot}_{t=1}^T$ stands for the repeated application of the operators. Introducing the tangent sequence $(y_t')$, the above expression is equal to
\begin{align}
&\multiminimax{\sup_{x_t}\sup_{p_t\in\tilde{\Delta}}\En_{y_t\sim p_t}}_{t=1}^T\crl*{\sup_{\theta\in\RR^d} \sum_{t=1}^T 2(y_t-\En_{p_t}[y_t'])\tri*{\theta,x_t}- \tri*{\theta,x_t}^2}\\
&\leq \multiminimax{\sup_{x_t}\sup_{p_t\in\tilde{\Delta}}\En_{y_t,y_t'\sim p_t}}_{t=1}^T\crl*{\sup_{\theta\in\RR^d} \sum_{t=1}^T 2(y_t-y_t')\tri*{\theta,x_t}- \tri*{\theta,x_t}^2} \\
&\leq \multiminimax{\sup_{x_t}\sup_{p_t\in\tilde{\Delta}}\En_{y_t,y_t'\sim p_t}\En_{\varepsilon_t}}_{t=1}^T\crl*{\sup_{\theta\in\RR^d} \sum_{t=1}^T 2\varepsilon_t(y_t-y_t')\tri*{\theta,x_t}- \tri*{\theta,x_t}^2} 
\end{align}
by Jensen's inequality and symmetry of the distribution of $y_t-y_t'$. Here $(\varepsilon_t)$ is a sequence of independent Radmeacher random variables. This quantity is at most
\begin{align}
&\multiminimax{\sup_{x_t}\sup_{y_t,y_t'\in[-1,1]}\En_{\varepsilon_t}}_{t=1}^T\crl*{\sup_{\theta\in\RR^d} \sum_{t=1}^T 2\varepsilon_t(y_t-y_t')\tri*{\theta,x_t}- \tri*{\theta,x_t}^2} \\
&= \multiminimax{\sup_{x_t}\sup_{\delta_t\in[-2,2]}\En_{\varepsilon_t}}_{t=1}^T\crl*{\sup_{\theta\in\RR^d} \sum_{t=1}^T 2\varepsilon_t\delta_t\tri*{\theta,x_t}- \tri*{\theta,x_t}^2}.
\end{align}
The final step is to write $\delta_t=\En z_t$ for a random variable $z_t$ supported on $\{-2,2\}$. Then Jensen's inequality leads to an upper bound of 
\begin{align}
&\multiminimax{\sup_{x_t}\sup_{\delta_t\in[-2,2]}\En_{z_t}\En_{\varepsilon_t}}_{t=1}^T\crl*{\sup_{\theta\in\RR^d} \sum_{t=1}^T 2\varepsilon_t z_t\tri*{\theta,x_t}- \tri*{\theta,x_t}^2} \\
&=\multiminimax{\sup_{x_t}\En_{\varepsilon_t}}_{t=1}^T\crl*{\sup_{\theta\in\RR^d} \sum_{t=1}^T 4\varepsilon_t z_t\tri*{\theta,x_t}- \tri*{\theta,x_t}^2} 
\end{align}
where the last equality holds because the variable $\varepsilon_t z_t$ has the same distribution as twice a Rademacher random variable, no matter what $\delta_t$ is. The last expression is now recognized as at most $2\cR_d^{\mathrm{dyadic}}(T)$.
\end{proof}
}

\begin{proof}[Proof of \cref{lem:dyadic-equivalence}]
The inequality $\cR_d^{\mathrm{dyadic}}(T)\le \cR_d(T)$ is immediate, since every dyadic martingale is
admissible in the definition of $\cR_d(T)$.

For the reverse inequality, let
\[
\Phi(s,V)\ldef \nrm{s}_{V^\dagger}^2,
\qquad s\in\RR^d,\quad V\succeq 0.
\]
Thus, for any admissible process $(X_t,Y_t)_{t=1}^T$,
\[
R_T=\Phi(S_T,\cov_T),
\qquad
S_t=\sum_{i\le t} Y_iX_i,\quad
\cov_t=\sum_{i\le t} X_iX_i^\top .
\]

Let $\tilde{\Delta}$ denote the set of all probability laws on $[-1,1]$ with mean zero. Define
recursively, for $t=T,T-1,\ldots,1$,
\[
F_{T+1}(s,V)\ldef \Phi(s,V),
\qquad
F_t(s,V)\ldef
\sup_{x\in\RR^d}\sup_{p\in\tilde{\Delta}}
\En_{Y\sim p}\!\left[F_{t+1}(s+Yx,\;V+xx^\top)\right].
\]

We first claim that
\[
\cR_d(T)\le F_1(0,0).
\]
Indeed, fix any admissible process $(X_t,Y_t)_{t=1}^T$, and we prove by backward induction on $t$ that
\[
\En\!\left[\Phi(S_T,\cov_T)\mid \mathcal G_{t-1}\right]
\le
F_t(S_{t-1},\cov_{t-1})
\qquad\text{a.s.}
\]
The case $t=T+1$ is tautological. Assuming the claim at time $t+1$, we have
\begin{align*}
\En\!\left[\Phi(S_T,\cov_T)\mid \mathcal G_{t-1}\right]
&=
\En\!\left[\En\!\left[\Phi(S_T,\cov_T)\mid \mathcal G_t\right]\middle| \mathcal G_{t-1}\right]\\
&\le
\En\!\left[F_{t+1}(S_t,\cov_t)\middle| \mathcal G_{t-1}\right]\\
&=
\En\!\left[F_{t+1}(S_{t-1}+Y_tX_t,\;\cov_{t-1}+X_tX_t^\top)\middle| \mathcal G_{t-1}\right].
\end{align*}
Conditionally on $\mathcal G_{t-1}$, the vector $X_t$ is fixed, while the conditional law of $Y_t$
belongs to $\tilde{\Delta}$ (because $|Y_t|\le 1$ a.s. and $\En[Y_t\mid \mathcal G_{t-1}]=0$). Hence the
last display is at most $F_t(S_{t-1},\cov_{t-1})$ by the definition of $F_t$. Taking expectations at
$t=1$ and then the supremum over all admissible processes yields $\cR_d(T)\le F_1(0,0)$.

Next we claim that, for every $t$ and every $V\succeq 0$, the map $s\mapsto F_t(s,V)$ is convex. This
is proved by backward induction on $t$. At time $T+1$, the claim is immediate since
\[
\Phi(s,V)=s^\top V^\dagger s
\]
and $V^\dagger\succeq 0$. If $F_{t+1}(\cdot,V')$ is convex for every $V'\succeq 0$, then for fixed
$x\in\RR^d$ and fixed $p\in\tilde{\Delta}$, the map
\[
s\mapsto \En_{Y\sim p}\!\left[F_{t+1}(s+Yx,\;V+xx^\top)\right]
\]
is convex, because translation and expectation preserve convexity. Taking the supremum over $x$ and $p$
shows that $F_t(\cdot,V)$ is convex as well.

Now define the dyadic Bellman recursion by
\[
G_{T+1}(s,V)\ldef \Phi(s,V),
\qquad
G_t(s,V)\ldef
\sup_{x\in\RR^d}\En_{\varepsilon}\!\left[G_{t+1}(s+\varepsilon x,\;V+xx^\top)\right],
\]
where $\varepsilon$ is a Rademacher random variable.

We show by backward induction that $F_t=G_t$ for all $t$. The terminal condition is clear. Assume
$F_{t+1}=G_{t+1}$. Fix $s\in\RR^d$, $V\succeq 0$, and $x\in\RR^d$, and define
\[
g_x(y)\ldef F_{t+1}(s+yx,\;V+xx^\top),
\qquad y\in[-1,1].
\]
Since $F_{t+1}(\cdot,V+xx^\top)$ is convex, the function $g_x$ is convex on $[-1,1]$. Therefore, for
every $y\in[-1,1]$,
\[
g_x(y)\le \frac{1+y}{2}g_x(1)+\frac{1-y}{2}g_x(-1).
\]
Taking expectation with respect to any $p\in\tilde{\Delta}$ and using $\En_{Y\sim p}[Y]=0$, we obtain
\[
\En_{Y\sim p}[g_x(Y)]
\le
\frac12 g_x(1)+\frac12 g_x(-1).
\]
Taking the supremum over $p\in\tilde{\Delta}$ and then over $x\in\RR^d$ gives
\[
F_t(s,V)
\le
\sup_{x\in\RR^d}
\frac{F_{t+1}(s+x,V+xx^\top)+F_{t+1}(s-x,V+xx^\top)}{2}
=
G_t(s,V).
\]
The reverse inequality is immediate, since the symmetric Rademacher law
$\frac12\delta_{-1}+\frac12\delta_{1}$ belongs to $\tilde{\Delta}$. Hence $F_t=G_t$ for all $t$.

Finally, let $H_t(s,V)$ denote the supremum over all dyadic trees from rounds $t,\ldots,T$ of the
expected terminal payoff starting from state $(s,V)$. Then $H_{T+1}=\Phi$, and $H_t$ satisfies the same
recursion as $G_t$:
\[
H_t(s,V)=
\sup_{x\in\RR^d}\En_{\varepsilon}\!\left[H_{t+1}(s+\varepsilon x,\;V+xx^\top)\right].
\]
Therefore $H_t=G_t$ for all $t$, and in particular
\[
G_1(0,0)=H_1(0,0)=\cR_d^{\mathrm{dyadic}}(T).
\]

Combining the previous steps, we conclude that
\[
\cR_d(T)\le F_1(0,0)=G_1(0,0)=\cR_d^{\mathrm{dyadic}}(T).
\]
Together with the trivial inequality $\cR_d^{\mathrm{dyadic}}(T)\le \cR_d(T)$, this yields $\cR_d^{\mathrm{dyadic}}(T)=\cR_d(T)$.
\end{proof}

\section{Proofs from \Cref{sec:selfnorm-via-regret}}

\subsection{Proof of \Cref{thm:selfnorm-smooth}}
By \Cref{lem:exp-supermg} and \eqref{eq:regret-decomp-sec}, \jqedit{taking $\Gamma$ in \eqref{eq:regret-decomp-sec} to $0$, we have} with probability at least $1-\delta/2$,
\begin{equation}\label{eq:selfnorm-smooth-step1}
    \nrm{S_T}_{\cov_T^\dagger}^2
    \le
    \Reg_T(\hat y) + 2\sigma^2\log(2/\delta).
\end{equation}

On the other hand, applying \jqedit{\Cref{prop:VAW,prop:smooth-VAW}}
to the VAW predictor with confidence
level $\delta/2$ gives, with probability at least $1-\delta/2$,
\begin{equation}\label{eq:selfnorm-smooth-step2}
    \Reg_T(\hat y)
    \lesssim
\sigma^2\Bigl(\sqrt{d\,\Ccov\,T\,\log(2T/\delta)}+\log(2/\delta)\Bigr).
\end{equation}
By the union bound, with probability at least $1-\delta$ both \eqref{eq:selfnorm-smooth-step1} and
\eqref{eq:selfnorm-smooth-step2} hold.\jmlrQED

\end{document}